\documentclass{article} %
\usepackage{packages/iclr2027/iclr2027_conference}
\usepackage{times}

\usepackage{hyperref}       %
\usepackage{url}            %
\usepackage{booktabs}       %
\usepackage{amsfonts}       %
\usepackage{nicefrac}       %
\usepackage{microtype}      %
\usepackage[x11names]{xcolor}          %
\usepackage{wrapfig}
\usepackage{amsmath}
\usepackage{amssymb}
\usepackage{mathtools}
\usepackage{amsthm}
\usepackage{bm}
\usepackage[capitalize,noabbrev]{cleveref}
\usepackage[edges]{forest}
\usepackage{xfrac}
\usepackage{multicol}
\usepackage{makecell}

\theoremstyle{plain}
\newtheorem{theorem}{Theorem}[section]
\newtheorem{proposition}[theorem]{Proposition}

\theoremstyle{definition}
\newtheorem{definition}[theorem]{Definition}

\theoremstyle{remark}

\usepackage{xparse} %

\usepackage{fontspec}
\usepackage{soul}
\usepackage{multirow}
\usepackage{algorithm}
\usepackage{algpseudocode}
\usepackage{cancel}

\usepackage[skip=4pt,compatibility=false]{caption}
\usepackage{subcaption}
\usepackage{enumitem}

\newcommand{\printbibliography}{%
  \bibliographystyle{packages/iclr2027/iclr2027_conference}%
  \bibliography{main}
}

\usepackage{tcolorbox}
\tcbuselibrary{breakable, listings, skins}

\definecolor{beige}{RGB}{245,245,220}

\tcbset{
  mybeigestyle/.style={
    colback=beige,
    colframe=gray,
    coltext=black,
    boxrule=0.5mm,
    arc=3mm,
    breakable,
    enhanced, %
    fonttitle=\bfseries
  }
}

\newtcblisting[auto counter]{beigebox}[2][]{
  mybeigestyle,
  listing only,
  title={Snippet \thetcbcounter: #2},
    listing options={
    basicstyle=\ttfamily\small,
    breaklines=true,
    columns=fullflexible
  },
  #1
}

\newtcbinputlisting[auto counter]{\beigefile}[3][]{
  mybeigestyle,
  listing file={#3},
  listing only,
  title={Template \thetcbcounter: #2},
  listing options={
    basicstyle=\ttfamily\small,
    breaklines=true,
    columns=fullflexible
  },
  #1
}

\newcommand{\E}{\mathbb{E}}

\author{%
  T. Duy Nguyen-Hien\textsuperscript{1} \quad
  Yee Whye Teh\textsuperscript{2} \quad
  Wee Sun Lee\textsuperscript{1} \quad
  Tan Zhi-Xuan\textsuperscript{1,3} \\[0.5em]
  \textsuperscript{1}Department of Computer Science, National University of Singapore \\
  \textsuperscript{2}Department of Statistics, University of Oxford \\
  \textsuperscript{3}Agency for Science, Technology and Research (A*STAR)
  \\[0.3em]
  \texttt{duynht@u.nus.edu} \quad
  \texttt{y.w.teh@stats.ox.ac.uk} \\
  \texttt{dcsleews@nus.edu.sg} \quad
  \texttt{xuan.cs@nus.edu.sg}
}

\newcommand{\finalonly}[1]{%
  \ifdefined\iclrfinalcopy
    #1%
  \fi
}

\newcommand{\finalifelse}[2]{%
  \ifdefined\iclrfinalcopy
    #1%
  \else
    #2%
  \fi
}

\iclrfinalcopy %

\title{Rational Clarification by Assistive Agents via Value-of-Information Reasoning}

\begin{document}
\maketitle

\begin{abstract}
Users of language-based assistive agents often make ambiguous requests. In response, an assistant can either directly act on its interpretation of the request --- risking misalignment with the user --- or ask a clarifying question. Which option is the most safe and helpful? A common approach is to ask questions that minimize uncertainty about the user's intent until a threshold is reached. However, this neglects the impact of uncertainty reduction on downstream performance, the costs of asking versus acting immediately, and the possibility that users may provide corrections without being asked. To navigate these trade-offs, we introduce \emph{Rational Enquiry via Value-of-Information Reasoning} (REVOIR). REVOIR makes clarification decisions via inference-time reasoning about the \emph{value-of-information} of a question, which captures the expected improvement in task reward due to the answer received. In two assistive tasks --- ambiguous question answering (CondAmbigQA) and preference-aligned household task planning (ADAPT) --- we show that REVOIR achieves greater success with fewer questions than approaches based on prompting, chain-of-thought, fine-tuning, or information gain, improving preference satisfaction on ADAPT by $13$--$15\%$ over a fine-tuned clarification policy while requiring no training and asking five times fewer questions. Furthermore, when the assistant can receive cheap user corrections after acting, REVOIR naturally infers that asking questions is not always efficient, demonstrating the adaptivity of our approach. In contrast, we find that vanilla reasoning agents fail to adaptively clarify user requests, and request \emph{fewer} clarifications as reasoning effort increases.\looseness=-1

\end{abstract}
\section{Introduction}

\begin{figure}[t]
    \centering
    \includegraphics[width=\textwidth]{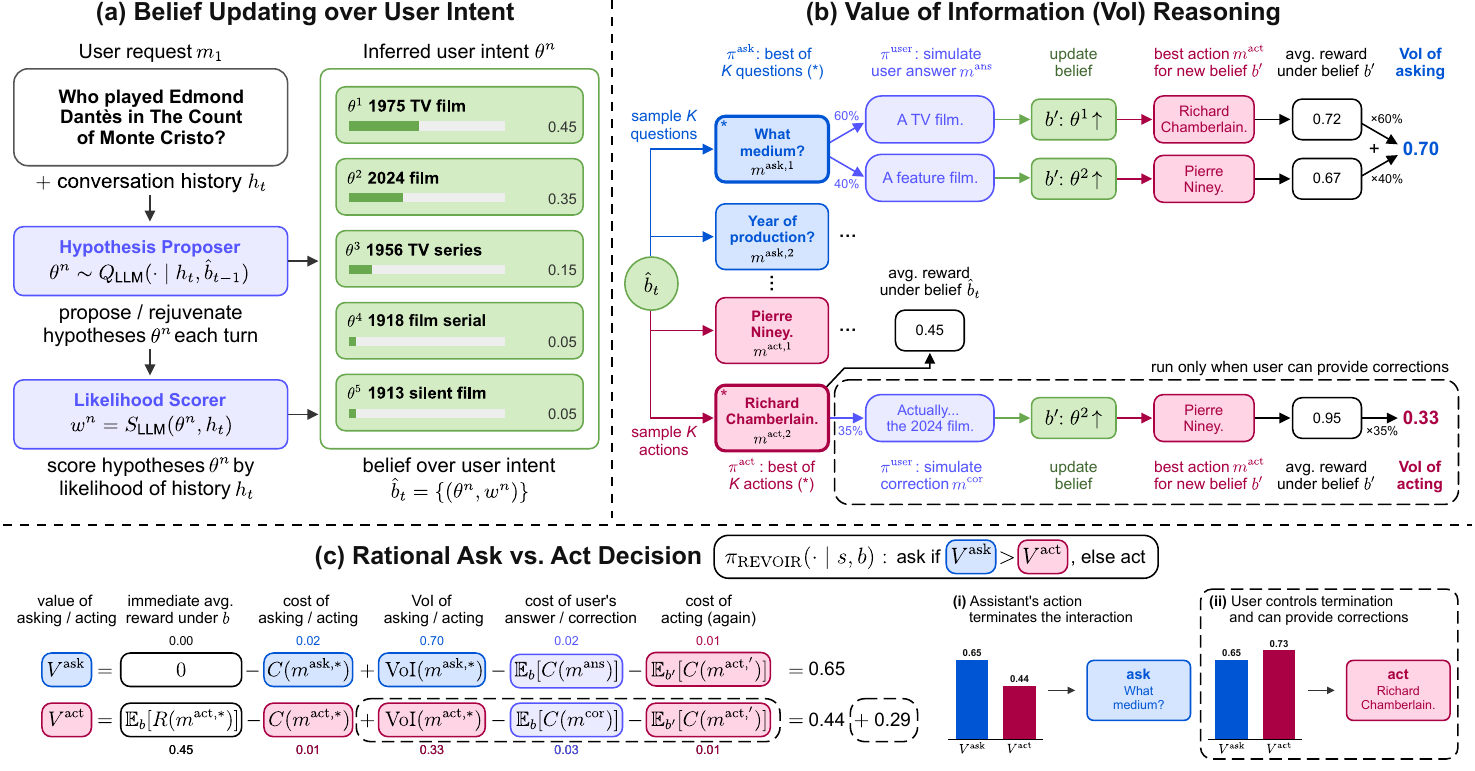}
    \caption{\textbf{Overview of REVOIR.} \textbf{(a)} Given an ambiguous user request, REVOIR infers and updates a belief over the user's intent $\theta$. \textbf{(b)} REVOIR computes the value-of-information (VoI) of asking (vs. acting) by simulating the expected benefit of acting after a clarifying question (vs. a user correction). \textbf{(c)} REVOIR decides between asking or acting by maximizing the cost-adjusted value of asking $V^\mathrm{ask}$ vs. acting $V^\mathrm{act}$, rationally adapting to cases where: (i) the assistant's action is terminal; (ii) the user can give corrections.}
    \label{fig:overview}
\end{figure}

People often express their intentions to AI agents in under-specified ways.
Faced with such ambiguity, how should an assistive agent respond? Agents that directly execute the request might misinterpret the user, resulting in misaligned actions with potentially irreversible consequences~\citep{ming2025replit}. Instead, an agent might ask clarifying questions to determine the right interpretation, thereby avoiding costly mistakes. However, asking questions may not always be necessary. If the assistant is certain enough about the user's intent, or if the user can easily correct the assistant afterwards, it may be best to act on the request directly.

In this paper, we take a \emph{rational} approach to addressing these trade-offs in assistive agents based on large language models (LLMs): 
\begin{enumerate}[leftmargin=*,itemsep=1pt, topsep=0pt]
    \item We formulate clarificatory assistance as a cooperative game \citep{hadfield-menell_cooperative_2016}, where the assistant starts off uncertain about the user's intent, but can learn by asking clarifying questions or (possibly) receiving corrections after acting upon the user's request --- a form of learning neglected by past work on multi-turn clarification.
    \item To solve these assistance problems, we introduce \textbf{Rational Enquiry via Value-of-Information Reasoning} (\textbf{REVOIR}, Figure \ref{fig:overview}), a decision-theoretic assistant that (a) updates its beliefs about the user's intentions in natural language; (b) estimates the 
    \emph{value-of-information} (VoI) \citep{howard_information_1966,raiffa_applied_1961} of clarificatory actions --- i.e., the expected improvement in task reward due to the answer received; (c) uses this to make rational decisions about asking questions vs. acting immediately.
\end{enumerate}

We evaluate REVOIR across ambiguous question answering (CondAmbigQA, \citealp{li_condambigqa_2025}) and preference-aligned household task planning (ADAPT, \citealp{patel_adapt_2025}). We compare REVOIR against prompting baselines, generic inference-time reasoning, fine-tuning for clarifying questions, and using expected information gain, and find that our approach achieves greater task success and preference alignment with fewer questions. On ADAPT, REVOIR reaches preference satisfaction rates of $56$--$59\%$ using only in-context examples and no training, exceeding a clarification policy fine-tuned for the task ($44\%$) by $13$--$15\%$ while asking roughly $5\times$ fewer questions, and surpassing even an ``always ask'' baseline by around $6\%$. REVOIR also adapts rationally to a variant of the QA task where users can provide post-hoc corrections --- a setting ignored by prior clarification methods \citep{andukuri_stargate_2024,zhang_modeling_2025} --- successfully inferring that asking a question often provides no benefit beyond acting now and being corrected later. In contrast, generic reasoners (ReAct + reasoning-trained LLMs) fall short of REVOIR at reasoning about whether further clarifications are helpful, and indeed generally engage in \emph{less} clarifications as reasoning effort is increased.

\textbf{Related Work.} REVOIR is distinct from prior work in several ways. Unlike generic inference-time reasoning (e.g. chain-of-thought  \citealp{wei_chainofthought_2022,deng_prompting_2023}, or ReAct \citealp{yao_react_2022}), which lack a normative standard for clarification decisions, REVOIR builds upon the theory of assistance games \citep{hadfield-menell_cooperative_2016,zhi-xuan_pragmatic_2024a,ma_openuniverse_2025,laidlaw_assistancezero_2025}, showing how rational assistance under uncertainty can be applied to LLM agents while remaining tractable. Finetuning methods optimize for clarification by maximizing the quality of subsequent assistant responses \citep{andukuri_stargate_2024,zhang_modeling_2025,patel_adapt_2025,wu_collabllm_2025,chi_clarinet_2024}, but typically assume fixed user costs, and that conversations terminate after the assistant commits to an answer \citep{andukuri_stargate_2024,zhang_modeling_2025}. In comparison, REVOIR can accommodate user-specified costs for questions, and naturally adapts to interactions where users can choose whether to accept the assistant's response or provide further corrections. Finally, our approach optimizes the cost-adjusted VoI of a question rather than its expected information gain (EIG) \citep{lindley_measure_1956,mackay_information_1992}, a metric often used to select uncertainty-reducing questions until a fixed budget or confidence threshold is reached \citep{hu_uncertainty_2024,handa_bayesian_2024,piriyakulkij_asking_2023,grand_shoot_2025}. Apart from neglecting the variable costs of questions, EIG is insensitive to downstream task reward, and may score questions highly even if they do not provide task-relevant information.
\section{Clarificatory Assistance as a Cooperative Game}
\label{sec:assistance-game}
How should an assistive agent rationally decide whether to clarify user requests? REVOIR builds on the framework of \emph{assistance games} \citep{hadfield-menell_cooperative_2016}, which formalizes how a user-aligned agent should provide assistance under uncertainty about what the user wants. 
We adapt this framework to the setting of LLM-based assistive agents, which interact with a user via language while (optionally) taking actions in an external environment:
\vspace{2pt}

\begin{definition}
A \emph{conversational assistance game} is a tuple $\langle \mathcal{S}, \{\mathcal{A}^{\mathrm{user}}, \mathcal{A}^{\mathrm{agent}}\}, T, \Theta, R, P_0 \rangle$. Each element has the following definitions:

\textbf{States.} The state space factorizes as $\mathcal{S} = \mathcal{W} \times \mathcal{H}$, where $w \in \mathcal{W}$ is a state of the external world (e.g., a physical room or code-base), and $h_t \in \mathcal{H}$ is a conversational history $h_t = (m_1, m_2, \dots, m_t)$ composed of alternating user and agent messages $m_t$. A subset of $\mathcal{S}^{\mathrm{term}} \subseteq \mathcal{S}$ are terminal states, depending on whether the final message $m_t$ ends the conversation.

\textbf{Actions.} $\mathcal{A}^{\mathrm{user}}$ and $\mathcal{A}^{\mathrm{agent}}$ are the user and agent action spaces. Each action space $\mathcal{A}^i = \mathcal{X}^i \cup \mathcal{M}$ decomposes into external actions $\mathcal{X}^i$ and messages $\mathcal{M}$. We further assume that $\mathcal{M}$ can be categorized into user requests $\mathcal{M}^{\mathrm{req}}$, agent questions $\mathcal{M}^{\mathrm{ask}}$, user answers $\mathcal{M}^{\mathrm{ans}}$, verbal actions $\mathcal{M}^{\mathrm{act}}$, user corrections $\mathcal{M}^{\mathrm{cor}}$, and end-of-conversation messages $\mathcal{M}^{\mathrm{end}}$. 

\textbf{Transitions.} The transition function $T(s_t \mid s_{t-1}, a^{_\mathrm{user}}_t, a^{_\mathrm{agent}}_t)$ factorizes into world and conversation transitions. We assume players take turns acting, so $a^{_\mathrm{user}}_t = \bot$ when $a^{_\mathrm{agent}}_t \neq \bot$ and vice versa; let $a_t$ denote the acting player's action. External actions ($a_t \in \mathcal{X}$) update the world via $T^{\mathrm{world}}(w_t \mid w_{t-1}, a_t)$ while leaving the conversational history unchanged.

\textbf{User Intents.} $\Theta$ is the space of user intents $\theta \in \Theta$, which we represent in natural language. Crucially, $\theta$ is initially \emph{unknown} to the agent, and is not fully revealed by the user's initial action. An intent $\theta$ determines the goal reward associated with a terminal state $s \in \mathcal{S}^{\mathrm{term}}$.

\textbf{Rewards and Costs.} The user's reward function, shared by the agent, decomposes as $R(s_t, a^{\mathrm{user}}_t, a^{\mathrm{agent}}_t, \theta) := G(s_t, \theta) \;-\; C(a^{\mathrm{user}}_t, a^{\mathrm{agent}}_t)$. Here, $G(s_t, \theta)$ is a (weakly) positive \emph{goal reward} dependent on the hidden intent $\theta$, realized only when $s_t \in \mathcal{S}^{\mathrm{term}}$ is terminal.  Unlike clarification tasks with multiple choice answers~\citep{li_mediq_2024,hu_uncertainty_2024}, $G(s_t, \theta)$ can vary continuously, so maximizing reward does not reduce to discovering the user's intent $\theta$. $C(a^{\mathrm{user}}_t, a^{\mathrm{agent}}_t)$ is a user-configurable \emph{cost function} known to the agent, capturing the costs to the user of reading questions, writing answers, or receiving the wrong assistive action.

\textbf{Gameplay.} The user's intent $\theta \sim P_0(\theta)$ and initial world state $w_0 \sim P_0(w_0)$ are drawn from their priors. Starting from the empty conversation $h_0 = ()$, the user and agent take turns according to their policies $\pi^{\mathrm{user}}(a^{\mathrm{user}}_t | s_{t-1}, \theta)$ and $\pi^{\mathrm{agent}}(a^{\mathrm{agent}}_t | s_{t-1})$, with the user going first. The game ends when a player sends a terminal message $m_t \in \mathcal{M}^{\mathrm{end}}$.
\end{definition}

\textbf{Termination Control.} Prior work on clarification generally assumes that the interaction ends once the assistant takes an action $m^{\mathrm{act}}$ (e.g. answering the user's  query)~\citep{andukuri_stargate_2024,zhang_modeling_2025}. Our formulation instead allows termination to be controlled by \emph{either} the assistant or the user. In one case, assistant actions $m^{\mathrm{act}}$ are terminal. In the other case, users can decide between providing corrections $m^{\mathrm{cor}}$ or ending the conversation with a message $m^{\mathrm{end}}$ (see \ref{sec:condambigqa-exp}), capturing a wide range of more natural interactions.

\textbf{Modeling the User.} To solve an assistance game from the agent's perspective, we fix a user policy $\pi^{\mathrm{user}}(a^{\mathrm{user}}_t \mid h_t, \theta)$. This reduces the game to an \emph{assistive partially observable Markov decision process} (A-POMDP) \citep{hadfield-menell_cooperative_2016,laidlaw_assistancezero_2025}. Solving this POMDP amounts to optimal assistance with respect to the user model $\pi^{\mathrm{user}}$.

For our assistant to be helpful to \emph{actual} users, its user model needs to be reasonably realistic. We thus make the following assumptions suited for the conversational assistance context:

\begin{itemize}[leftmargin=*,itemsep=1pt,topsep=0pt]
    \item The user always initiates the conversation with a request message $m_1 \in \mathcal{M}^{\mathrm{req}}$.
    \item In response to a question $m_{t-1} \in \mathcal{M}^{\mathrm{ask}}$, the user replies with an answer $m_t \in \mathcal{M}^{\mathrm{ans}}$.
    \item When the user has termination control, after the assistant acts via $m_{t-1} \in \mathcal{M}^{\mathrm{act}}$, the user can choose to issue a correction $m_t \in \mathcal{M}^{\mathrm{cor}}$ or end the conversation $m_t \in \mathcal{M}^{\mathrm{end}}$.
\end{itemize}

We instantiate this in our experiments by prompting an LLM to simulate a human user, providing it with the true or hypothesized user intent $\theta$.
For increased realism, we do not give the assistant a true model of the user; instead, it has an \emph{internal} user model that differs from the \emph{external} user simulator in our benchmark environments. 
In domains with external actions (e.g., ADAPT), we also assume that users delegate all such actions to the agent, leaving dual-control environments to future work \citep{barres_tau2_2025}.

\section{Rational Enquiry via Value-of-Information Reasoning}
\label{sec:voi-reasoning}

In principle, solving the assistive POMDP in Section~\ref{sec:assistance-game} yields optimal clarification behavior. However, exact POMDP planning is intractable, and existing solvers for POMDPs \citep{kurniawati_sarsop_2008,silver_pomcp_2010,somani_despot_2013} and assistance games \citep{malik_efficient_2018,laidlaw_assistancezero_2025} cannot be applied to natural language interactions and goal spaces. We therefore propose \textbf{Rational Enquiry via Value-of-Information Reasoning} (\textbf{REVOIR}, Figure \ref{fig:overview}), an inference-time algorithm that provides rational assistance with limited lookahead. REVOIR is a hierarchical control policy that decides between (i) an \emph{ask} sub-policy $\pi^{\mathrm{ask}}$ that produces a clarifying question $m^{\mathrm{ask}} \in \mathcal{M}^{\mathrm{ask}}$; (ii) an \emph{act} sub-policy $\pi^{\mathrm{act}}$ that produces an assistive action $a^{\mathrm{act}} \in \mathcal{A}^{\mathrm{act}} = \mathcal{X} \cup \mathcal{M}^{\mathrm{act}}$. At each turn $t$, REVOIR:
\begin{enumerate}[leftmargin=*,itemsep=1pt,topsep=0pt]
    \item Updates its belief $b_t(\theta)$ over the user's intent given the conversational history $h_t$;
    \item Evaluates the VoI of $\pi^{\mathrm{ask}}$ under belief $b_t$, and (if corrections are possible) the VoI of $\pi^{\mathrm{act}}$;
    \item Decides between asking with $\pi^{\mathrm{ask}}$ vs. acting with $\pi^{\mathrm{act}}$ based on their cost-adjusted values.
\end{enumerate}
We unpack each step of REVOIR below.

\subsection{Belief Updating over User Intent}
\label{sec:belief-updating}

To estimate how much decision quality improves with new information, the agent needs to both update and simulate future changes in its beliefs about rewarding outcomes, which in turn depend on its belief $b_t$ about the user's intent $\theta$. This belief is given via Bayes rule as:
\begin{equation}
b_t(\theta) \propto b_0(\theta) \textstyle\prod_{\tau=1}^t \pi^{\mathrm{user}}(a^{\mathrm{user}}_\tau \mid s_{\tau-1}, \theta) = b_{t-1}(\theta) \cdot \pi^{\mathrm{user}}(a^{\mathrm{user}}_t \mid s_{t-1}, \theta),
\end{equation}
where $b_0(\theta) := P_0(\theta)$ is the agent's prior over intents, and $\pi^{\mathrm{user}}$ is the user model.

Representing and updating $b_t(\theta)$ is challenging in practice: The space of (language-represented) intents $\Theta$ is intractable to enumerate over, and we may not have trustworthy models of the prior $b_0(\theta)$ or robust estimates of the likelihood $\pi^{\mathrm{user}}(a^{\mathrm{user}}_t \mid s_{t-1}, \theta)$
\footnote{LLM log-probabilities can be used to score the prior and the likelihood, since $\theta$ and $a^{\mathrm{user}}$ are in language, but these are often sensitive to semantically-irrelevant features like typos or text length.}. We address this by approximating $b_t$ with a weighted particle set $\hat b_t = \{(\theta^n, w^n)\}_{n=1}^N$, where:
\begin{itemize}[leftmargin=*,itemsep=1pt,topsep=0pt]
    \item Intent hypotheses $\theta^n$ are proposed by an LLM $Q_{\textrm{LLM}}(\theta | h_t, \hat{b}_{t-1})$ given the history $h_t$ and past belief $\hat{b}_{t-1}$ (if available), with the option of retaining or pruning particles in $\hat{b}_{t-1}$;
    \item Each $\theta^n$ is assigned a weight $w^n = S_{\textrm{LLM}}(\theta^n, h_t)$ by an LLM that verbally scores the consistency of $\theta^n$ with  $h_t$, serving as a ``semantic'' log-likelihood of conversation $h_t$.
\end{itemize}
This scheme can be viewed as a form of variational particle approximation \citep{saeedi_variational_2017,afshar_optimal_2024}, which approximates the full posterior belief by weighting a small set of particles by their (log) probabilities. In our experiments, we explore several variants that improve performance, including incremental updating of $w^n$ as each user message $a^{\mathrm{user}}_t$ is received, and factorizing $\theta$ into independently-updated components (see Appendix \ref{sec:adapt_details}).

\subsection{Evaluating the Value of Information (VoI)}
\label{sec:voi-evaluation}

Assessing the value of a question involves determining how it will improve later decisions. When a question is answered, this updates the belief $b$ and changes the expected reward of addressing the user's request with an action $a^{\mathrm{act}}$. This reward is computed as follows.

\textbf{Expected Goal Reward.} The expected goal reward of \textit{$a^{\mathrm{act}} \in \mathcal{A}^{\mathrm{act}}$} under belief $b$ is:
\begin{equation}
\bar{G}(s, b, a^{\mathrm{act}}) = \mathbb{E}_{\theta \sim b, s' \sim T(\cdot \mid s, a^{\mathrm{act}})} \big[ G(s', \theta) \big],
\label{eq:expected-goal-reward}
\end{equation}
where $s'$ is the state reached by executing $a^{\mathrm{act}}$ from $s$. In practice, the true goal reward $G$ is not available to the agent. Instead we use an internal LLM-based reward model $\hat G_{\textrm{LLM}}(s', \theta)$ to score $a^{\mathrm{act}}$ against each intent hypothesis $\theta^{k}$, and estimate $\bar G$ as the weighted average.

In REVOIR, we need to select between policy branches, not just actions. As such, we also need to compute the (cost-adjusted, immediate) reward of the \emph{act} policy $\pi^{\mathrm{act}}$:

\textbf{Net Reward of Act Policy.} The net immediate reward of an \emph{act} policy $\pi^{\mathrm{act}}(\cdot \mid s, b)$ is:
\begin{equation}
R^{\mathrm{act}}(s, b) = \mathbb{E}_{a^{\mathrm{act}} \sim \pi^{\mathrm{act}}(\cdot \mid s, b)} \big[ \bar{G}(s, b, a^{\mathrm{act}}) - C(a^{\mathrm{act}}) \big]
\label{eq:net-act-reward}
\end{equation}
For improved action selection, we use a best-of-$K$ policy for $\pi^{\mathrm{act}}$, proposing $K$ action candidates $\{a^{\mathrm{act},k}\}_{k=1}^K \sim \mathrm{LLM}(\cdot \mid s, b)$ from an LLM and selecting the best. In this case, the net reward of $\pi^{\mathrm{act}}$ reduces to the maximum cost-adjusted reward over $K$ proposed actions. 

In some settings, the assistant's action is terminal and provides no information. However, when the user controls termination and may follow an action $a^{\mathrm{act}}$ with a correction $m^{\mathrm{cor}} \in \mathcal{M}^{\mathrm{cor}}$, acting itself can result in valuable information:

\textbf{Value of Information for an Action.} The (one-step) value of information of an action $a^{\mathrm{act}} \in \mathcal{A}^{\mathrm{act}}$ is the expected net reward of acting again after a user's correction $m^{\mathrm{cor}}$:
\begin{equation}
\mathrm{VoI}(s, b, a^{\mathrm{act}}) = \mathbb{E}_{m^{\mathrm{cor}}} \big[R^{\mathrm{act}}(s', b') \big]
\label{eq:voi-act}
\end{equation}
Here, $s' \sim T(\cdot \mid (h_t, a^{\mathrm{act}}), m^{\mathrm{cor}})$ and $b'(\theta) \propto b(\theta)\, \pi^{\mathrm{user}}(m^{\mathrm{cor}} \mid (h_t, a^{\mathrm{act}}), \theta)$ are the updated state and belief after receiving $m^{\mathrm{cor}}$. The expectation is taken by sampling a user intent $\theta \sim b(\cdot)$ from $b$, simulating the user's correction $m^{\mathrm{cor}} \sim \pi^{\mathrm{user}}(\cdot \mid (h_t, a^{\mathrm{act}}), \theta)$ to the action $a^{\mathrm{act}}$, then updating the state $s'$ and belief $b'$. Note that $m^{\mathrm{cor}}$ can be a user message that accepts $a^{\mathrm{act}}$ and ends the conversation, s.t. $R^{\mathrm{act}}(s', b') = 0$.

With this, we can define the \emph{value function} (i.e. cumulative reward) for the \emph{act} policy $\pi^{\mathrm{act}}$, combining the net immediate reward with the expected future reward after a correction:

\textbf{Value of Act Policy.} The (one-step) cumulative reward of an \emph{act} policy $\pi^{\mathrm{act}}(\cdot \mid s, b)$ is:
\begin{equation}
V^{\mathrm{act}}(s, b) = \mathbb{E}_{a^{\mathrm{act}}, m^{\mathrm{cor}}} \big[ \mathrm{VoI}(s, b, a^{\mathrm{act}}) - C(m^{\mathrm{cor}}) \big] + R^{\mathrm{act}}(s, b)
\label{eq:value-of-acting}
\end{equation}
where $a^{\mathrm{act}} \sim \pi^{\mathrm{act}}(\cdot \mid s, b)$ and $m^{\mathrm{cor}}$ is the user's correction to $a^{\mathrm{act}}$. Note that $R^{\mathrm{act}}(s, b)$ is an expectation that accounts for the possibility that the user does not accept the action $a^{\mathrm{act}}$, and that $V^{\mathrm{act}}(s, b)$ reduces to $R^{\mathrm{act}}(s, b)$ when assistant actions $a^{\mathrm{act}}$ are terminal.

Having defined the value of acting, we can now quantify the VoI gained from asking a question, revising beliefs based on the answer, and acting based on the revised beliefs.

\textbf{Value of Information for a Question.} The (one-step) value of information of a question $m^{\mathrm{ask}} \in \mathcal{M}^{\mathrm{ask}}$ is the expected cumulative reward of acting once the answer $m^{\mathrm{ans}}$ is received:
\begin{equation}
\mathrm{VoI}(s, b, m^{\mathrm{ask}}) = \mathbb{E}_{m^{\mathrm{ans}}} \big[V^{\mathrm{act}}(s', b') \big]
\label{eq:voi-ask}
\end{equation}
Similar to above, the expectation is taken by sampling an intent $\theta \sim b(\cdot)$ from $b$, simulating the user's answer $m^{\mathrm{ans}} \sim \pi^{\mathrm{user}}(\cdot \mid (h_t, m^{\mathrm{ask}}), \theta)$ to the question $m^{\mathrm{ask}}$, then updating both $s'$ and $b'$. In Appendix~\ref{sec:voa}, we show that as long as we maximize over enough samples when acting, the VoI of a question is never worse than the net reward of acting immediately.

Finally, we compute the value function of the \emph{ask} policy $\pi^{\mathrm{ask}}$ as its (cost-adjusted) expected future reward, assuming that the agent acts immediately after receiving an answer:

\textbf{Value of Ask Policy.} The (one-step) cumulative reward of an \emph{ask} policy $\pi^{\mathrm{ask}}(\cdot \mid s, b)$ is:
\begin{equation}
V^{\mathrm{ask}}(s, b) = \mathbb{E}_{m^{\mathrm{ask}}, m^{\mathrm{ans}}} \big[ \mathrm{VoI}(s, b, m^{\mathrm{ask}}) - C(m^{\mathrm{ans}}) - C(m^{\mathrm{ask}}) \big]
\label{eq:value-of-asking}
\end{equation}
where $m^{\mathrm{ask}} \sim \pi^{\mathrm{ask}}(\cdot \mid s, b)$ and $m^{\mathrm{ans}}$ is the user's simulated answer to $m^{\mathrm{ask}}$. For improved performance, we use a best-of-$K$ policy for $\pi^{\mathrm{ask}}$, maximizing over $K$ questions sampled from an LLM prompted with a summary of the current belief $b$.

\subsection{Deciding Between Asking and Acting}
\label{sec:decision}
\enlargethispage{2\baselineskip}
With the value functions for $\pi^{\mathrm{ask}}$ and $\pi^{\mathrm{act}}$ defined, REVOIR asks questions if and only if $V^{\mathrm{ask}}$ (the cost-adjusted VoI of asking) is higher than the value $V^{\mathrm{act}}$ of acting immediately:
\begin{equation}
\pi^{\mathrm{REVOIR}}(\cdot \mid s, b) = 
\begin{cases}
\pi^{\mathrm{ask}}(\cdot \mid s, b), & V^{\mathrm{ask}}(s, b) > V^{\mathrm{act}}(s, b), \\
\pi^{\mathrm{act}}(\cdot \mid s, b), & \text{otherwise}.
\end{cases}
\end{equation}
We note that in practice, $V^{\mathrm{ask}}(s, b)$ and $V^{\mathrm{act}}(s, b)$ are expectations that we can only effectively estimate via sampling. With best-of-$K$ policies, this amounts to estimating $V^{\mathrm{ask}}(s, b)$ and $V^{\mathrm{act}}(s, b)$ with the value of the best question $m^{\mathrm{ask}}$ or action $m^{\mathrm{act}}$ among the $K$ samples.

\section{Experiments}

\subsection{Ambiguous Question Answering (CondAmbigQA)}\label{sec:condambigqa-exp}

Our first domain is ambiguous question-answering. We use CondAmbigQA~\citep{li_condambigqa_2025}, a dataset of 2,000 ambiguous queries drawn from AmbigNQ~\citep{min_ambigqa_2020}. Every query in the dataset is paired with a set of \emph{conditions}, each representing a possible latent intent $\theta \in \Theta$ behind the ambiguous query. The assistant's goal is to produce a final answer $m^{\mathrm{act}}$ that is semantically similar to the ground-truth answer $y_\theta$ to the user's true intent $\theta$.

\textbf{Benchmark Configuration.} Assistants interact with an LLM-based user simulator $\pi^{\mathrm{user}}$ which provides clarifications or accepts/corrects answers based on the user's true intent $\theta$. Following \citet{li_condambigqa_2025}, we evaluate the assistant's final answer $m^{\mathrm{act}}$ against the ground-truth answer $y_\theta$ using a rubric-based $\mathtt{AnswerCorrectness}(m^{\mathrm{act}}, y_\theta)$ metric with the G-Eval harness~\citep{liu_geval_2023}. This metric lies in $[0, 1]$, and uses an LLM judge (GPT-4o-mini) to inspect $m^{\mathrm{act}}$ for contradictions, omissions, relevance, and depth. We also record the number of clarifications (questions + corrections) as a measure of clarification \texttt{Effort}. We validate both the user simulator and $\mathtt{AnswerCorrectness}$ metric with human annotators (Appendix \ref{sec:pilot-human-replay}), finding high levels of agreement w.r.t. answer acceptance (72 out of 89) and answer quality (41 out of 57). We also test for robustness against an ``inattentive'' user simulator in Appendix \ref{sec:condambigqa_inatt_user}. Further details and validations are in Appendix \ref{sec:condambigqa_details}.

\textbf{Termination Variants.}
We instantiate two benchmark variants: With \emph{agent termination}, the assistant's $m^{\mathrm{act}}$ is terminal. With \emph{user termination}, the user corrects $m^{\mathrm{act}}$ if it fails to address their intent $\theta$; otherwise the user accepts the answer, ending the conversation.

\textbf{REVOIR.} REVOIR optimizes a proxy goal reward $\hat{G}_{\mathrm{LLM}}(m^{\mathrm{act}}, \hat \theta) \in [0, 1]$ which evaluates $m^{\mathrm{act}}$ against a hypothesized intent $\hat \theta$ similarly to $\mathtt{AnswerCorrectness}$, but \emph{without} knowing the true answer $y_\theta$ or using the G-Eval harness. We use a word-budgeted cost function $C(m^{\mathrm{user}}, m^{\mathrm{agent}})$ which assigns a cost of 0 to a (user/agent) message $m$ when its word count $|m|$ is less than the respective budget $B_{\mathrm{user/agent}}$, then scales linearly to $1$ as $|m|$ grows to twice the budget. The internal user model (distinct from the benchmark's user simulator), proxy goal reward, hypothesis proposer ($N$=$5$), hypothesis scorer, question generator ($K$=$5$) and answer generator ($K$=$1$) all use the same LLM backbone. See Appendix~\ref{sec:revoir-condambigqa} for details.

\textbf{Baselines.} We compare REVOIR against: \textbf{REIGN} (Rational Enquiry via Information-Gain Net-utility): A REVOIR variant configured to maximize expected information gain (EIG)~\citep{lindley_measure_1956, mackay_information_1992} instead of VoI (see Appendix~\ref{sec:eig}), balanced against message costs; \textbf{Direct Answer}: zero-shot, no clarification; \textbf{ReAct}~\citep{yao_react_2022}: chain-of-thought interleaved with actions, no explicit belief or cost accounting;
\finalonly{\pagebreak}
\textbf{SC-BoN-ReAct}: generates $K$ ReAct samples, decides to ask or act via majority vote (a.k.a. self-consistency, \citet{wang_selfconsistency_2023}), then selects the best of the $N \leq K$ majority samples with an LLM judge similar to REVOIR's proxy goal reward $\hat{G}_{\mathrm{LLM}}$;
\textbf{Entropy Thresholding}: same particle belief as REVOIR, but asks until $\frac{H_{\max}(b)-H(b)}{H_{\max}(b)} > \tau$, with $\tau$ fitted; \textbf{ReflectionDPO-ReAct}: our adaptation of ReflectionDPO~\citep{patel_adapt_2025} (Appendix~\ref{sec:reflection-dpo}), which finetunes a model to ask clarifying questions in order to better mimic an oracle; and \textbf{Oracle-ReAct}: ReAct privileged with ground-truth $y_\theta$, serving as a non-interactive oracle / topline.

\begin{figure}[t]
    \centering

    \begin{subfigure}{\finalifelse{0.48\linewidth}{0.425\linewidth}}
        \centering
        \includegraphics[width=\linewidth]{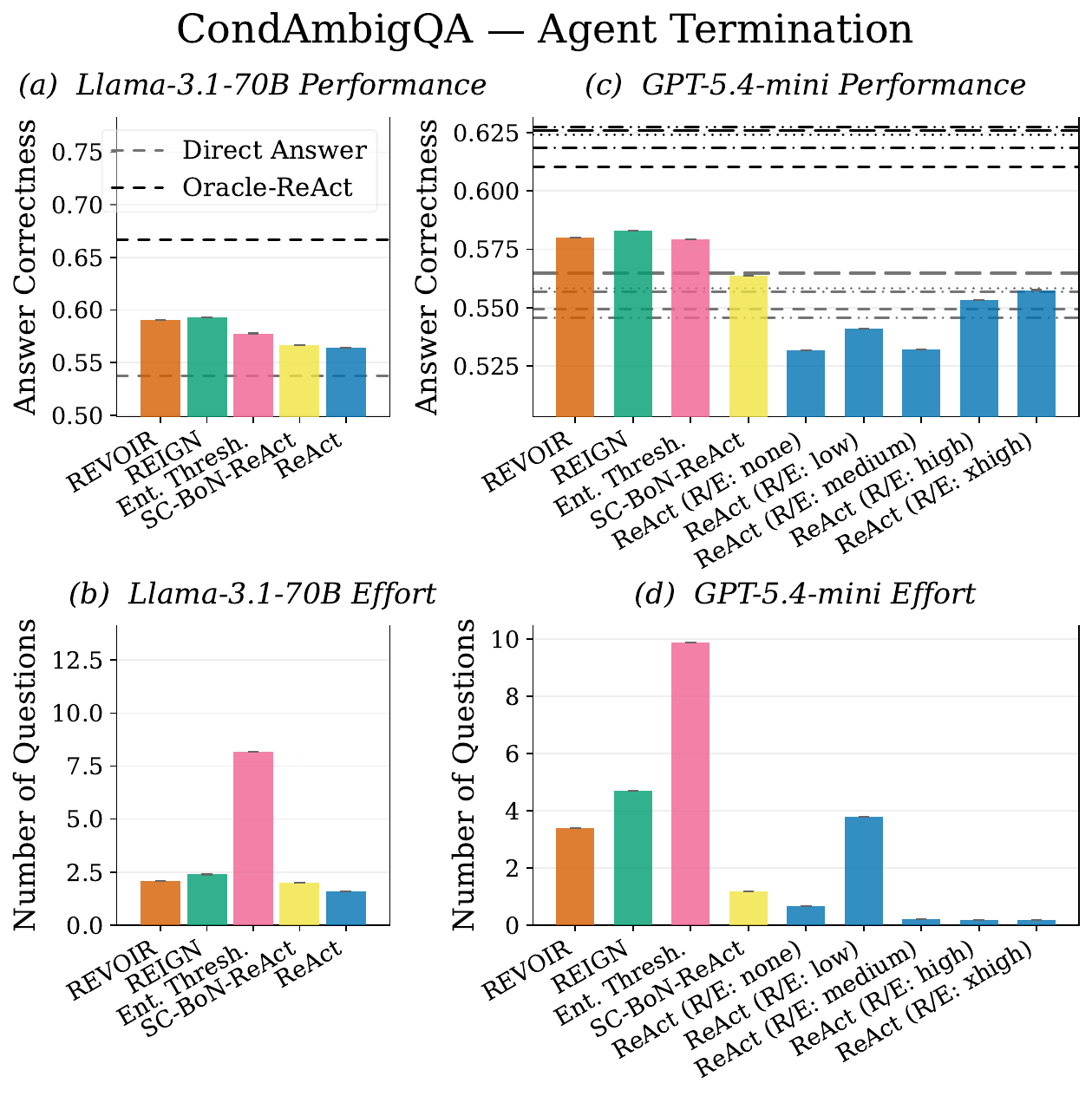}
    \end{subfigure}
    \begin{subfigure}{\finalifelse{0.48\linewidth}{0.425\linewidth}}
        \centering
        \includegraphics[width=\linewidth]{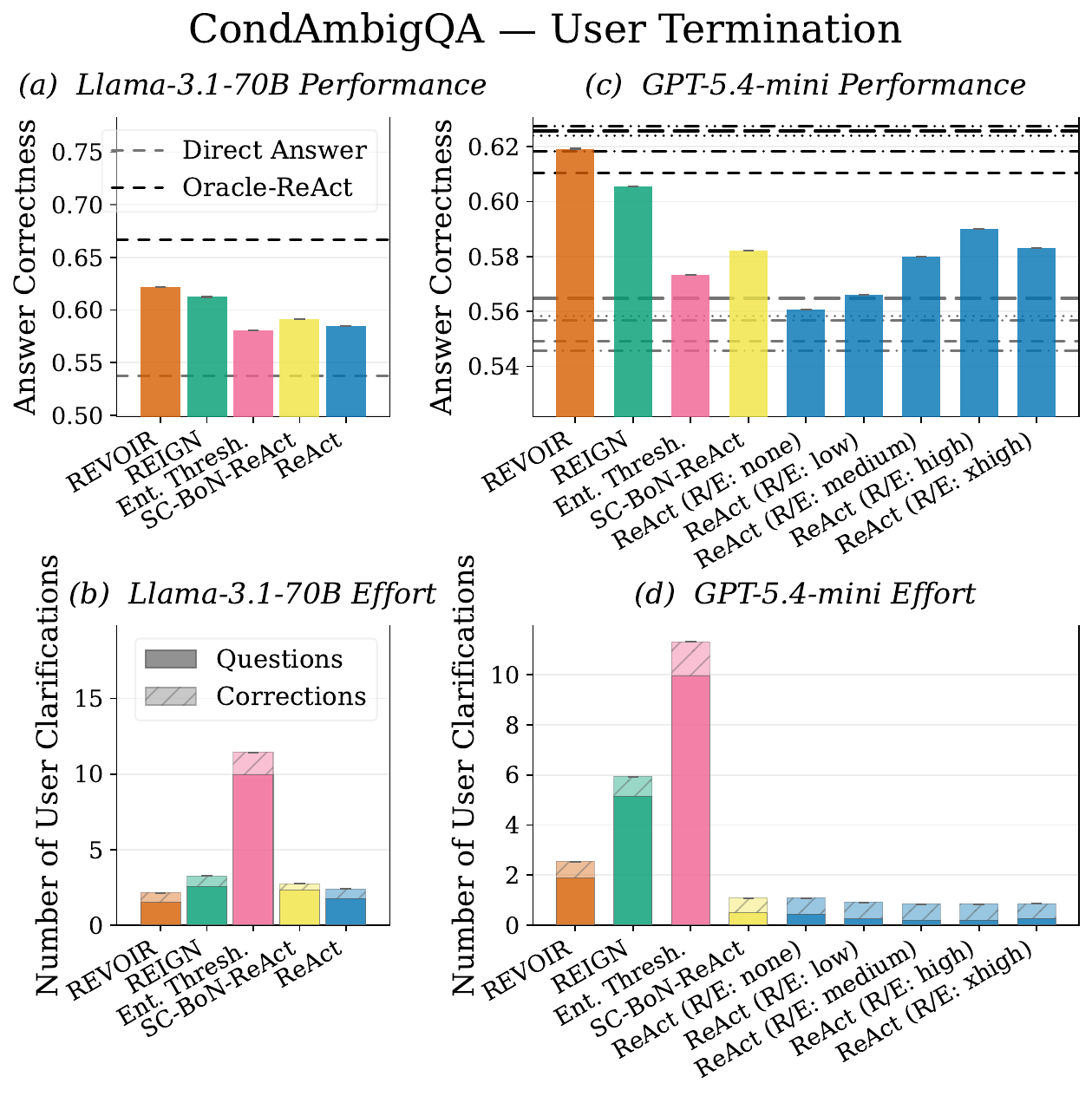}
    \end{subfigure}
    \vspace{-0.2em}
    \caption{\textbf{CondAmbigQA results for Llama-3.1-70B and GPT-5.4-mini}. ReAct variants and non-interactive base/top lines are evaluated at five reasoning-effort (R/E) levels for GPT-5.4-mini. Horizontal lines correspond to baselines and toplines across R/E levels: \textbf{\textcolor[HTML]{757575}{gray lines}} represent the \textbf{Direct Answer} baselines, and \textbf{black lines} represent \textbf{Oracle-ReAct} toplines, with line styles indicating effort levels: {\scriptsize$(-\ \cdot\ \cdot\ -)$} for \texttt{none}, {\scriptsize$(\mbox{---}\ \mbox{---})$} for \texttt{low}, {\scriptsize$(-\ \cdot\ -)$} for \texttt{medium}, {\scriptsize$(\cdot\ \cdot\ \cdot)$} for \texttt{high}, and {\scriptsize$(-\ \ -)$} for \texttt{xhigh}. REVOIR and REIGN use budgets of $(B_{\mathrm{agent}}, B_{\mathrm{user}}) = (100, 50)$, and SC-BoN-ReAct uses $K = 5$ samples.}
    \label{fig:condambigqa_llama3.1_70b_gpt54_mini}
    \finalifelse{}{\vspace{-0.6em}}
\end{figure}

\subsubsection{Experiment Results}

\textbf{REVOIR achieves high correctness while minimizing clarification effort.} Figure \ref{fig:condambigqa_llama3.1_70b_gpt54_mini} compares REVOIR against baselines using Llama-3.1-70B, and GPT-5.4-mini as backbones. Under \emph{agent termination}, REVOIR beats all ReAct variants in correctness. REIGN and Entropy Thresholding are close in correctness, with slight edge for REIGN, but at the cost of significantly more questions than REVOIR (1.3--2.8$\times$ for GPT-5.4-mini). REVOIR's advantage is even sharper under \emph{user termination}, outperforming all baselines in correctness while using the least clarifications of non-ReAct methods. ReAct variants clarify very little in all cases, but at a substantial cost to correctness. We find similar results for Gemini 3.5 Flash-Lite and Claude Haiku 4.5 in Appendix \ref{sec:condambigqa_gemini_claude}, albeit with model-specific idiosyncrasies. Appendix \ref{sec:condambigqa_inatt_user} also shows REVOIR's robustness to having an inaccurate user model.

\begin{figure}[t]
    \centering
    \includegraphics[width=\finalifelse{0.925\linewidth}{0.87\linewidth}]{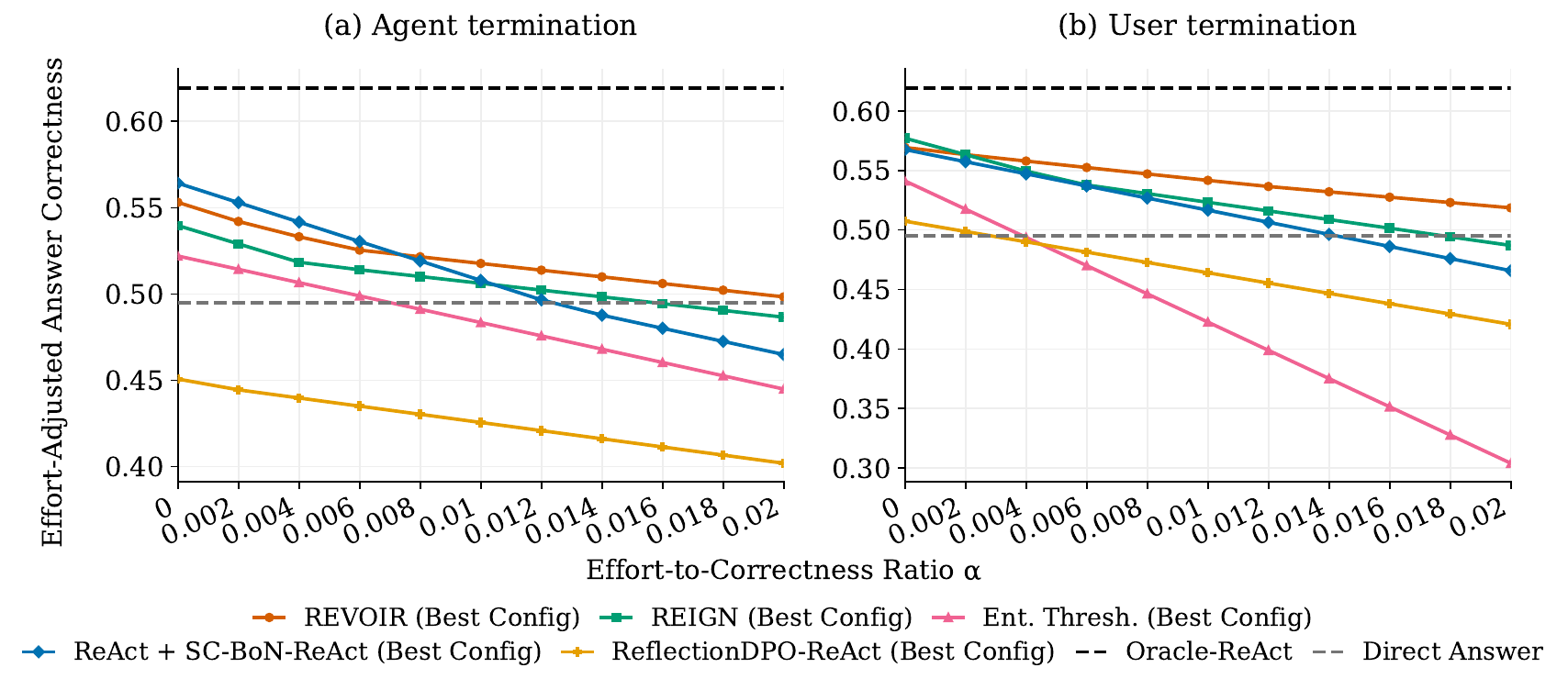}
    \vspace{-0.2em}
    \caption{\textbf{Effort-adjusted answer correctness on CondAmbigQA} for Llama-3.1-8B as a function of the effort-to-correctness ratio $\alpha$. The $y$-axis plots $\texttt{AnswerCorrectness} - \alpha \cdot \texttt{Effort}$ for the best method configuration at each value of $\alpha$. \texttt{Effort} is the number of clarifications per conversation. REVOIR dominates other methods for most values of $\alpha$ under both (a) agent termination ($\alpha > 0.008$) and (b) user termination ($\alpha > 0.002$).}
    \label{fig:condambigqa_llama3.1-8b_alpha_util_joint}
\end{figure}

\begin{figure}[t]
    \finalifelse{}{\vspace{-1em}}
    \centering
    \includegraphics[width=\finalifelse{0.925\linewidth}{0.85\linewidth}]{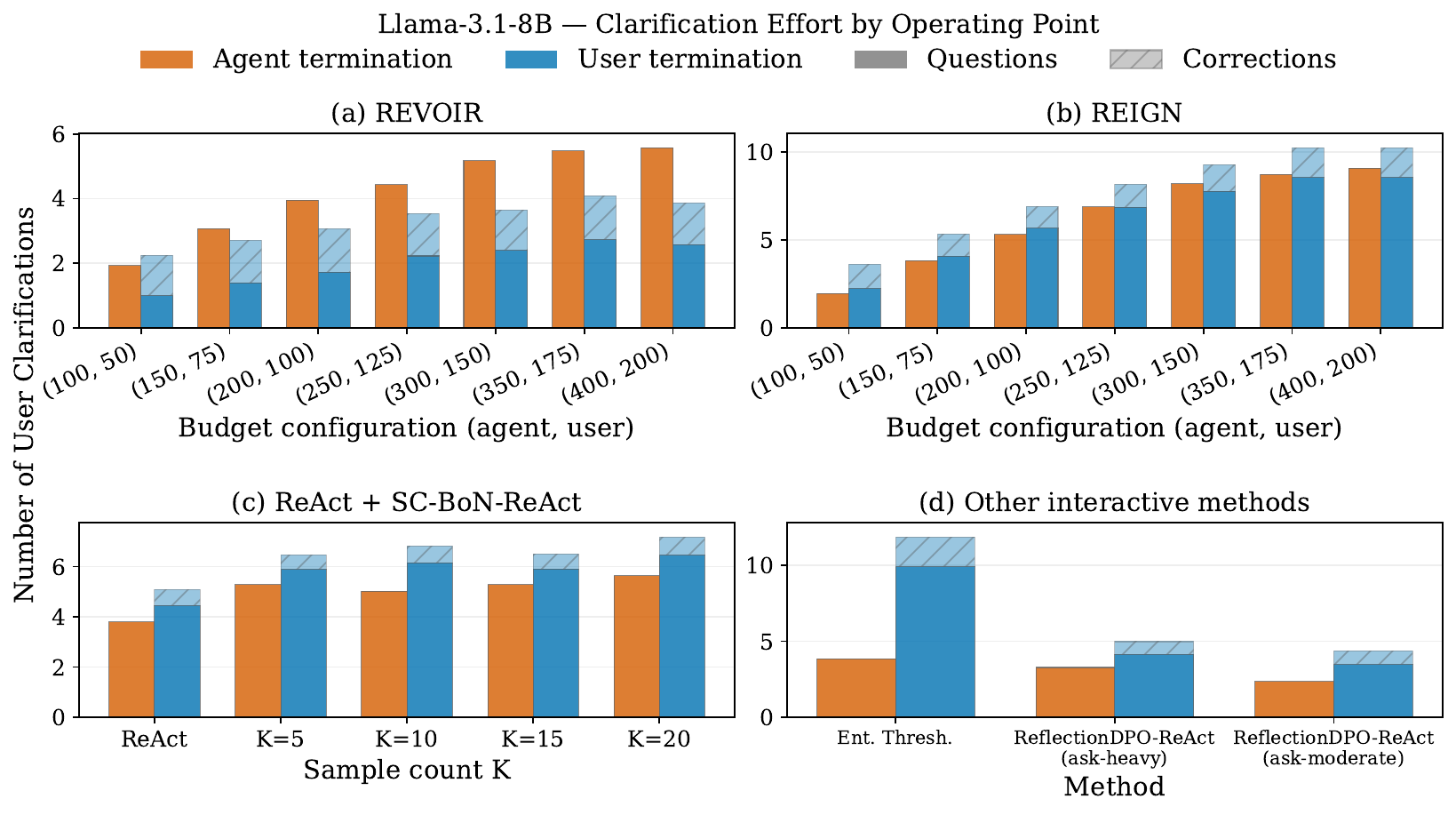}
    \vspace{-0.6em}
    \caption{\textbf{Clarification effort on CondAmbigQA for Llama-3.1-8B} across termination variants and methods. When moving from \emph{agent termination} to \emph{user termination}, only REVOIR adapts to using less clarifications in total by relying on user corrections.}
    \label{fig:condambigqa_llama3.1-8b_effort_joint}
    \finalifelse{}{\vspace{-0.3em}}
\end{figure}
\textbf{REVOIR best trades-off correctness vs. clarification effort} for most trade-off ratios. To study how well REVOIR adapts to different preferences about correctness vs. clarification effort, we run REVOIR across many budget points $(B_{\mathrm{agent}}, B_{\mathrm{user}})$, using a smaller Llama-3.1-8B model due to compute costs. We then compute the \emph{effort-adjusted answer correctness} $= \texttt{AnswerCorrectness} - \alpha \cdot \texttt{Effort}$ as a measure of overall utility, where $\alpha$ is the effort-to-correctness trade-off ratio, and find the REVOIR budget that maximizes this metric at each value of $\alpha$. Figure \ref{fig:condambigqa_llama3.1-8b_alpha_util_joint} plots this metric across a range of trade-off ratios, comparing REVOIR at each $\alpha$ against the best configuration for each baseline (across REIGN budgets, SC-BoN-ReAct sample sizes $K \in \{1, 5, 10, 15, 20\}$, ReflectionDPO data mixtures; see Tables \ref{tab:condambigqa_llama3.1-8b_at_details}--\ref{tab:condambigqa_llama3.1-8b_ut_details}). REVOIR outperforms baselines across most trade-off ratios $\alpha$ (agent term.: $\alpha > 0.008$, user term.: $\alpha > 0.002$), and is second best even at $\alpha=0$ (i.e. cost-free clarifications), demonstrating that REVOIR is the most adaptable of the methods to varying trade-offs. %

\textbf{REVOIR rationally adapts to the provision of user corrections.} Figure \ref{fig:condambigqa_llama3.1-8b_effort_joint} shows the clarification effort for each method on Llama-3.1-8B across termination variants. Only REVOIR adapts rationally when moving from \emph{agent termination} to \emph{user termination}; since user corrections provide information, REVOIR not only asks less questions, but makes less clarifications overall. All other methods increase in both questions and clarifications.

\textbf{Generic inference-time scaling does not consistently improve performance or rational information-gathering.} In Figure \ref{fig:condambigqa_llama3.1_70b_gpt54_mini}, increasing GPT-5.4-mini ReAct's reasoning effort yields non-monotonic and limited gains. Notably, higher reasoning effort (above \texttt{low}) results in very few clarifications. We find similar trends for other reasoning models (Section \ref{sec:condambigqa_gemini_claude}), indicating that reasoning models do not spend extra reasoning tokens on whether more clarification is helpful to answer the user's query; instead tokens appear to be spent on directly formulating a better answer. We separately find that SC-BoN-ReAct does not consistently improve with more samples $K$, and only exceeds medium-to-high budget REVOIR by using up to 2--4$\times$ more tokens (Tables~\ref{tab:token-usage} \& \ref{tab:condambigqa_llama3.1-8b_at_details}). These results suggest that neither reasoning effort nor parallel sampling substitutes for principled VoI reasoning. 

\subsection{Preference-Aligned Household Assistance (ADAPT)}
\begin{figure}[t]
    \centering
    \vspace{-8pt}
    \includegraphics[width=\linewidth]{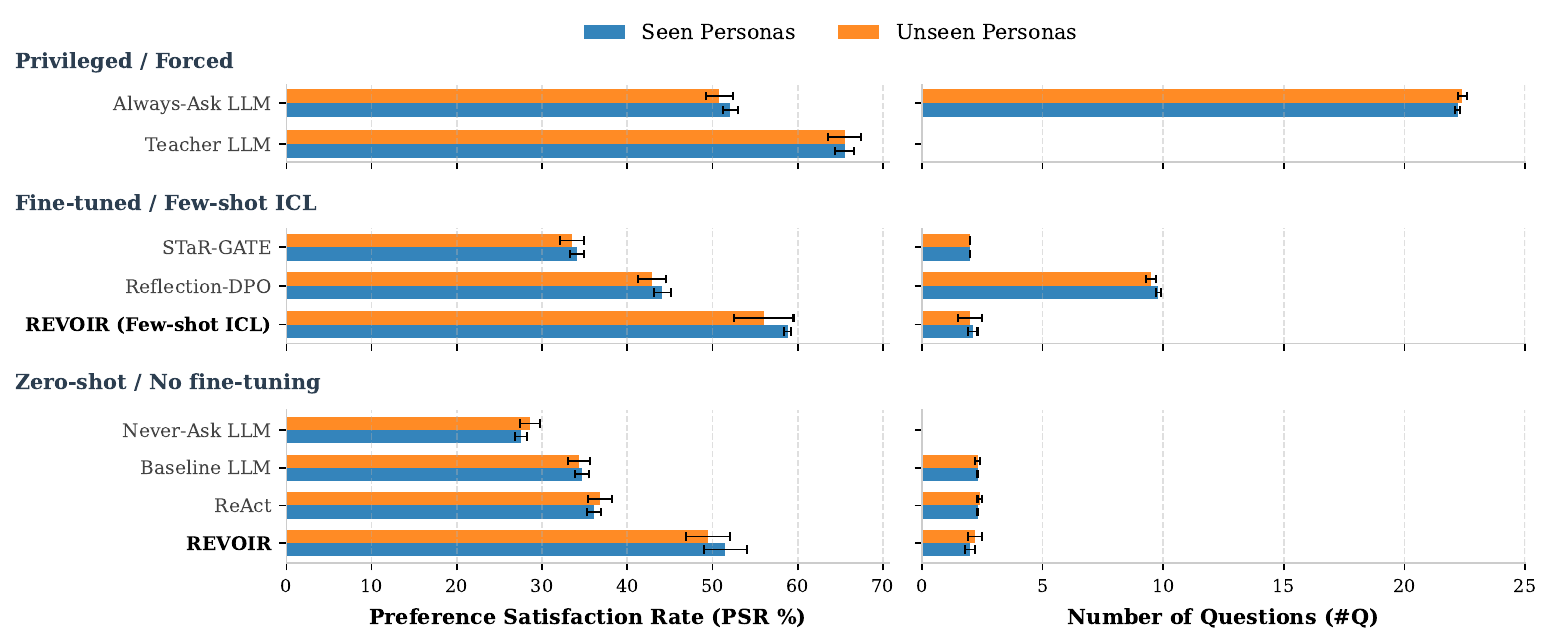}
    \vspace{-1.6em}
    \caption{\textbf{ADAPT preference satisfaction rate and question count across methods} (4-fold cross-validated following ADAPT splits in \citealt{patel_adapt_2025}). The standard deviation is computed across split means. Baseline results are reproduced from Table~1 in \citet{patel_adapt_2025}. For ICL variants, all seen personas are provided in context during belief updating. More details are in Table~\ref{tab:adapt}.}
    \label{fig:adapt}
    \finalifelse{}{\vspace{-0.5em}}
\end{figure}

Our second domain is preference-aligned household task planning. We build on the ADAPT benchmark~\citep{patel_adapt_2025}, where an agent must complete long-horizon cooking tasks (e.g., ``prepare an omelet for breakfast'') while respecting a user's hidden preference set $\theta = \{\gamma_1, \dots, \gamma_M\}$, with preferences like ``use non-dairy milk'' or ``serve beverages first.'' 

\textbf{Benchmark Description.}
The agent operates in a grounded text-based simulation with over 230 objects. %
At each turn $t$ it may take an external action $a_t \in \mathcal{X}$ (e.g. physical manipulation) or send a question $m^{\mathrm{ask}}$ to the user. %
The user simulator $\pi^{\mathrm{user}}(\cdot \mid h_t, \theta)$ answers questions based on the true preferences $\theta$, following the original benchmark exactly.
Episodes are scored by the \emph{preference satisfaction rate} $\mathrm{PSR} = p^+/(p^+ + p^-) \in [0,1],$ where $p^+$ and $p^-$ count the preferences in $\theta$ that are satisfied or not in the final state $s_t$. 

\textbf{Methods \& Baselines.}
REVOIR is evaluated on a two-stage variant of ADAPT where all questions are asked before committing to a physical plan (Appendix~\ref{sec:adapt-2stage}); this disadvantages it relative to the baselines, which interleave asking and acting. REVOIR uses PSR as the goal reward $G(s, \hat\theta) = \mathrm{PSR} \cdot \mathbf{1}[s \in \mathcal{S}^{\mathrm{term}}]$, where $\hat\theta$ is inferred. To match \citet{patel_adapt_2025}, we also use Llama-3.1-70B-Instruct as our LLM, assume agent termination, and use a very low question cost $C(m^{\textrm{ask}}) = 5 \times 10^{-4}$ since the original baselines are not cost-aware.

We compare against finetuning methods (ReflectionDPO \citep{patel_adapt_2025}; STaR-GATE \citep{andukuri_stargate_2024}), prompting baselines (Never-Ask; Baseline [$\equiv$ Direct Answer]; ReAct), a Teacher oracle with the true $\theta$, and an Always-Ask baseline. While REVOIR requires no finetuning on user personas $\theta$ (unlike ReflectionDPO), we still evaluate on the benchmark's \emph{seen} and held-out (\emph{unseen}) persona splits, and test both a \emph{few-shot} variant (seen personas are in-context examples for belief updating) and a \emph{zero-shot} variant (no examples). Full details, along with belief-update variants, can be found in Appendices \ref{sec:adapt_details}--\ref{sec:adapt_hyperparameters}.

\subsubsection{Experiment Results}
Figure~\ref{fig:adapt} shows REVOIR with the best belief updating configuration (factored beliefs + incremental updating). In zero-shot, REVOIR achieves $51.5\%$ PSR on seen personas and $49.5\%$ PSR on unseen personas, virtually matching Always-Ask with $11\times$ fewer questions. With ICL, REVOIR reaches $58.8\%$ on seen personas and $56.0\%$ on unseen personas, exceeding Always-Ask by $6$ points and outperforming the fully-trained Reflection-DPO ($44\%$) by $13$--$15$ points with $5\times$ fewer questions. Overall, REVOIR outperforms all finetuning and prompting baselines. More results are in Appendix~\ref{sec:adapt_details}, showing that factored belief updating helps substantially, but that joint beliefs still do well with few-shot examples.

\section{Conclusion}
In this paper, we introduced REVOIR, an inference-time algorithm for rational clarification via VoI reasoning. Across ambiguous question-answering and preference-aligned task planning, REVOIR achieves a better accuracy-efficiency frontier than alternatives, adapts naturally to user-correctable settings, and surpasses strong fine-tuning baselines while asking substantially fewer questions. However, several limitations point towards future work: REVOIR's one-step VoI estimate is tractable but myopic, suggesting the need to develop efficient multi-step estimates of VoI. Second, REVOIR is currently not designed for dual control or full interleaving of questions and non-terminal actions; addressing this will require novel ways to efficiently estimate VoI under interleaving. Finally, while REVOIR is reasonably robust to imperfect user modeling (Appendix \ref{sec:condambigqa_inatt_user}), evaluating with real users and their idiosyncrasies is an important next step toward deployment.

\finalifelse{}{\clearpage}
\subsection*{AI use statement}

In this work, we used generative AI tools for augmenting a dataset (Section~\ref{sec:condambigqa_benchmark}), implementing parts of the presented methods, and implementing parts of the result visualization and interpretation.
We have not used generative AI tools for generating synthetic data sets, helping develop theoretical models or conceptual frameworks, proposing or refining hypotheses, designing or providing feedback on research methodology or experiments, assisting with translation, or cleaning datasets.
Formulating mathematical claims played a minimal role in this work, and AI assistance was used only for extending known results with elementary techniques, formatting, and restatement.
Additionally, we used generative AI tools for writing suggestions in parts of the manuscript. 
We have reviewed all AI-assisted work.  We take responsibility for the final content of this work, including text, claims or artifacts produced with the aid of generative AI.

\finalonly{
    \section*{Acknowledgement}
        This research is supported by the Ministry of Digital Development and Information (MDDI) under the Singapore Global AI Visiting Professorship Program (Award No. AIVP-2024-002), the NUS Presidential Young Professorship grant to Tan Zhi-Xuan, and the National Research Foundation, Singapore under its AI Singapore Programme (AISG Award No: AISG3-PhD 2023-08-053).
}

\finalifelse{}{\clearpage\newpage}
\printbibliography

\clearpage
\newpage
\appendix
\counterwithin{figure}{section}
\counterwithin{table}{section}

\section*{APPENDIX}

\section{Value of Adaptivity}
\label{sec:voa}

Here, we develop on a well-known result about why acting after gaining new information leads to better outcomes~\citep{howard_information_1966}. By employing a best-of-$K$ approach for the $\pi^{\mathrm{act}}$ sub-policy, $R^{\mathrm{act}}(s',b')$ effectively maximizes the cost-adjusted reward over candidate actions conditioned on the updated belief $b'$. Because the maximum of a set of linear functions is convex, Jensen's inequality guarantees that the expected value of this maximum across all possible user answers is greater than or equal to the maximum reward obtainable under the current belief, provided the same candidate set is available before and after clarification.

For the following results, we consider terminal act decisions and questions that change only information, leaving the feasible actions and the utility of each fixed action unchanged.

\begin{proposition}[Value of Adaptivity]
\label{prop:voa-adaptivity}
For any fixed nonempty finite action set $\mathcal{A}$,
\begin{equation}
\mathbb{E}_{{m}^{\mathrm{ans}}} \left[
 \max_{a^{\mathrm{act}} \in \mathcal{A}} \big(\bar{G}(s,b',a^{\mathrm{act}}) - C(a^{\mathrm{act}})\big)
\right]
\geq
\max_{a^{\mathrm{act}} \in \mathcal{A}} \big(\bar{G}(s,b,a^{\mathrm{act}}) - C(a^{\mathrm{act}})\big).
\label{eq:voa}
\end{equation}
\end{proposition}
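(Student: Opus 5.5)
The plan is the standard ``max of expectations $\le$ expectation of max'' argument, with the martingale property of Bayesian beliefs as the key link.

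\textbf{Step 1: write $\bar G$ as a linear functional of the belief.} For a fixed action $a^{\mathrm{act}}$ and state $s$, define $g(\theta,a^{\mathrm{act}}) := \mathbb{E}_{s'\sim T(\cdot\mid s,a^{\mathrm{act}})}[G(s',\theta)]$. By \eqref{eq:expected-goal-reward}, we then have $\bar G(s,b,a^{\mathrm{act}})=\sum_\theta b(\theta)\,g(\theta,a^{\mathrm{act}})$, or the corresponding integral. This uses the standing assumption that the question changes only information, so $T$, $G$ and $C$ are the same before and after the answer. Writing $U_a(b):=\bar G(s,b,a)-C(a)$, each $U_a$ is affine in $b$.

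\textbf{Step 2: the martingale (law of total probability) identity.} The answer is generated as in \eqref{eq:voi-ask}: $\theta\sim b$, then $m^{\mathrm{ans}}\sim\pi^{\mathrm{user}}(\cdot\mid (h_t,m^{\mathrm{ask}}),\theta)$, and $b'$ is the Bayes posterior. So the marginal of $m^{\mathrm{ans}}$ is $p(m)=\sum_\theta b(\theta)\pi^{\mathrm{user}}(m\mid\cdot,\theta)$, and $b'_m(\theta)=b(\theta)\pi^{\mathrm{user}}(m\mid\cdot,\theta)/p(m)$. It follows that $\mathbb{E}_{m}[b'_m(\theta)]=b(\theta)$. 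Combined with linearity, this gives $\mathbb{E}_m[U_a(b'_m)]=U_a(b)$ for every $a$. The constant $C(a)$ passes through the expectation unchanged.

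\textbf{Step 3: exchange max and expectation.} For any fixed $a_0\in\mathcal A$, pointwise in $m$ we have $\max_{a\in\mathcal A}U_a(b'_m)\ge U_{a_0}(b'_m)$. Taking expectations gives $\mathbb{E}_m[\max_a U_a(b'_m)]\ge U_{a_0}(b)$. Now choose $a_0$ to be a maximizer of $U_a(b)$; it exists because $\mathcal A$ is finite and nonempty. This yields \eqref{eq:voa}. Equivalently, $b\mapsto\max_a U_a(b)$ is a maximum of affine functions and hence convex, so Jensen's inequality applies.

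\textbf{Main obstacle: measure-theoretic bookkeeping in Step 2.} The substance is easy, so the difficulty lies in justifying the details:
\begin{itemize}
\item We must ensure $p(m)>0$ on the support, which holds almost surely.
\item We must ensure integrability of $g$. This is guaranteed if $G$ is bounded (e.g.\ in $[0,1]$ as in our experiments), or more generally nonnegative with finite expectation under $b$, so that Fubini/Tonelli justifies swapping $\mathbb{E}_m$ with the sum or integral over $\theta$.
\end{itemize}
I would state these as mild regularity assumptions and handle general $\Theta$ by writing the identity $\mathbb{E}_m[\mathbb{E}_{\theta\sim b'_m}f(\theta)]=\mathbb{E}_{\theta\sim b}f(\theta)$ as the tower property of conditional expectation.
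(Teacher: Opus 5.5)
Your proof is correct and follows the paper's own argument: each $\bar G(s,b,a^{\mathrm{act}})-C(a^{\mathrm{act}})$ is linear in $b$, the martingale identity $\mathbb{E}_{m^{\mathrm{ans}}}[b']=b$ holds under Bayesian updating with predictive answers, and convexity of the maximum then gives the inequality via Jensen. Your Step 3 bound through a fixed maximizer $a_0$ is simply the elementary form of that Jensen step. Your explicit requirement that answers be drawn from the predictive distribution of the same user model used for the update is exactly the assumption the paper's sketch makes.
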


\begin{proof}[Proof sketch.]
Under Bayesian updating, with answers drawn from the corresponding predictive distribution, $\mathbb{E}_{m^{\mathrm{ans}}}[b']=b$ by the law of total expectation.
For each fixed action, $\bar{G}(s,b,a^{\mathrm{act}})-C(a^{\mathrm{act}})$ is linear in $b$, so its maximum over $\mathcal{A}$ is convex. Jensen's inequality gives the
result. The inequality can be strict because the maximizing action may differ across the realized answers.
\end{proof}

Thus, for a policy that maximizes over this set, $\mathrm{VoI}(s,b,m^{\mathrm{ask}}) - R^{\mathrm{act}}(s,b)$ (and similarly, $\textrm{VoI}(s, b, m^\mathrm{act}) - R^\mathrm{act}(s', b')$) is non-negative, because more information affords the agent the opportunity to pivot to a higher-reward action once its uncertainty is reduced.

When $\pi^{\mathrm{act}}$ does not adapt to the updated belief $b'$ (i.e., $\pi^{\mathrm{act}}(\cdot\mid b)=\pi^{\mathrm{act}}(\cdot\mid b')$), the expected future goal reward of a fixed action, $\mathbb{E}_{m^{\mathrm{ans}}}[\bar{G}(s,b',a^{\mathrm{act}})]$, collapses back to the current expected reward $\bar{G}(s,b,a^{\mathrm{act}})$ by the law of total expectation ($\mathbb{E}_{m^{\mathrm{ans}}}[b']=b$). Gathering more information thus provides no benefit to non-adaptive policies. 

In practice, REVOIR generates candidates conditioned on its current belief $b$ from some policy $\pi_0(a^{\mathrm{act}} \mid b)$, so their distribution can change after the belief updates to $b'$. In such cases, under mild assumptions about $\pi_0(a^{\mathrm{act}} \mid b')$, best-of-$K$ sampling after clarification can outperform even exact maximization before clarification when the expected benefit of the answer $m^{\mathrm{ans}}$ exceeds the sampling error.

\begin{proposition}[Value of Adaptivity with Belief-Conditioned Candidates]
\label{prop:voa-sampled}
Let $R_*^{\mathrm{act}}(s,b)$ be the supremum of $\bar G(s,b,a^{\mathrm{act}})-C(a^{\mathrm{act}})$ over feasible actions. Suppose net rewards lie in a common interval of width $D$. For almost every answer, assume each of $K$ independent draws from the updated proposal $\pi_0(\cdot \mid b')$ lies within $\varepsilon\in[0,D]$ of the optimal value under $b'$ with probability at least $1-\delta$, where $\delta\in[0,1)$. Then the best-of-$K$ act policy satisfies
\begin{equation}
\mathbb{E}_{{m}^{\mathrm{ans}}}[R^{\mathrm{act}}(s',b')]
\geq
\mathbb{E}_{{m}^{\mathrm{ans}}}[R_*^{\mathrm{act}}(s',b')] - \varepsilon - (D - \varepsilon)\delta^K.
\label{eq:voa-sampled}
\end{equation}
Hence, if the expected gain in optimal reward $\mathbb{E}[R_*^{\mathrm{act}}(s',b')]-R_*^{\mathrm{act}}(s,b)$ exceeds $\varepsilon + (D - \varepsilon)\delta^K$, then
\begin{equation}
\mathbb{E}_{{m}^{\mathrm{ans}}}[R^{\mathrm{act}}(s',b')] > R_*^{\mathrm{act}}(s,b) \geq R^{\mathrm{act}}(s,b).
\end{equation}
In particular, $\delta^K \to 0$ as $K \to \infty$, so this inequality holds for sufficiently large $K$ whenever the expected gain in optimal reward exceeds $\varepsilon$ (i.e. whenever Eq. \ref{eq:voa} is strict with an $\varepsilon$ gap). For $K = 1$, the expected gain needs to exceed $\varepsilon + (D - \varepsilon)\delta$.
\end{proposition}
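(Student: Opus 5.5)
The plan is to prove the bound pointwise for each answer $m^{\mathrm{ans}}$ and then average. Fix a realized answer, and hence the updated state $s'$ and belief $b'$, and write $R_*' = R_*^{\mathrm{act}}(s',b')$. Let $X_1,\dots,X_K$ be the net rewards $\bar G(s',b',a^{\mathrm{act},k})-C(a^{\mathrm{act},k})$ of the $K$ independent draws from $\pi_0(\cdot\mid b')$. Since the best-of-$K$ policy selects the candidate with the largest estimated net reward, its conditional expected value is $R^{\mathrm{act}}(s',b') = \mathbb{E}[\max_k X_k]$. Throughout, I take the selection criterion to be the same $\bar G - C$ that defines $R^{\mathrm{act}}$; this is the setting of Proposition~\ref{prop:voa-adaptivity}.

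Let $E$ be the event that at least one draw is $\varepsilon$-good, i.e.\ $X_k \ge R_*' - \varepsilon$ for some $k$. By independence and the hypothesis, $\Pr(E^c)\le\delta^K$. I then split on $E$. On $E$ we have $\max_k X_k \ge R_*'-\varepsilon$. On $E^c$ we use the range assumption: all net rewards lie in a common interval of width $D$, and $R_*'$ is a supremum of values in that interval, so $\max_k X_k \ge R_*' - D$.

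Combining the two cases, with $p=\Pr(E^c)$:
\[
\mathbb{E}[\max_k X_k] \ge (1-p)(R_*'-\varepsilon) + p(R_*'-D) = R_*' - \varepsilon - p(D-\varepsilon).
\]
This expression is decreasing in $p$ because $D-\varepsilon\ge 0$, so substituting $p\le\delta^K$ gives $R^{\mathrm{act}}(s',b') \ge R_*' - \varepsilon - (D-\varepsilon)\delta^K$. The hypothesis holds for almost every answer, so taking $\mathbb{E}_{m^{\mathrm{ans}}}$ of this pointwise inequality yields Eq.~\ref{eq:voa-sampled}. Boundedness makes all the expectations finite, which justifies exchanging the order.

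For the consequence, I add the assumed gap $\mathbb{E}[R_*^{\mathrm{act}}(s',b')] - R_*^{\mathrm{act}}(s,b) > \varepsilon + (D-\varepsilon)\delta^K$ to Eq.~\ref{eq:voa-sampled}, which gives the strict inequality $\mathbb{E}[R^{\mathrm{act}}(s',b')] > R_*^{\mathrm{act}}(s,b)$. The remaining inequality $R_*^{\mathrm{act}}(s,b)\ge R^{\mathrm{act}}(s,b)$ holds because any policy's expected net reward is an average of feasible values, and each is at most the supremum. The limiting remarks then follow directly: $\delta<1$ implies $\delta^K\to 0$, so any gap strictly larger than $\varepsilon$ eventually dominates, and setting $K=1$ gives the stated threshold $\varepsilon+(D-\varepsilon)\delta$.

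The main obstacle is measure-theoretic and interpretive rather than computational. First, the map $m^{\mathrm{ans}}\mapsto R^{\mathrm{act}}(s',b')$ must be measurable, which I will simply assume. Second, and more substantively, I must make sure the best-of-$K$ selection is exact with respect to $\bar G - C$. If selection instead used a noisy proxy, the step "on $E$, $\max_k X_k\ge R_*'-\varepsilon$" would fail. I will therefore state explicitly that the selector is the exact argmax of the true net reward under $b'$.
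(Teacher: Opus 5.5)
Your proposal is correct and takes essentially the same route as the paper's proof sketch: condition on the answer, bound the probability that all $K$ draws miss the $\varepsilon$-optimal set by $\delta^K$, bound the shortfall by $\varepsilon$ on a hit and by $D$ on a miss, then average over answers and compare. Your use of $\varepsilon \le D$ to make the bound monotone in the miss probability, and your stipulation that selection is the exact argmax of $\bar G - C$, make explicit what the paper's sketch leaves implicit.
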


\begin{proof}[Proof sketch.]
Condition on the user's answer. The probability that all $K$ candidates miss the $\varepsilon$-optimal set is at most $\delta^K$. The selected action's shortfall from optimal is at most $\varepsilon$ otherwise and at most $D$ on a miss. Its expected shortfall is therefore at most $\varepsilon + (D - \varepsilon)\delta^K$. Averaging over answers gives Inequality~\eqref{eq:voa-sampled} and comparisons complete the proof.
\end{proof}

\section{Recovering Expected Information Gain}
\label{sec:eig}

REVOIR can be modified to follow an EIG maximization strategy as follows: If the goal reward in Section~\ref{sec:voi-reasoning} is replaced with the log belief-probability $G(s, \theta) = \log b(\theta)$, then the expected goal reward $\bar{G}(s, b, a^{\mathrm{act}})$ at belief $b$ is just the (negative) entropy $-H(b)$, and $\mathrm{VoI}(s, b, m^{\mathrm{ask}})$ reduces to the (negative) expected entropy. If costs are also zero and acting gives no information, then the value difference between asking and acting is precisely the expected information gain $H(b) - H(b \mid m^{\mathrm{ask}})$ from asking. EIG-based clarification can thus be viewed as a special case of REVOIR in which task reward is replaced with pure uncertainty reduction.

\section{Extended Details on CondAmbigQA}
\label{sec:condambigqa_details}

\subsection{Benchmark Configuration}
\label{sec:condambigqa_benchmark}

\textbf{Dataset.} CondAmbigQA~\citep{li_condambigqa_2025} contains 2,000 ambiguous queries drawn from AmbigNQ~\citep{min_ambigqa_2020}, where each question is endowed with a set of explicit contextual constraints (called \textit{conditions}) that approximate the assumptions underlying different valid interpretations (information-seeking intents) of the query. Each condition corresponds to an intent $\theta \in \Theta$ in our framework, and is supported by a set of retrieved Wikipedia fragments that serve as its source. Ground-truth answers $y \in \mathcal{Y}$ are extracted from Wikipedia fragments, and were generated via a human-LLM collaborative annotation process.

\begin{wraptable}[7]{r}{0.48\textwidth}
    \vspace{-10pt}
    \centering
    \footnotesize
    \caption{CondAmbigQA Data Splits}
    \begin{center}
    \begin{tabular}{lcc}
    \toprule
    \textbf{Split} & \textbf{Examples} & \textbf{Unique Questions} \\
    \midrule
    Train & 3,062 & 1,600 \\
    Dev   & 380   & 200   \\
    Test  & 380   & 200   \\
    \bottomrule
    \end{tabular}
    \end{center}
    \label{tab:condambigqa_splits}
\end{wraptable}

We augment each (question, condition) pair with an \textit{explicit disambiguating question}---a question whose answer uniquely identifies the scoping condition $\theta$. These are generated by GPT-4o-2024-08-06 prompted with the ambiguous question and the condition text. The explicit question serves two purposes: it is provided to the user simulator as a reference for what information should be revealed when answering clarification requests, and it serves as the oracle reflection question for our ReflectionDPO baseline (Appendix~\ref{sec:reflection-dpo}). The augmented dataset is partitioned at the question level into train, validation, and test splits to ensure that no condition of a question seen during training appears in evaluation; split statistics are reported in Table~\ref{tab:condambigqa_splits}.

\textbf{User Simulator.} We simulate the user by prompting an LLM (GPT-4o-mini) with the ground-truth condition $\theta$. At each turn, the user either responds to the assistant's clarification request $m^{\mathrm{ask}}$ based on the condition, or produces a dismissal if the inquiry cannot be answered (Templates \ref{prt:condambigqa-user-system}--\ref{prt:condambigqa-user-message}). In the user termination setting, the user simulator also decides whether to accept the assistant's answer $m^{\mathrm{act}}$ (Templates \ref{prt:condambigqa-user-term-system}--\ref{prt:condambigqa-user-term-message}) or respond with a correction $m^{\mathrm{cor}}$ (Templates \ref{prt:condambigqa-user-corr-system}--\ref{prt:condambigqa-user-corr-message}). Note that REVOIR has \emph{no access} to the true user simulator $\pi^{\mathrm{user}}$, and instead uses its own proxy user model $\hat \pi^{\mathrm{user}}$ with a different LLM backbone. We validate the user simulator against human annotators in Appendix \ref{sec:pilot-human-replay} in terms of answer acceptance. We also test for robustness against an ``inattentive'' user simulator in Appendix \ref{sec:condambigqa_user_simulator}.

\textbf{Evaluation Metric.} We evaluate the quality of the assistant's final answer $m^{\mathrm{act}}$ against the ground-truth answer $y_\theta$ for the true intent $\theta$ using an LLM-based judge (Template \ref{prt:condambigqa-llmjudge-geval}).  Following \citet{li_condambigqa_2025}, we use a rubric-based $\mathtt{AnswerCorrectness}$ metric implemented in the G-Eval framework~\citep{liu_geval_2023}. The rubric performs a \emph{contradiction check}, inspecting whether the answer contradicts established facts in $y_\theta$; an \emph{omission penalty}, heavily penalizing omission of critical details in $y_\theta$; and a \emph{relevance check}, ensuring the answer directly and concisely addresses the question. Since our assistants are not provided with access to web-retrieved resources for the user's query
(unlike in \citet{li_condambigqa_2025}), we add an \emph{acumen reward} that rewards relevant answers presented with more depth and detail than $y_\theta$, thereby avoiding over-penalization of answers of higher quality than the reference answer.

We verify that our $\mathtt{AnswerCorrectness}$ metric is discriminative and rank-consistent by comparing it to topline and baseline anchors: \textbf{Oracle}, which submits the gold answers $y_\theta$ verbatim, scores $0.995$ out of 1; \textbf{Oracle-ReAct}, an agent shown $y_\theta$ but answering in its own words, scores $0.619$ for Llama-3.1-8B and $0.63$ for GPT-5.4-mini). \textbf{Direct Answer}, a non-interactive baseline that answers the user's query without reasoning, achieves around $0.50$--$0.56$. Our metric thus orders the tiers as expected, and our interactive methods score between these topline and baseline anchors.

\subsection{REVOIR Implementation in CondAmbigQA}
\label{sec:revoir-condambigqa}

\textbf{Goal Reward.} The proxy goal reward $\hat G_\textrm{LLM}(m^\mathrm{act}, \theta^n)$ scores an answer $m^\mathrm{act}$ against each hypothesis $\theta^n$ using criteria analogous to the G-Eval \texttt{AnswerCorrectness} metric used by the benchmark, but omitting the contradiction check since no ground-truth answer $y_\theta$ is available at planning time (Templates \ref{prt:condambigqa-proxy-reward-system}--\ref{prt:condambigqa-proxy-reward-message}):
\begin{enumerate}[leftmargin=*,itemsep=2pt,topsep=0pt]
    \item Check whether the predicted answer addresses the asking intent implied by $\theta^n$.
    \item Heavily penalize omission of information-seeking targets implied by the asking intent.
    \item Ensure the answer addresses the intended question $\theta^n$ without irrelevant information.
\end{enumerate}
The expected reward is then $\bar{G}(s, \hat{b}_t, m^{\mathrm{act}}) = \sum_n w^n \cdot \hat{G}_{\mathrm{LLM}}(m^{\mathrm{act}}, \theta^n)$. 

\textbf{Cost Function.} We assume that users specify budgets $B_{\mathrm{user}}$ and $B_{\mathrm{agent}}$ for how many words they are willing to (respectively) write and read per message:
$$C(a^{\mathrm{user}}_t, a^{\mathrm{agent}}_t) = \textstyle\frac{1}{B_{\mathrm{user}}} \cdot \big[ |a^{\mathrm{user}}_t| - B_{\mathrm{user}} \big]_+ + \textstyle\frac{1}{B_{\mathrm{agent}}} \cdot \big[ |a^{\mathrm{agent}}_t| - B_{\mathrm{agent}} \big]_+,$$
where $|m|$ is the word count of $m$ and $[x]_+ = \max(x, 0)$. A message $m$ has zero cost when $|m|$ is less than the respective budget, and costs within $[0, 1]$ while less than twice the word budget. We assume costs are mostly incurred by the \emph{user}, and that token inference costs are marginal, in line with LLM product trends where increasingly many tokens are spent to deliver user value.

\textbf{Belief Updating.} At each turn, REVOIR maintains a particle belief $\hat{b}_t = \{(\theta^n, w^n)\}_{n=1}^N$ over latent intents $\theta^n$ behind the ambiguous query as follows: $N = 5$ intent hypotheses are sampled from a hypothesis proposer LLM $Q_{\textrm{LLM}}(\theta | h_t, \hat{b}_{t-1})$ provided with the history $h_t$ and past belief $\hat{b}_{t-1}$ (if available). This LLM is prompted to enumerate $N$ distinct hypotheses $\theta^n$ (Templates \ref{prt:condambigqa-hyp-system}--\ref{prt:condambigqa-hyp-message}), and may keep, or prune hypotheses present in the past belief $\hat{b}_{t-1}$.

An LLM-based hypothesis scorer $S_{\mathrm{LLM}}$ then computes a  weight $w^n_t = S_{\mathrm{LLM}}(\theta^n, h_t)$ for each hypothesis $\theta^n$, reflecting how consistent $\theta^n$ is with the history $h_t$. The LLM is prompted to output a score between 0 and 10 (Templates \ref{prt:condambigqa-consistency-score-system}--\ref{prt:condambigqa-consistency-score-message}), and this is then normalized to lie within $[0, 1]$ s.t. $\sum_{n=1}^N w^n = 1$. Weights are recomputed from scratch at every step $t$ (batch rescoring), rather than being incrementally updated.

\textbf{Internal User Model.} REVOIR uses an internal user model $\hat \pi^{\mathrm{user}}(\cdot | h_t, \theta^n)$ to forecast how a user will respond to the assistant given a hypothesized intent $\theta^n$. We use different prompts for user answers $m^{\mathrm{ans}}$ to clarifying questions $m^{\mathrm{act}}$ (Templates \ref{prt:condambigqa-forecast-system}--\ref{prt:condambigqa-forecast-message}) and user corrections $m^{\mathrm{cor}}$ to assistant actions $m^{\mathrm{act}}$ (Templates \ref{prt:condambigqa-user-corr-system}--\ref{prt:condambigqa-user-corr-message}). $\hat \pi^{\mathrm{user}}$ differs from the true user simulator $\pi^{\mathrm{user}}$ in several ways: (i) $\hat \pi^{\mathrm{user}}$ has no access to the true intent $\theta$; (ii) $\hat \pi^{\mathrm{user}}$ uses a different LLM (e.g. Llama-3.1-8B) than $\pi^{\mathrm{user}}$ (GPT-4o-mini); (iii) different prompts are used. In Section \ref{sec:condambigqa_inatt_user}, we test REVOIR's robustness to user model misspecification by prompting the external simulator $\pi^{\mathrm{user}}$ to mimic an inattentive user while keeping $\hat \pi^{\mathrm{user}}$ the same.

\textbf{Question Generation and Selection.} We use a best-of-$K$ \emph{ask} policy $\pi^{\mathrm{ask}}$ with $K = 5$ question candidates, which are proposed from a \emph{question generator}  $\pi_0^{\mathrm{ask}}(\cdot | \hat b_t)$ prompted with the particle belief $\hat b_t$ (Templates \ref{prt:condambigqa-qg-system}--\ref{prt:condambigqa-qg-message}). We then select among these $K$ questions by maximizing the cost-adjusted VoI (the expectand in Eq. \ref{eq:value-of-asking}). The VoI of each question $m^{\mathrm{ask}}$ is computed per Eq. \ref{eq:voi-ask} by simulating 1 user answer $m^{\mathrm{ans}}$ per hypothesis $\theta^n$ in the belief $\hat b_t$, and taking the sample average of the expectand.

\textbf{Answer Generation and Evaluation.} For efficiency, the \emph{act} policy $\pi^{\mathrm{act}}$ samples only $K = 1$ answer candidate $m^\mathrm{act}$ from an \emph{answer generator}  $\pi_0^{\mathrm{act}}(\cdot | \hat b_t) \equiv \pi^{\mathrm{act}}(\cdot | \hat b_t)$ prompted with the particle belief $\hat b_t$ (Templates \ref{prt:condambigqa-answer-system}--\ref{prt:condambigqa-answer-message}). Despite not maximizing over many candidates, we note that this policy is still \emph{adaptive} in the sense of Appendix~\ref{sec:voa} to the extent that the LLM generates better answers when it is prompted with an updated belief $b'$. Proposition~\ref{prop:voa-sampled} provides a sufficient condition for belief-adaptivity that results in positive value of information, including when $K = 1$. Thus, explicit selection among multiple candidates is not necessary for clarification to improve downstream decisions.

With one answer candidate $m^{\mathrm{act}}$, the value of acting $V^{\mathrm{act}}$ (Eq. \ref{eq:value-of-acting}) is estimated by the expected cumulative reward (or $Q$-value) of $m^{\mathrm{act}}$:
\begin{align*}
    Q^{\mathrm{act}}(s, b, m^{\mathrm{act}}) &= R^{\mathrm{act}}(s, b, m^{\mathrm{act}}) + \mathbb{E}_{\theta \sim b, m^{\mathrm{cor}}}\big[\, R^{\mathrm{act}}(s', b') - C(m^{\mathrm{cor}}) \,\big]\\
    \text{where} \quad
    R^{\mathrm{act}}(s, b, m^{\mathrm{act}}) &= \bar{G}(s, b, m^{\mathrm{act}}) - C(m^{\mathrm{act}}) \nonumber \\
    &= \E_{\theta \sim b,m^{\mathrm{cor}}}[\hat G_{\mathrm{LLM}}(m^{\mathrm{act}}, \theta)] - C(m^{\mathrm{act}})
\end{align*}
Under \emph{user termination}, the net immediate reward $R^{\mathrm{act}}(s, b, m^{\mathrm{act}})$ is an expectation over cases where: (i) the user accepts $m^{\mathrm{act}}$ (resulting in positive goal reward $\hat G_{\mathrm{LLM}}(m^{\mathrm{act}}, \theta) > 0$); (ii) the user's intent $\theta$ leads them to reject $m^{\mathrm{act}}$ and give a correction $m^{\mathrm{cor}}$ (resulting in zero goal reward $\hat G_{\mathrm{LLM}}(m^{\mathrm{act}}, \theta) = 0$). While the user model $\hat \pi^{\mathrm{user}}$ is technically required to determine acceptance vs. correction, our implementation avoids this check and directly takes the belief-weighted average over $\hat G_{\mathrm{LLM}}(m^{\mathrm{act}}, \theta)$.  This is a good approximation since $\hat G_{\mathrm{LLM}}(m^{\mathrm{act}}, \theta) \approx 0$ for any $\theta$ that would lead to a correction.

For the residual expectation $\mathbb{E}_{m^{\mathrm{cor}}}[\, R^{\mathrm{act}}(s', b') - C(m^{\mathrm{cor}})]$, the expectand is only non-zero when a correction $m^{\mathrm{cor}}$ is issued. We can thus rewrite it as:
\begin{equation*}
    \mathbb{E}_{m^{\mathrm{cor}}}\big[\, R^{\mathrm{act}}(s', b') - C(m^{\mathrm{cor}}) \,\big] = \\
    \mathbb{E}_{\theta \sim b}\big[ P_{\mathrm{cor}}(\theta, m^{\mathrm{act}}) \big(\, R^{\mathrm{act}}(s', b') - \mathbb{E}_{m^{\mathrm{cor}}}[C(m^{\mathrm{cor}})] \,\big) \,\big],
\end{equation*}
where $P_{\mathrm{cor}}(\theta, m^{\mathrm{act}})$ is the probability that a user with intent $\theta$ corrects the answer $m^{\mathrm{act}}$. To avoid sampling $\hat \pi^{\mathrm{user}}$ many times to estimate $P_{\mathrm{cor}}$, we approximate it as a fixed value across all $\theta$ and $m^{\mathrm{act}}$, parameterized by the uncertainty of the agent's current particle belief $b = \{(\theta^n, w^n)\}_{n=1}^N$: $P_{\mathrm{cor}}(\theta, m^{\mathrm{act}}) \approx 1 - \max_{n} w^n$.
When the agent is uncertain (a near-uniform belief with low peak probability $\max_{n} w^n$) corrections are likely across the board, whereas a confident agent trusts that the user is largely satisfied.

Factoring $P_{\mathrm{cor}}$ out of the expectation gives the value of acting via $m^\mathrm{act}$ that REVOIR evaluates at decision time:
\begin{equation}
    Q^{\mathrm{act}}(s, b, m^{\mathrm{act}}) \approx R^{\mathrm{act}}(s, b, m^{\mathrm{act}}) + \big(1 - \max_{n} w^n \big) \textstyle\sum_{n=1}^{N} w^n\, \big(\, R^{\mathrm{act}}(s', b'_n) - C(m^{\mathrm{cor}}_n) \,\big),
\end{equation}
where $m^{\mathrm{cor}}_n$ is the simulated correction for intent $\theta^n$ and $b'_n$ is the resulting updated belief. Under \emph{agent termination}, the correction branch is absent ($P_{\mathrm{cor}} = 0$), and so the value of acting with $m^\mathrm{act}$ reduces to $Q^{\mathrm{act}}(s, b, m^{\mathrm{act}}) = R^{\mathrm{act}}(s, b, m^{\mathrm{act}})$.

\subsection{Baselines for CondAmbigQA}

We compare REVOIR against EIG maximization, bare prompting, generic inference-time reasoning algorithms, and models finetuned to ask clarifying questions:

\begin{itemize}[leftmargin=*,topsep=0pt]
    \item \textbf{REIGN.} A REVOIR variant whose (proxy) goal reward is $\hat G(s, \theta) = \log b(\theta)$. The expected goal reward is thus the negative entropy of the belief $-H(\hat{b})$ (see Appendix~\ref{sec:eig}), which we normalize to $1 - H(\hat{b}_t)/H_{\max} \in [0,1]$ to ensure a similar scale as our cost function. REIGN is thus the cost-aware counterpart of the EIG maximization strategy in Bayesian experimental design~\citep{lindley_measure_1956, mackay_information_1992, rainforth_modern_2024}, and a stand-in for other EIG-style algorithms like \citet{hu_uncertainty_2024}. Since EIG is indifferent to whether a sharper belief yields a better answer, comparing REIGN to REVOIR isolates the value of \emph{task-grounded} VoI.

    \item \textbf{Direct Answer.} The LLM is minimally prompted and directly answers the ambiguous question without any clarification.

    \item \textbf{ReAct.} One of the most widely adopted frameworks for LLM-based agents~\citep{yao_react_2022}, ReAct interleaves chain-of-thought reasoning with action execution. The ask-vs-act decision is entirely implicit in the LLM's generation with no explicit belief, no principled stopping criterion, and no cost accounting.

    \item \textbf{Self-Consistency Best-of-N ReAct (SC-BoN-ReAct).} Generates $K$ ReAct samples, then decides to ask or act via majority vote among the samples (a.k.a. self-consistency, \citet{wang_selfconsistency_2023}), with a fitted threshold for determining majority. Out of the $N \leq K$ majority samples, the best sample is selected with an LLM judge. This judge is similar to REVOIR's proxy goal reward $\hat{G}_{\mathrm{LLM}}$ when the majority votes to \emph{act}, and uses a separate question selection prompt when the majority votes to \emph{ask}.
    
    \item \textbf{Entropy Thresholding.} Shares the same particle belief as REVOIR but replaces VoI reasoning with a fixed stopping rule: ask until the normalized negative entropy $\frac{H_{\max}(b) - H(b)}{H_{\max}(b)}$ exceeds a threshold $\tau$ tuned on the validation set -- i.e., when the belief is ``concentrated enough''.

    \item \textbf{ReflectionDPO-ReAct.} An LLM finetuned via our adaptation of the ReflectionDPO algorithm~\citep{patel_adapt_2025} to CondAmbigQA, learning when to ask clarifying questions through a privileged-teacher (Oracle-ReAct) reflection mechanism (Appendix~\ref{sec:reflection-dpo}).

    \item \textbf{Oracle-ReAct.} A ReAct agent given the ground-truth extractive answer $y$ as privileged context, which rephrases it fluently without asking any questions. It serves both as the oracle teacher for ReflectionDPO training and as a non-interactive topline.
\end{itemize}

\section{Additional Results on CondAmbigQA}

\subsection{Full Results of Llama-3.1-8B on CondAmbigQA}
\label{sec:condambigqa_llama3.1-8b_full}

\textbf{Under \emph{agent termination}, REVOIR best trades-off correctness against clarification costs.} We first report results in the ``standard'' QA setting where the interaction terminates after the agent gives an answer. Figure~\ref{fig:condambigqa_llama3.1-8b_at} shows results on Llama-3.1-8B. Panel~(a) reveals a clear accuracy--efficiency frontier on which REVOIR Pareto-dominates REIGN: at matched word budgets, REVOIR achieves higher correctness, reaching its best correctness (0.55) in 5.6 questions, whereas REIGN keeps asking up to 9 questions yet plateaus below 0.54 in correctness. This is because REIGN's objective is insensitive to whether a sharper belief improves the answer, so it asks until uncertainty is low rather than until expected task reward stops rising.
Entropy Thresholding, which ignores costs and task reward, asks about 4 questions yet achieves the lowest correctness among multi-turn methods.
Panel~(b) shows the correctness increase of REVOIR over each baseline with standard errors (computed from 10,000 bootstrap samples), establishing that REVOIR's advantage  over most methods (considering answer correctness alone) is robust when accounting for dataset noise.

\begin{figure}[htbp]
    \centering

    \includegraphics[width=1\linewidth]{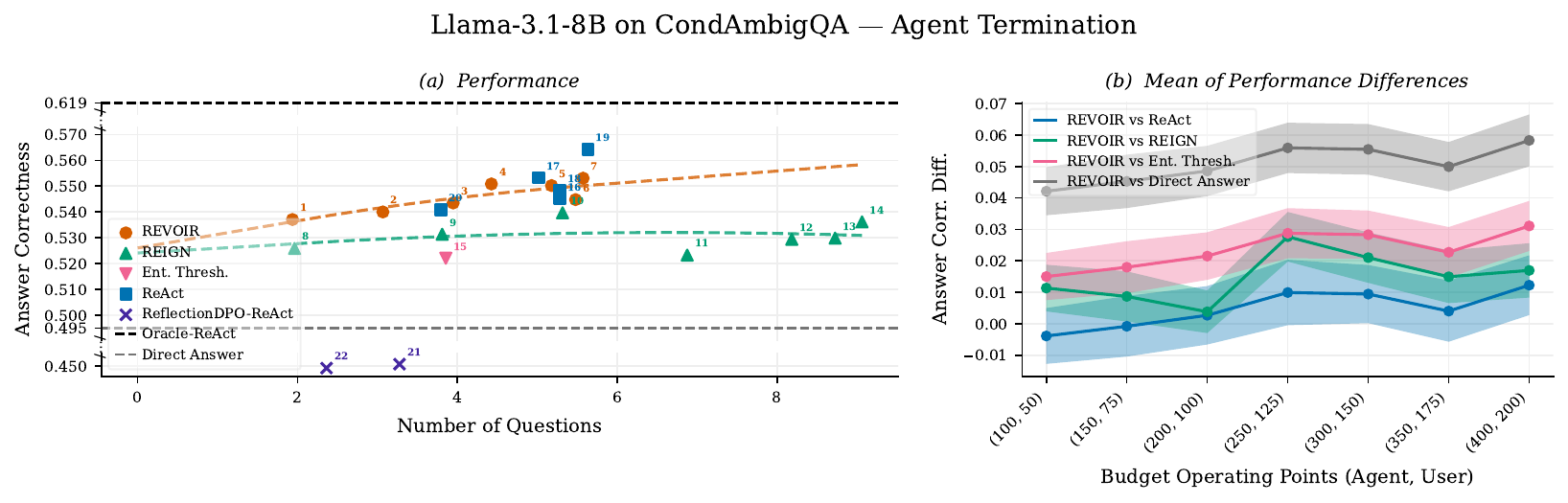}
    \caption{\textbf{CondAmbigQA results for Llama-3.1-8B under \emph{agent} termination.} \textbf{(a)} Answer correctness vs.\ average number of questions asked. Each numbered point corresponds to an agent/user word budget $(B_{\mathrm{agent}}, B_{\mathrm{user}})$ or (for ReAct) sample budget $K$; higher numbers map to higher values;  Table~\ref{tab:condambigqa_llama3.1-8b_at_details} lists exact values. 
    \textbf{(b)} Mean pairwise differences in answer correctness between REVOIR and each baseline across budget points. ReAct is a single operating point broadcast across the budget axis. Shaded bands are bootstrapped standard errors, with $N_{\mathrm{bootstrap}} = 10{,}000$.}
    \label{fig:condambigqa_llama3.1-8b_at}
    \vspace{10pt}

    \footnotesize
    \begin{tabular}{rlrrrr}
        \toprule
        \textbf{ } & \textbf{Method} & \shortstack{\textbf{Agent} \\ \textbf{Budget}} & \shortstack{\textbf{User} \\ \textbf{Budget}} & \shortstack{\textbf{No.} \\ \textbf{Questions}} & \shortstack{\textbf{Answer} \\ \textbf{Correct.}} \\
        \midrule
        \textcolor[HTML]{D55E00}{\textbf{1}} & REVOIR & 100 & 50 & 1.94 & 0.5370 \\
        \textcolor[HTML]{D55E00}{\textbf{2}} & REVOIR & 150 & 75 & 3.07 & 0.5400 \\
        \textcolor[HTML]{D55E00}{\textbf{3}} & REVOIR & 200 & 100 & 3.95 & 0.5435 \\
        \textcolor[HTML]{D55E00}{\textbf{4}} & REVOIR & 250 & 125 & 4.43 & 0.5509 \\
        \textcolor[HTML]{D55E00}{\textbf{5}} & REVOIR & 300 & 150 & 5.18 & 0.5502 \\
        \textcolor[HTML]{D55E00}{\textbf{6}} & REVOIR & 350 & 175 & 5.48 & 0.5448 \\
        \textcolor[HTML]{D55E00}{\textbf{7}} & REVOIR & 400 & 200 & 5.57 & 0.5531 \\
        \textcolor[HTML]{009E73}{\textbf{8}} & REIGN & 100 & 50 & 1.96 & 0.5258 \\
        \textcolor[HTML]{009E73}{\textbf{9}} & REIGN & 150 & 75 & 3.81 & 0.5313 \\
        \textcolor[HTML]{009E73}{\textbf{10}} & REIGN & 200 & 100 & 5.32 & 0.5397 \\
        \textcolor[HTML]{009E73}{\textbf{11}} & REIGN & 250 & 125 & 6.88 & 0.5232 \\
        \textcolor[HTML]{009E73}{\textbf{12}} & REIGN & 300 & 150 & 8.18 & 0.5294 \\
        \textcolor[HTML]{009E73}{\textbf{13}} & REIGN & 350 & 175 & 8.73 & 0.5298 \\
        \textcolor[HTML]{009E73}{\textbf{14}} & REIGN & 400 & 200 & 9.06 & 0.5361 \\
        \textcolor[HTML]{F06292}{\textbf{15}} & Ent. Thresh. & N/A & N/A & 3.86 & 0.5220 \\
        \textcolor[HTML]{0072B2}{\textbf{16}} & SC-BoN-ReAct (K=5) & N/A & N/A & 5.28 & 0.5451 \\
        \textcolor[HTML]{0072B2}{\textbf{17}} & SC-BoN-ReAct (K=10) & N/A & N/A & 5.02 & 0.5533 \\
        \textcolor[HTML]{0072B2}{\textbf{18}} & SC-BoN-ReAct (K=15) & N/A & N/A & 5.28 & 0.5482 \\
        \textcolor[HTML]{0072B2}{\textbf{19}} & SC-BoN-ReAct (K=20) & N/A & N/A & 5.63 & 0.5642 \\
        \textcolor[HTML]{0072B2}{\textbf{20}} & ReAct & N/A & N/A & 3.80 & 0.5409 \\
        \textcolor[HTML]{4527A0}{\textbf{21}} & ReflectionDPO-ReAct (ask-heavy) & N/A & N/A & 3.28 & 0.4507 \\
        \textcolor[HTML]{4527A0}{\textbf{22}} & ReflectionDPO-ReAct (ask-moderate) & N/A & N/A & 2.36 & 0.4492 \\
        \textcolor[HTML]{000000}{\textbf{23}} & Oracle-ReAct & N/A & N/A & 0.00 & 0.6194 \\
        \textcolor[HTML]{757575}{\textbf{24}} & Direct Answer & N/A & N/A & 0.00 & 0.4948 \\
        \bottomrule
    \end{tabular}
    \captionof{table}{Details for each operating point in Figure~\ref{fig:condambigqa_llama3.1-8b_at}}
    \label{tab:condambigqa_llama3.1-8b_at_details}
\end{figure}

\begin{figure}[htbp]
    \centering

    \includegraphics[width=1\linewidth]{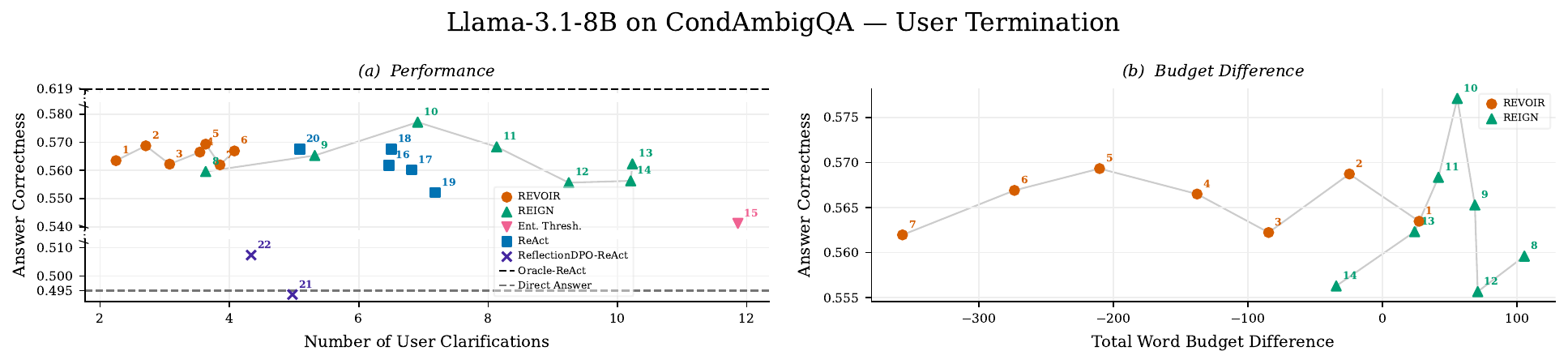}
    \caption{\textbf{CondAmbigQA results for Llama-3.1-8B under \emph{user} termination.} \textbf{(a)} Answer correctness vs.\ number of user clarifications (questions asked by the agent \emph{or} corrections issued by the user). Each numbered point corresponds to an agent/user word budget $(B_{\mathrm{agent}}, B_{\mathrm{user}})$ or (for ReAct) sample budget $K$; higher numbers map to higher values; Table~\ref{tab:condambigqa_llama3.1-8b_ut_details} lists exact values. \textbf{(b)} The operating points of budget-aware methods (REVOIR and REIGN) plotted against total word budget difference (words used minus configured budget); negative values indicate the agent acted before exceeding its budget for cost-free words.}
    \label{fig:condambigqa_llama3.1-8b_ut}
    \vspace{10pt}

    \footnotesize %
    \setlength{\tabcolsep}{3pt} %
    \begin{tabular*}{\linewidth}{@{\extracolsep{\fill}} rl rrrrrr @{}} %
    \toprule
    \textbf{ } & \textbf{Method} & \shortstack{\textbf{Agent} \\ \textbf{Budg.}} & \shortstack{\textbf{User} \\ \textbf{Budg.}} & \shortstack{\textbf{No.} \\ \textbf{Ques.}} & \shortstack{\textbf{No.} \\ \textbf{Corr.}} & \shortstack{\textbf{No.} \\ \textbf{Clar.}} & \shortstack{\textbf{Answer} \\ \textbf{Correct.}} \\
    \midrule
    \textcolor[HTML]{D55E00}{\textbf{1}} & REVOIR & 100 & 50 & 1.00 & 1.25 & 2.24 & 0.5635 \\
    \textcolor[HTML]{D55E00}{\textbf{2}} & REVOIR & 150 & 75 & 1.39 & 1.31 & 2.71 & 0.5687 \\
    \textcolor[HTML]{D55E00}{\textbf{3}} & REVOIR & 200 & 100 & 1.73 & 1.34 & 3.07 & 0.5622 \\
    \textcolor[HTML]{D55E00}{\textbf{4}} & REVOIR & 250 & 125 & 2.23 & 1.32 & 3.54 & 0.5665 \\
    \textcolor[HTML]{D55E00}{\textbf{5}} & REVOIR & 300 & 150 & 2.40 & 1.23 & 3.63 & 0.5693 \\
    \textcolor[HTML]{D55E00}{\textbf{6}} & REVOIR & 350 & 175 & 2.73 & 1.34 & 4.08 & 0.5669 \\
    \textcolor[HTML]{D55E00}{\textbf{7}} & REVOIR & 400 & 200 & 2.58 & 1.27 & 3.85 & 0.5620 \\
    \textcolor[HTML]{009E73}{\textbf{8}} & REIGN & 100 & 50 & 2.24 & 1.39 & 3.63 & 0.5596 \\
    \textcolor[HTML]{009E73}{\textbf{9}} & REIGN & 150 & 75 & 4.08 & 1.24 & 5.32 & 0.5653 \\
    \textcolor[HTML]{009E73}{\textbf{10}} & REIGN & 200 & 100 & 5.66 & 1.25 & 6.91 & 0.5771 \\
    \textcolor[HTML]{009E73}{\textbf{11}} & REIGN & 250 & 125 & 6.84 & 1.29 & 8.13 & 0.5684 \\
    \textcolor[HTML]{009E73}{\textbf{12}} & REIGN & 300 & 150 & 7.77 & 1.48 & 9.25 & 0.5557 \\
    \textcolor[HTML]{009E73}{\textbf{13}} & REIGN & 350 & 175 & 8.55 & 1.68 & 10.23 & 0.5623 \\
    \textcolor[HTML]{009E73}{\textbf{14}} & REIGN & 400 & 200 & 8.53 & 1.67 & 10.21 & 0.5563 \\
    \textcolor[HTML]{F06292}{\textbf{15}} & Ent. Thresh. & N/A & N/A & 9.91 & 1.95 & 11.86 & 0.5412 \\
    \textcolor[HTML]{0072B2}{\textbf{16}} & SC-BoN-ReAct (K=5) & N/A & N/A & 5.88 & 0.58 & 6.47 & 0.5618 \\
    \textcolor[HTML]{0072B2}{\textbf{17}} & SC-BoN-ReAct (K=10) & N/A & N/A & 6.16 & 0.66 & 6.82 & 0.5602 \\
    \textcolor[HTML]{0072B2}{\textbf{18}} & SC-BoN-ReAct (K=15) & N/A & N/A & 5.91 & 0.59 & 6.50 & 0.5676 \\
    \textcolor[HTML]{0072B2}{\textbf{19}} & SC-BoN-ReAct (K=20) & N/A & N/A & 6.45 & 0.73 & 7.18 & 0.5522 \\
    \textcolor[HTML]{0072B2}{\textbf{20}} & ReAct & N/A & N/A & 4.45 & 0.64 & 5.09 & 0.5675 \\
    \textcolor[HTML]{4527A0}{\textbf{21}} & ReflectionDPO-ReAct (ask-heavy) & N/A & N/A & 4.14 & 0.83 & 4.97 & 0.4934 \\
    \textcolor[HTML]{4527A0}{\textbf{22}} & ReflectionDPO-ReAct (ask-moderate) & N/A & N/A & 3.47 & 0.86 & 4.33 & 0.5074 \\
    \textcolor[HTML]{000000}{\textbf{23}} & Oracle-ReAct & N/A & N/A & 0.00 & 0.00 & 0.00 & 0.6194 \\
    \textcolor[HTML]{757575}{\textbf{24}} & Direct Answer & N/A & N/A & 0.00 & 0.00 & 0.00 & 0.4948 \\
    \bottomrule
    \end{tabular*}
    \captionof{table}{Details for each operating point in Figure~\ref{fig:condambigqa_llama3.1-8b_ut}}
    \label{tab:condambigqa_llama3.1-8b_ut_details}
\end{figure}

\textbf{Under \emph{user termination}, REVOIR avoids costly questioning by being correction-aware.} Figure~\ref{fig:condambigqa_llama3.1-8b_ut} presents results in the more realistic setting where the user can correct an inaccurate answer instead of terminating the conversation. While all interactive baselines anticipate user corrections, panel (a) shows that REVOIR rationally adapts to user corrections, using only 2--4 total user clarifications (i.e., questions asked + corrections given) with comparable correctness (0.56--0.57) to the agent termination setting. Panel~(b) further shows that REVOIR's word count is substantially less than the budgeted amount (negative budget differences) --- it acts well within budget by rationally relying on user corrections rather than asking more questions. REIGN, in contrast, keeps asking many questions: at its best it beats REVOIR slightly in correctness (0.58 vs 0.57), but only by relying on 6.9 clarifications compared to REVOIR's 3.6. Entropy Thresholding reaches 12 clarifications yet falls below REVOIR in correctness, having no way to reason that correction is cheaper than asking. 

\subsubsection*{Llama-3.1-8B Token Usage}
\begin{table}[H]
    \centering
    \begin{tabular}{lrrrr}
    \toprule
    Method & \makecell[r]{Input\\Tokens} & \makecell[r]{Output\\Tokens} & \makecell[r]{Total\\Tokens} & \makecell[r]{Answer\\Correctness} \\
    \midrule
    REVOIR (100, 50) & 7.0K & 3.5K & 10.6K & 0.5371 \\
    REVOIR (150, 75) & 9.6K & 4.5K & 14.2K & 0.5400 \\
    REVOIR (200, 100) & 11.8K & 5.1K & 16.8K & 0.5435 \\
    REVOIR (250, 125) & 12.7K & 5.5K & 18.2K & 0.5509 \\
    REVOIR (300, 150) & 14.8K & 6.1K & 20.9K & 0.5502 \\
    REVOIR (350, 175) & 15.3K & 5.9K & 21.3K & 0.5448 \\
    REVOIR (400, 200) & 15.6K & 5.9K & 21.5K & 0.5531 \\
    \midrule
    ReAct & 2.3K & 0.3K & 2.6K & 0.5409 \\
    SC-BoN-ReAct (K=5) & 17.3K & 1.6K & 18.9K & 0.5451 \\
    SC-BoN-ReAct (K=10) & 30.5K & 3.1K & 33.6K & 0.5533 \\
    SC-BoN-ReAct (K=15) & 47.4K & 4.9K & 52.3K & 0.5482 \\
    SC-BoN-ReAct (K=20) & 67.1K & 6.9K & 73.9K & 0.5642 \\
    \bottomrule
    \end{tabular}
    \vspace{6pt}
    \caption{Token usage and answer correctness for REVOIR and SC-BoN-ReAct on Llama-3.1-8B under agent termination. SC-BoN-ReAct's token cost grows linearly with $N$, reaching ${\sim}74$K tokens at $N{=}20$ --- over $3\times$ the cost of the most expensive REVOIR operating point --- yet correctness gains are marginal and non-monotonic. REVOIR at comparable token budgets achieves similar or greater correctness, demonstrating that principled ask-vs-act decisions are more token-efficient than inference-time scaling within a fixed policy.}

    \label{tab:token-usage}
\end{table}

\subsection{Gemini 3.5 Flash-Lite and Claude Haiku 4.5 on CondAmbigQA}
\label{sec:condambigqa_gemini_claude}

\begin{figure}[h]
    \centering
    \captionsetup{aboveskip=2pt, belowskip=5pt}

    \begin{subfigure}{0.49\textwidth}
        \centering
        \includegraphics[width=\linewidth]{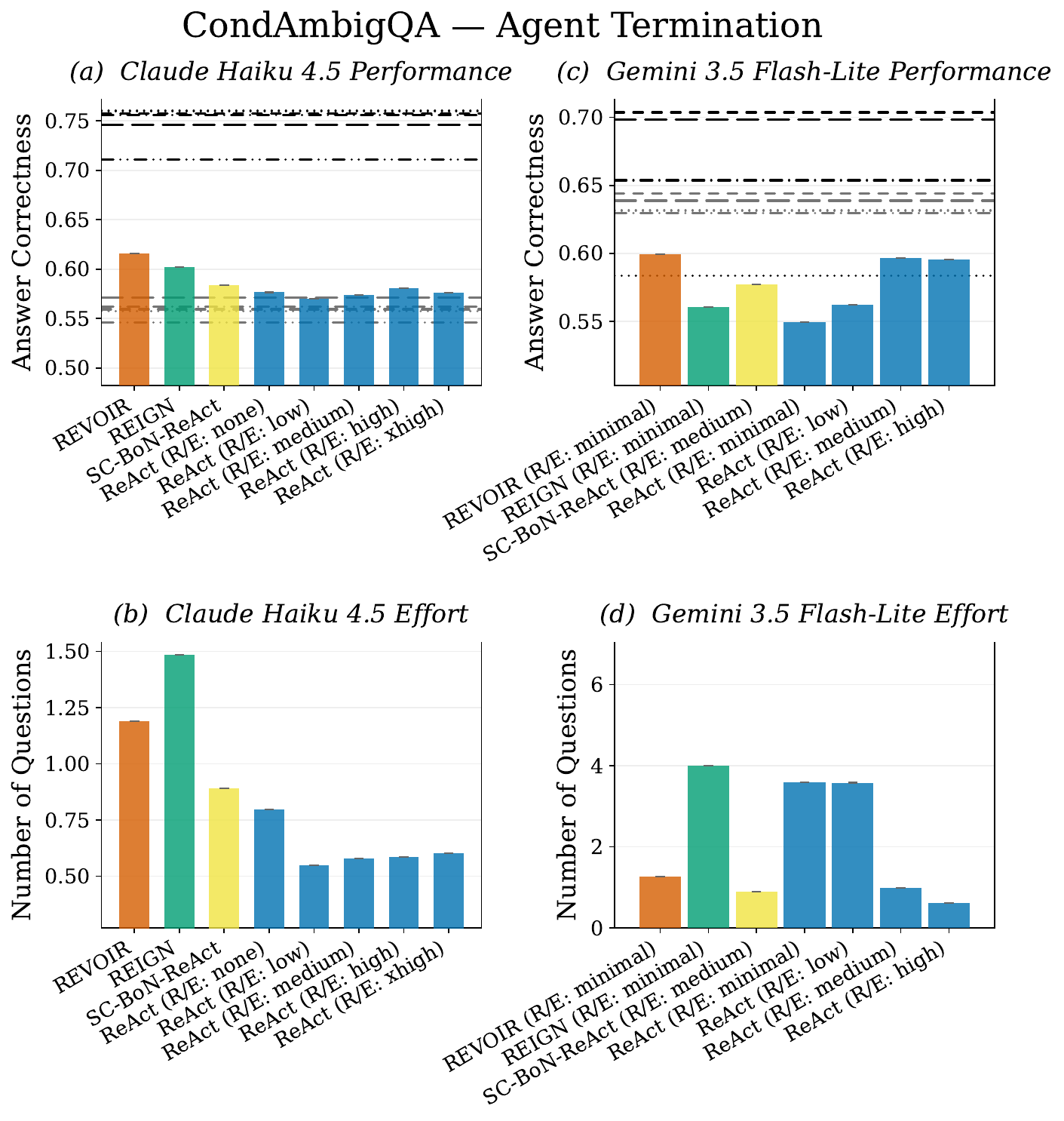}
    \end{subfigure}
    \hfill
    \begin{subfigure}{0.49\textwidth}
        \centering
        \includegraphics[width=\linewidth]{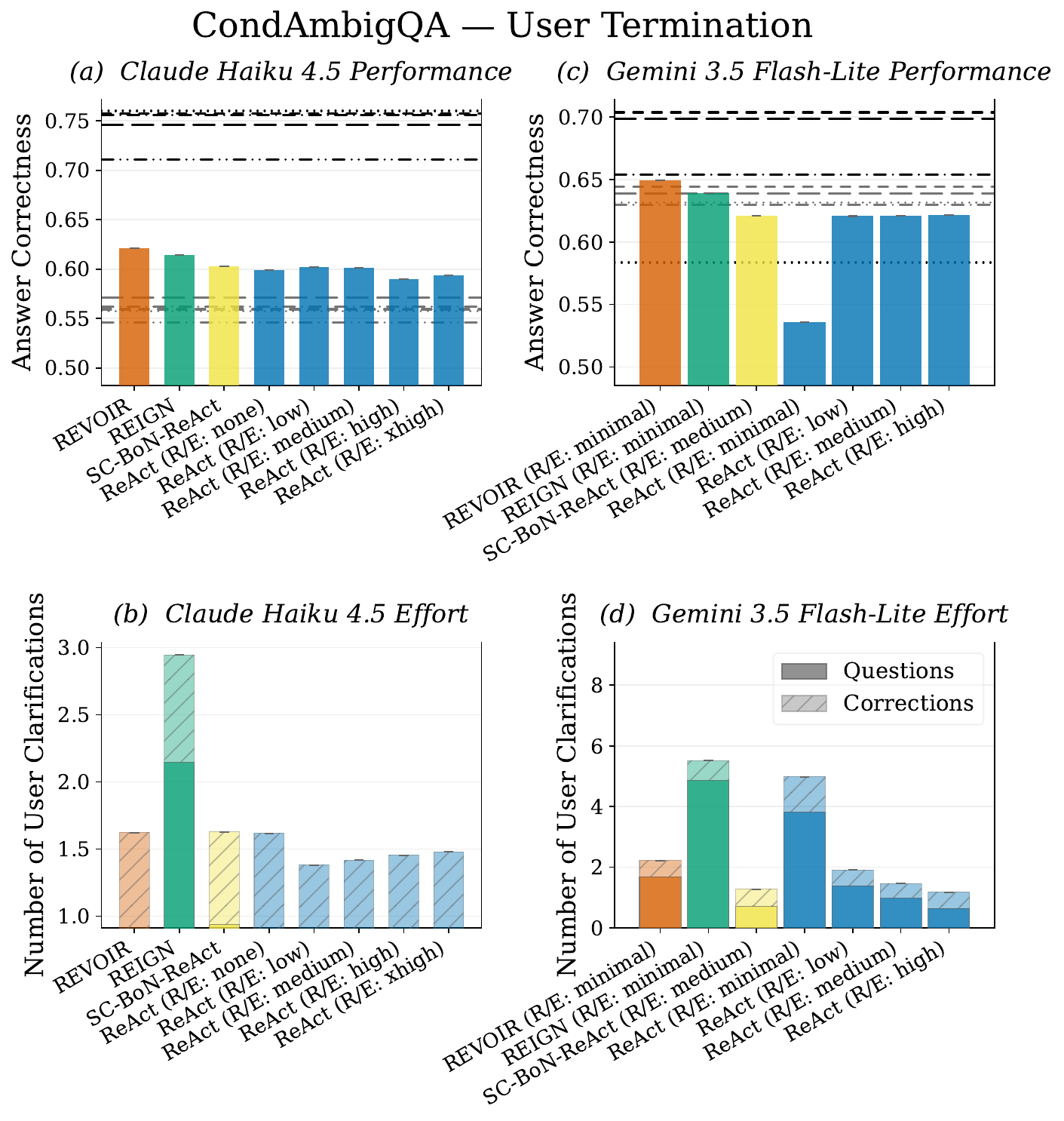}
    \end{subfigure}
    \caption{\textbf{CondAmbigQA results with Claude Haiku 4.5 and  Gemini 3.5 Flash-Lite}. ReAct variants and non-interactive base/top lines at available (Gemini 3.5 Flash-Lite) and emulated (Claude Haiku 4.5) reasoning-effort (R/E) levels. Horizontal lines in the top panels correspond to baselines and toplines across R/E levels: \textbf{\textcolor[HTML]{757575}{gray lines}} represent the \textbf{Direct Answer} baselines, and \textbf{black lines} represent \textbf{Oracle-ReAct} toplines, with line styles indicating effort levels: {\scriptsize$(-\ \cdot\ \cdot\ -)$} for \texttt{none} (or \texttt{minimal}), {\scriptsize$(\mbox{---}\ \mbox{---})$} for \texttt{low}, {\scriptsize$(-\ \cdot\ -)$} for \texttt{medium}, {\scriptsize$(\cdot\ \cdot\ \cdot)$} for \texttt{high}, and {\scriptsize$(-\ \ -)$} for \texttt{xhigh}.}
    \label{fig:condambigqa_gemini3.5-flashlite_claude-haiku4.5}
    \vspace{-6pt}
\end{figure}

We replicate experiments in Section~\ref{sec:condambigqa-exp} on Gemini 3.5 Flash-Lite and Claude Haiku 4.5, except for the Entropy Thresholding method due to cost constraints (Figure~\ref{fig:condambigqa_gemini3.5-flashlite_claude-haiku4.5}). Since it is not possible to disable reasoning for Gemini 3.5 Flash-Lite, we evaluate REVOIR and REIGN with \texttt{minimal} reasoning. For SC-BoN-ReAct, we evaluate with \texttt{medium} reasoning level, since lower levels yield empty responses due to context-window-overflowed from reasoning. For Claude Haiku 4.5, we emulate thinking levels with the following token budgets for its \textit{extending thinking mode}: \{\texttt{low}: 1024, \texttt{medium}: 4096, \texttt{high}: 8192, \texttt{xhigh}: 12288\}.

We find that across both models and termination settings, REVOIR generally achieves the highest \texttt{AnswerCorrectness} among interactive methods with minimal/no reasoning. Yet, each model exhibits different quirks. Both models outperform GPT-5.4-mini non-interactive toplines: Gemini 3.5 Flash-Lite bests at $0.70$, Claude Haiku 4.5 bests at $0.76$.  Surpisingly, however, DirectAnswer baselines with non-\texttt{minimal} reasoning efforts with Gemini 3.5 Flash-Lite also outperform the interactive methods we evaluated --- not what we expected of a baseline. Our qualitative inspections suggest that, in non-interactive settings, Gemini 3.5 Flash-Lite produces shotgun-style answers that are significantly more verbose than other frontier models (Figure~\ref{fig:condambigqa_ni_avg_words_gpt54-mini_gemini3.5-flashlite_claude-haiku4.5}), and cover multiple potential information-seeking intents.

For instance, a direct answer by Gemini 3.5 Flash-Lite with \texttt{low} reasoning effort to the question ``Who played Santa in Miracle on 34th St?'' is as follows:
\begin{quote}
     ``Multiple actors have played Santa Claus in the various versions of \textit{Miracle on 34th Street}:
     \begin{itemize}
         \item \textbf{1947 Original Movie:} \textbf{Edmund Gwenn} played Kris Kringle, and he won an Academy Award for his performance.
         \item \textbf{1994 Remake:} \textbf{Richard Attenborough} played the role in the 94 feature film.
         \item \textbf{1973 TV Movie:} \textbf{Sebastian Cabot} played Kris Kringle.
         \item \textbf{1959 TV Broadcast (The Ford Star Jubilee):} \textbf{Macdonald Carey} played the role (though the 1955 live TV version featured \textbf{Thomas Mitchell} as Santa).''
     \end{itemize}
\end{quote}

In contrast, GPT-5.4-mini's answer to the same question with \texttt{low} reasoning effort is 
\begin{quote}
    ``In \textit{Miracle on 34th Street} (1947), Santa was played by \textbf{Edmund Gwenn}. He even won an Academy Award for the role.''
\end{quote}

While Claude Haiku 4.5's answer, also with \texttt{low} reasoning effort, is 
\begin{quote}
    In the classic 1947 film ``Miracle on 34th Street,'' \textbf{Edmund Gwenn} played Santa Claus (Kris Kringle). He won an Academy Award for Best Supporting Actor for the role.    
    
    In the 1994 remake, \textbf{Richard Attenborough} played the same character.
\end{quote}

\begin{figure}
    \centering    \includegraphics[width=0.75\linewidth]{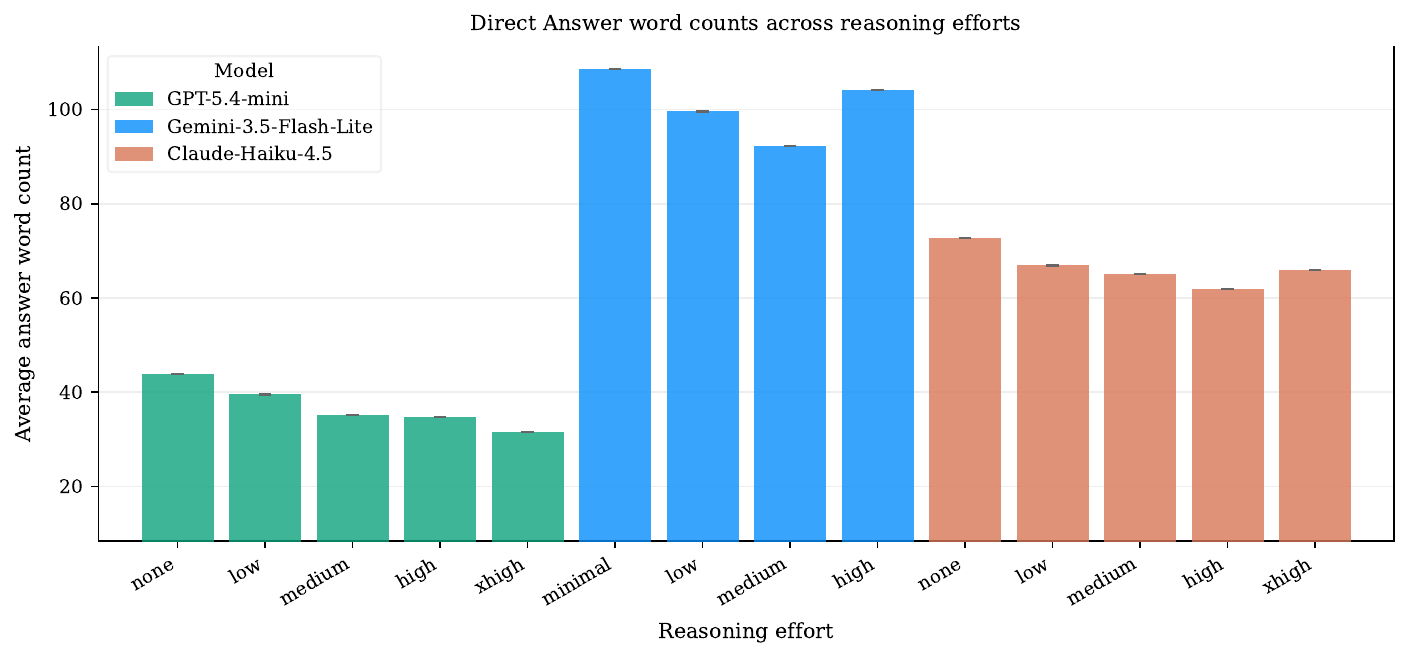}
    \caption{Average word counts of non-interactive one-shot queries from frontier models across reasoning efforts.}
\label{fig:condambigqa_ni_avg_words_gpt54-mini_gemini3.5-flashlite_claude-haiku4.5}
\end{figure}

While such hedging-based answer strategies are appropriate under certain use cases \textit{when the user is mentally prepared for an influx of information} (e.g., using a search engine), it might not always be appropriate. Regardless, our current evaluation rubrics do not discount answers for length or intent-irrelevance as long as they stay on-topic and cover the target information-seeking intent (Appendix~\ref{sec:condambigqa_eval}), thus benefiting the shotgun approach.

Claude Haiku 4.5, on the other hand, uses very few questions across all interactive methods and settings. It also practically stops asking (except for REIGN) when user corrections are anticipated. These behaviors are, perhaps, reflective of different reasoning training strategies and heuristics. Nevertheless, among interactive methods, REVOIR still achieves the best performance by reasoning about the value of information associated with each action.

\subsection{Results Under Inattentive User Simulator
}\label{sec:condambigqa_inatt_user}

\begin{figure}[h]
    \centering
    \captionsetup{aboveskip=2pt, belowskip=0pt}
    \begin{subfigure}{0.49\textwidth}
        \centering
        \includegraphics[width=\linewidth]{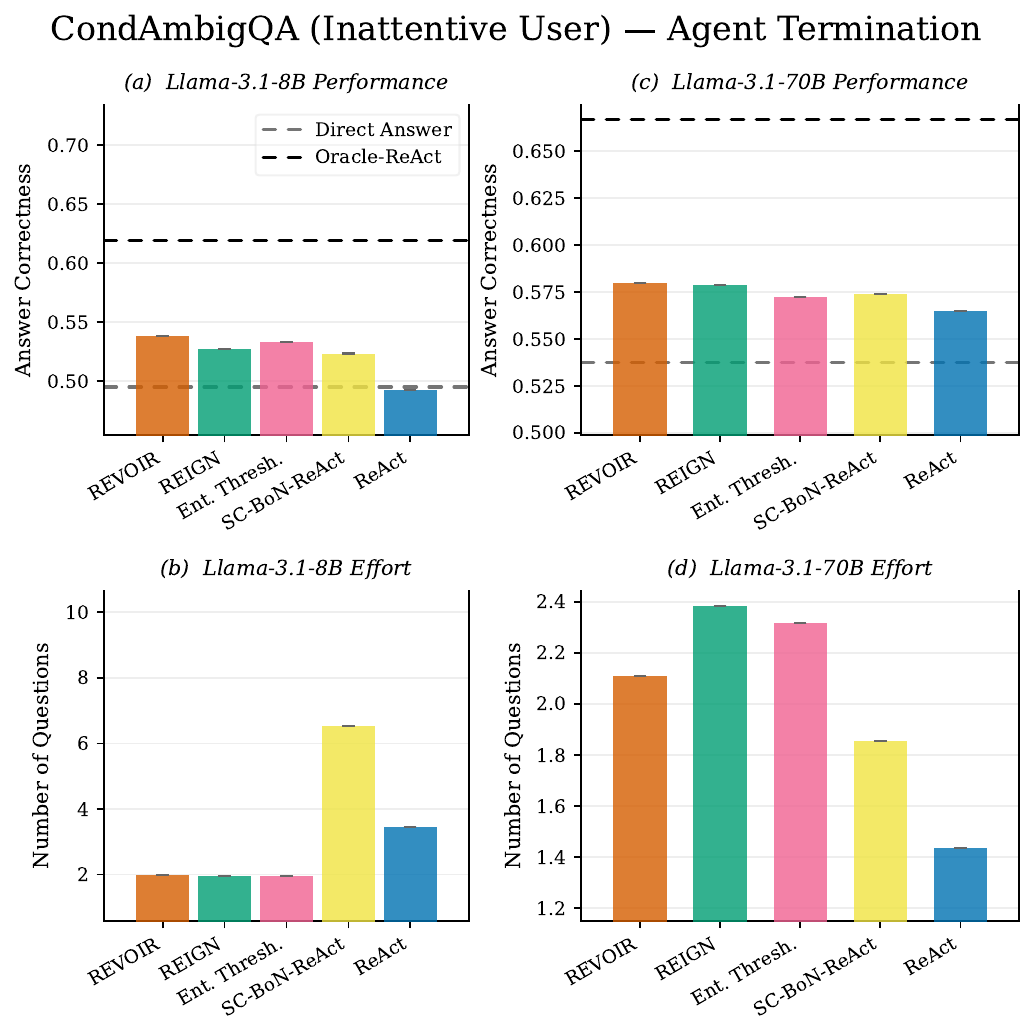}
        \caption{Agent termination.}
        \label{fig:condambigqa_at_inatt_llama3.1-8b-70b}
    \end{subfigure}
    \hfill
    \begin{subfigure}{0.49\textwidth}
        \centering
        \includegraphics[width=\linewidth]{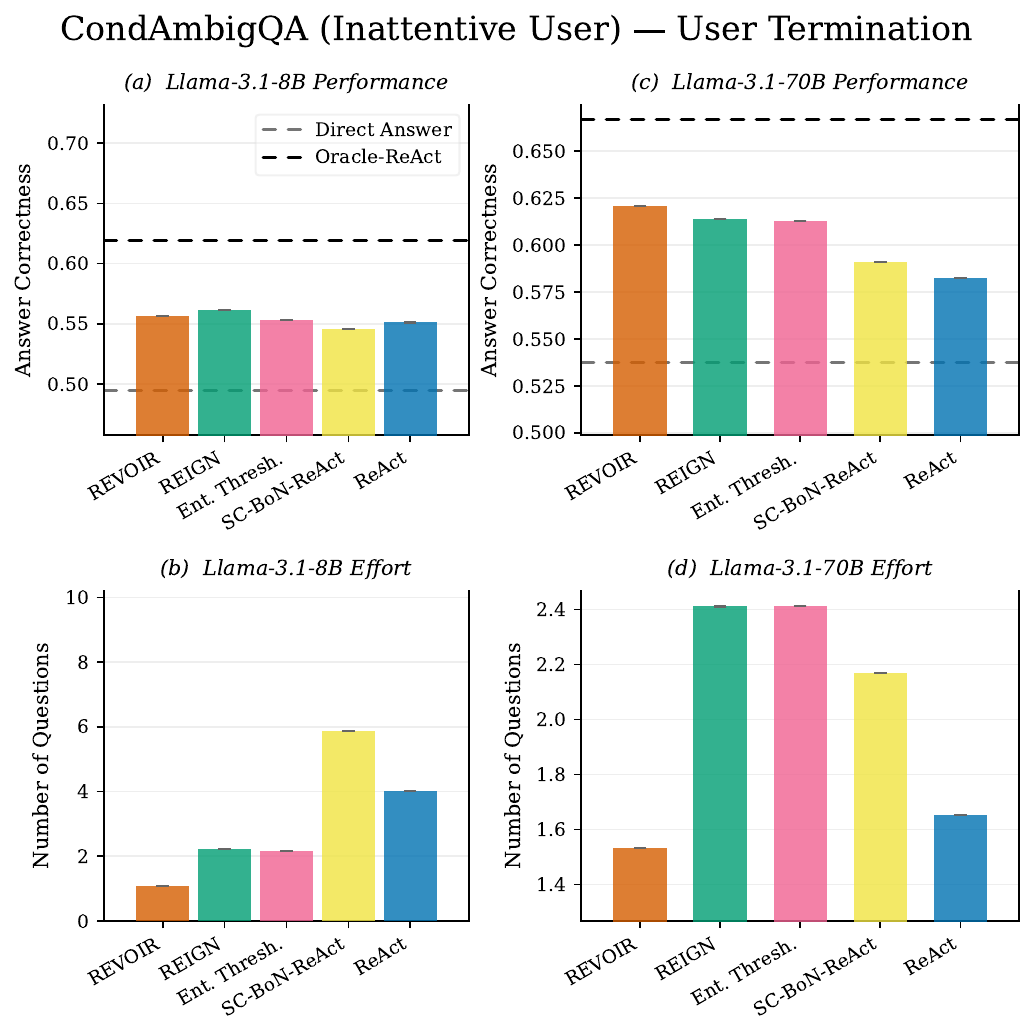}
        \caption{User termination.}
        \label{fig:condambigqa_ut_inatt_llama3.1-8b-70b}
    \end{subfigure}
    \caption{CondAmbigQA results with an \textit{inattentive user} simulation ($p_{dismissive}=0.2$) for Llama-3.1-8B and Llama-3.1-70B.}
    \label{fig:condambigqa_inatt_llama3.1-8b-70b}
\end{figure}

We introduced an inattentive runtime user who sporadically gives one of the dismissive responses with probability $p_{dismissive}=0.2$ at each clarification turn:
\begin{itemize}[topsep=0pt,itemsep=1pt]
    \item ``Not sure, whatever you think is right.''
    \item ``I don't remember exactly.''
    \item ``Just go with your best guess.''
    \item ``Hmm, I'm not really paying attention right now.''
\end{itemize}

This noise is absent from the user modeled internally by REVOIR, creating a direct internal-runtime simulator mismatch. Due to compute and cost constraints, we only replicate the experiments in Section~\ref{sec:condambigqa-exp} under the inattentive user simulator for Llama-3.1-8B and Llama-3.1-70B, with planning budgets of $(B_{\mathrm{agent}}, B_{\mathrm{user}}) = (100, 50)$ (i.e., 100 assistant words and 50 user words). \texttt{AnswerCorrectness} remains independent of dialogue length and cost.

Our findings in Section~\ref{sec:condambigqa-exp} persist under unmodeled responses (Figure~\ref{fig:condambigqa_inatt_llama3.1-8b-70b}). REVOIR has the highest correctness under agent termination for both backbones. Under user termination, it matches REIGN within 0.004 while asking about half as many questions for Llama-3.1-8B; for Llama-3.1-70B, it has the highest correctness and fewest total user interventions. Further results in Figure~\ref{fig:condambigqat_inatt_llama3.1-8b_across_p} from Llama-3.1-8B with $p_{dismissive} \in \{0.3, 0.4, 0.5, 0.6, 0.7, 0.8, 0.9\}$ confirm the pattern. Notably, only REVOIR and REIGN are robust to dismissive noise, while ReAct baselines visibly degrade in \texttt{AnswerCorrectness} as $p_{dismissive}$ increases.

\begin{figure}[h]
    \centering
    \captionsetup{aboveskip=2pt, belowskip=0pt}
    \begin{subfigure}{\textwidth}
        \centering
        \includegraphics[width=\linewidth]{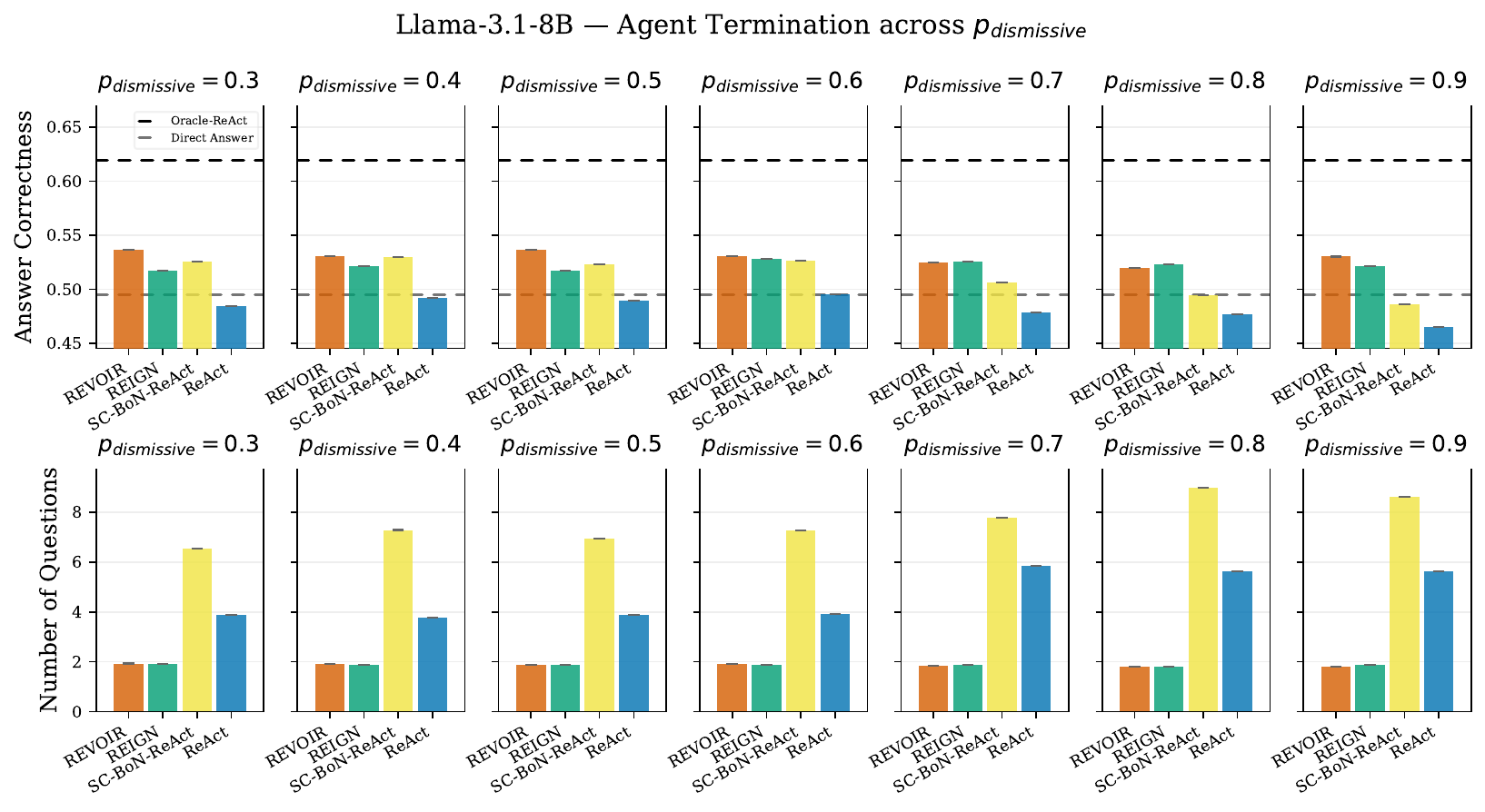}
        \caption{Agent termination.}
        \label{fig:condambigqa_at_inatt_llama3.1-8b_across_p}
    \end{subfigure}
    \vfill
    \begin{subfigure}{\textwidth}
        \centering
        \includegraphics[width=\linewidth]{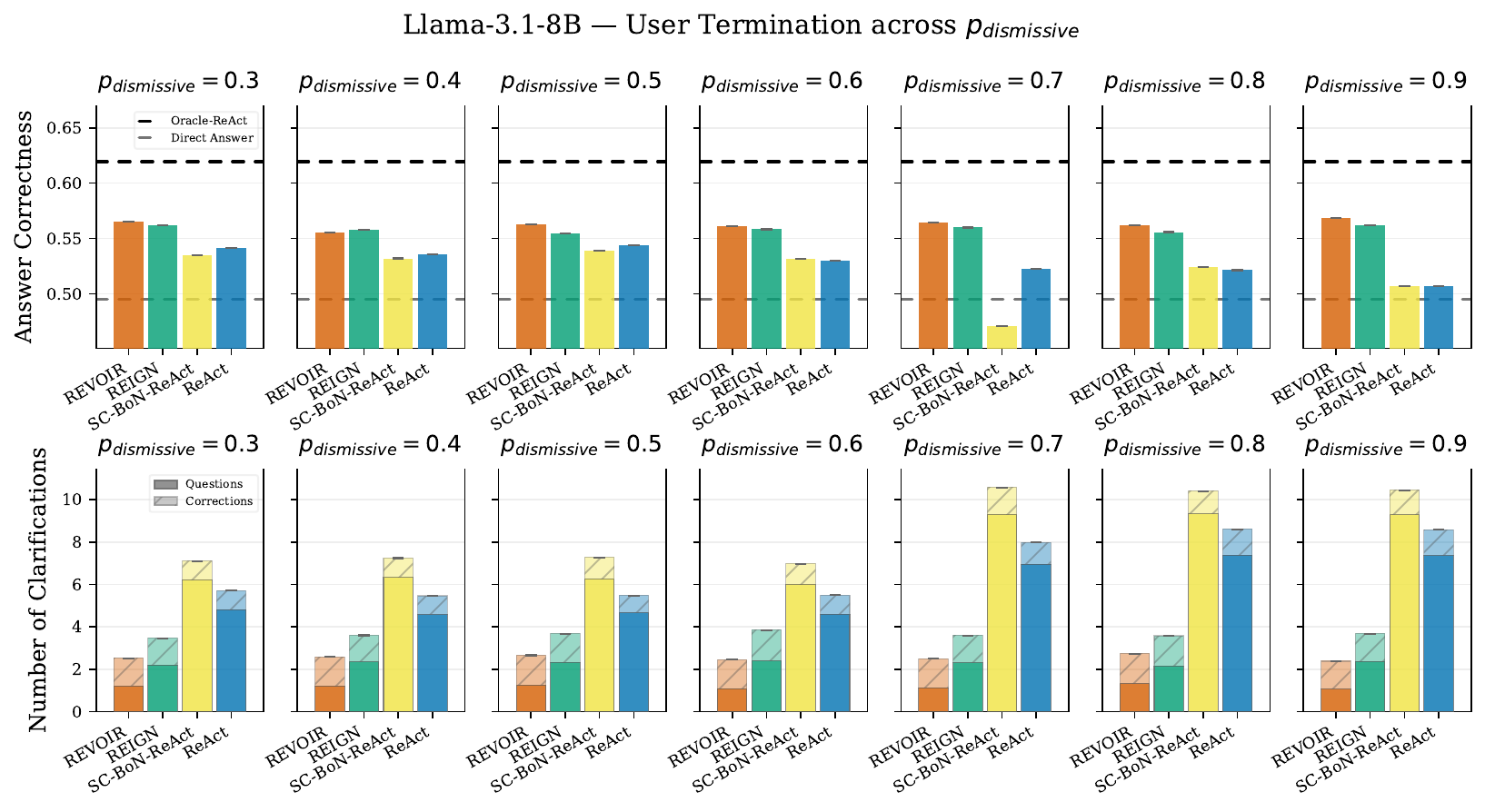}
        \caption{User termination.}
        \label{fig:condambigqa_ut_inatt_llama3.1-8b_across_p}
    \end{subfigure}
    \caption{Llama-3.1-8B CondAmbigQA results in \textit{inattentive user} simulations with $p_{dismissive} \in \{0.3, 0.4, 0.5, 0.6, 0.7, 0.8, 0.9\}$.}
    \label{fig:condambigqat_inatt_llama3.1-8b_across_p}
\end{figure}

\section{Validating the LLM-simulated user and LLM judge with human annotations}
\label{sec:pilot-human-replay}

We conducted a pilot conversation-replay study with human annotators to assess: (i) the user simulator's decision to terminate a conversation, (ii) the \texttt{AnswerCorrectness} rating of final answers. Annotators read logged GPT-5.4-mini conversations generated by REVOIR and ReAct (R/E: High, our strongest baseline) in the user termination setting.

Nine annotators were recruited from members of the university, and completed annotations for eight distinct item sets (with one response dropped due to an erroneous duplication of one item set between annotators). 
All reported intervals are two-sided 95\% Wilson score intervals for binomial proportions, as Wilson intervals remain well behaved with modest samples and rates near zero or one. 

\subsection{Does the user simulator terminate where a person would?}
\label{sec:replay-termination}

For the first part of the study, each annotator is instructed to roleplay the user who asked the original ambiguous question. For every conversation each annotator is provided:

\begin{enumerate}
    \item \textit{The context you had in mind -- but you didn't elaborate when asking}: the long-form disambiguating condition attached to the question;
    \item \textit{The unambiguous question you could have asked -- but didn't};
    \item \textit{How the conversation actually went}: the conversation transcript with interleaved turns (\textbf{Q:} the user, \textbf{A:} the assistant), ending with the assistant's final answer.
\end{enumerate}

The gold answer is withheld from the annotator for this part of the study. After reading each episode, annotators answer a binary query:

\begin{quote}
    \textbf{Would the assistant's last answer have satisfied you?}

    \textit{Yes — that would have answered me \\ No — that would not have answered me}
\end{quote}

Episodes are stratified by terminal outcome: \textit{accepted on the first answer}, \textit{accepted after clarification}, and \textit{never accepted}. Each annotator is tasked with 20 episodes: 10 uniquely sampled episodes balanced across the two policies and 10 \textit{anchor} episodes shared among all annotators. The anchors measure how often \textit{people agree with each other} on this task. After per-form
deduplication and removal of the eight attention-check responses, the analysis
contains 160 substantive judgements: 80 judgements on 80 form-specific episodes
and 80 judgements on the same 10 anchors. These judgements therefore cover 90
distinct episodes, rather than 160 distinct episodes.

For each episode, we compare one human verdict with the simulator's verdict.
The form-specific episodes have one human judgement each; the anchors use the
majority judgement across annotators. We do not break an evenly split vote
toward either class. One anchor episode received a tied 4--4 judgments is subsequently dropped from the analysis, leaving the 89 verdicts in Table~\ref{tab:satisfaction-outcomes}. Of the remaining anchors, 5 episodes get unanimous human-simulator agreement, 1 episode gets unanimous human-simulator disagreement, and 3 episodes get majority human-simulator agreement (two 5-3 splits and one 6-2 split). 

\begin{table}[h]
\centering
\small
\caption{Human and simulated-user satisfaction verdicts.}
\label{tab:satisfaction-outcomes}
\begin{tabular}{lrrr}
\toprule
 & \multicolumn{2}{c}{\textbf{Simulated user}} & \\
\cmidrule(lr){2-3}
\textbf{Human verdict} & \textbf{Satisfied} & \textbf{Not satisfied} & \textbf{Total} \\
\midrule
Human: satisfied     & \textbf{64} & 10 & 74 \\
Human: not satisfied & 7 & \textbf{8} & 15 \\
\midrule
\textbf{Total} & 71 & 18 & 89 \\
\bottomrule
\end{tabular}
\end{table}

The human-simulator agreement is $72/89=0.809$ (95\% CI $[0.72,0.88]$) across both simulator decisions. On the 71 episodes that the simulated user accepted, the human verdict also accepts 64, giving an acceptance agreement of $64/71=0.901$ (95\% CI $[0.81,0.95]$). Agreement is $26/29=0.897$ (95\% CI $[0.74,0.96]$) for episodes accepted after corrections and $38/42=0.905$ ($[0.78,0.96]$) for episodes accepted on the first answer. 

The disagreements are asymmetric. There are 10 episodes on which the simulator rejects an answer that the human verdict accepts, compared with 7/15 episodes on which the simulator accepts an answer that the human verdict rejects. 6 stricter simulator decisions concern answers that appear satisfactory but \textit{conflict with the grounding condition}. Of the other five, four are genuine over-strict decisions on answers not conflicting the grounding condition, and one is due to a defective grounding condition that does not resolve ambiguity. Thus, the simulator more often errs toward continuing a conversation instead of ending it prematurely, while human is more lenient and concur on only $8/18=0.444$ of simulator rejections (95\% CI $[0.25,0.66]$).

\subsection{Does the LLM judge track human answer-quality judgments?}
\label{sec:replay-answer-judge}

In the second part, annotators are shown pairs of conversations corresponding to the same question, along with the grounding context and final answers from the two policiesin randomized order. They select the answer that they perceive as better, or mark the answers as equally good. Each retained annotator judges 12 pairs, yielding 96 comparisons across eight distinct responses.

Annotators mark 39/96 comparisons as equally good. On the 57 comparisons with an expressed preference, that preference agrees with the direction of \texttt{AnswerCorrectness} in 41 cases. Table~\ref{tab:answer-judge-outcomes} summarizes these outcomes.

\begin{table}[h]
\centering
\small
\caption{Agreement between human answer-quality preferences and
\texttt{AnswerCorrectness}.}
\label{tab:answer-judge-outcomes}
\begin{tabular}{lrrr}
\toprule
 & \textbf{Agree} & \textbf{Disagree} & \textbf{Total} \\
\midrule
Human preference vs. LLM judge ranking & \textbf{41} & 16 & 57 \\
\bottomrule
\end{tabular}
\end{table}

On non-tied comparisons, human preferences and \texttt{AnswerCorrectness} agree at a rate of $41/57=0.719$ (95\% CI $[0.59,0.82]$). This provides evidence that the LLM judge tracks a human-recognizable notion of answer quality when a annotator perceives a difference. 

\section{CondAmbigQA Prompt Library and Episode Lifecycle}
\label{sec:condambigqa_prompts}

This section presents the full set of prompt templates used in the ambiguous question-answering domain, together with descriptions of the episode lifecycle and user simulator. Listings show the verbatim templates with run-time values substituted by \texttt{\{placeholders\}}.

\subsection{Episode Lifecycle}
\label{sec:condambigqa_lifecycle}

Each episode presents one ambiguous query $m_1$ whose latent intent $\theta$ is one of several valid scoping conditions. At each turn the agent updates its particle belief $\hat{b}_t = \{(\theta^n, w^n)\}$ over interpretations (Section~\ref{sec:belief-updating}) and chooses, via the cost-adjusted VoI rule of Section~\ref{sec:decision}, between the \emph{ask} policy $\pi^{\mathrm{ask}}$ (pose a clarifying question $m^{\mathrm{ask}}$, receive a user answer $m^{\mathrm{ans}}$) and the \emph{act} policy $\pi^{\mathrm{act}}$ (commit to a final answer $m^{\mathrm{act}}$). A hard budget caps the number of clarifying questions per episode. Under \textbf{agent termination}, $m^{\mathrm{act}}$ ends the episode; under \textbf{user termination}, the simulated user may instead issue a correction $m^{\mathrm{cor}}$ and let the agent answer again (Appendix~\ref{sec:condambigqa_user_simulator}). The final answer is scored by the $\mathrm{AnswerCorrectness}$ judge (Appendix~\ref{sec:condambigqa_eval}).

\subsection{User Simulator}
\label{sec:condambigqa_user_simulator}

The user model $\pi^{\mathrm{user}}(a^{\mathrm{user}}_t \mid s_{t-1}, \theta)$ is an LLM grounded in the ground-truth scoping condition $\theta$ and its retrieval context, which the agent never observes.

\paragraph{Answering clarifying questions.}
In response to a question $m^{\mathrm{ask}}$, the user answers strictly from the ground-truth condition, and produces a dismissal when the question cannot be answered from it. This behaviour is shared by both termination variants.
\beigefile[label=prt:condambigqa-user-system]{CondAmbigQA User Simulator System Prompt}{condambigqa_prompts/instruct_user_system.txt}
\beigefile[label=prt:condambigqa-user-message]{CondAmigQA User Simulator Message}{condambigqa_prompts/instruct_user_request.txt}

\paragraph{Termination and correction (user termination).}
Under user termination, after the agent commits to $m^{\mathrm{act}}$ the user first judges whether the answer matches the intended interpretation,
\beigefile[label=prt:condambigqa-user-term-system]{CondAmbigQA User Simulator Termination System Prompt}{condambigqa_prompts/satisfied_system.txt}
\beigefile[label=prt:condambigqa-user-term-message]{CondAmbigQA User Simulator Termination Message}{condambigqa_prompts/satisfied_request.txt}
and, if unsatisfied and the clarification budget remains, issues a one-sentence correction $m^{\mathrm{cor}}$ that steers the agent toward the correct interpretation without revealing the answer:
\beigefile[label=prt:condambigqa-user-corr-system]{CondAmbigQA User Simulator Correction System Prompt}{condambigqa_prompts/clarifying_user_system.txt}
\beigefile[label=prt:condambigqa-user-corr-message]{CondAmbigQA User Simulator Correction Message}{condambigqa_prompts/clarifying_user_request.txt}

\subsection{Evaluation Judge and Planning-Time Reward Proxies}
\label{sec:condambigqa_eval}

The evaluation metric (i.e. true goal reward) for CondAmbigQA is $G(s,\theta) = \texttt{AnswerCorrectness}(m^{\mathrm{act}}, y_\theta)$, scored against the reference answer $y_\theta$ for the true condition $\theta$ by a rubric-based G-Eval judge. The rubric combines a \emph{contradiction check}, an \emph{omission penalty}, a \emph{relevance check}, and an \emph{acumen reward} for relevant answers that add correct depth beyond the extractive $y$. The judge is implemented with a DeepEval \texttt{GEval} object, producing a score in $[0, 1]$ based on criteria and evaluation steps registration.

\beigefile[label=prt:condambigqa-llmjudge-geval]{CondAmbigQA LLM-judge G-Eval Registration:}{condambigqa_prompts/answer_correctness_geval.txt}

\paragraph{Planning-time reward proxy $\hat{G}_{\mathrm{LLM}}$.}
At planning time no reference answer $y_\theta$ is available, so REVOIR scores a candidate answer against each hypothesised intent $\theta^n$ with the proxy $\hat{G}_{\mathrm{LLM}}(a^{\mathrm{act}}, \theta^n)$ used to form $\bar{G}(s,\hat{b}_t,a^{\mathrm{act}}) = \sum_n w^n \hat{G}_{\mathrm{LLM}}(a^{\mathrm{act}}, \theta^n)$ (Appendix~\ref{sec:revoir-condambigqa}). Instead of using DeepEval, which requires multiple LLM calls for each scoring, the proxy employs directly prompted verbalized scoring to produce a $0$--$10$ score (to avoid complications with decimal tokens) that gets normalized to $[0,1]$. It mirrors the answer-correctness rubric but drops the contradiction check (which requires $y$). Template~\ref{}
\beigefile[label=prt:condambigqa-proxy-reward-system]{CondAmbigQA LLM-judge Proxy Reward System Prompt}{condambigqa_prompts/answer_for_condition_judge_system.txt}
\beigefile[label=prt:condambigqa-proxy-reward-message]{CondAmbigQA LLM-judge Proxy Reward Message}{condambigqa_prompts/answer_for_condition_judge_user.txt}

\subsection{REVOIR and REIGN Prompt Templates}
\label{sec:condambigqa_revoir_prompts}

REVOIR, REIGN, and Entropy Thresholding share the templates below; they differ only in the reward used in the decision rule --- task reward $\hat{G}_{\mathrm{LLM}}$ for REVOIR, belief concentration (negative entropy) for REIGN, and Entropy Thresholding --- not in their prompts. Each template realizes a component of the algorithm in Appendix~\ref{sec:revoir-condambigqa}.

\paragraph{Hypothesis proposer $Q_{\mathrm{LLM}}(\theta \mid h_t, \hat{b}_{t-1})$.}
Proposes the particle set of interpretation hypotheses, conditioned on the dialogue and (after the first turn) the current belief, so that high-probability hypotheses are refined and unlikely ones pruned (Section~\ref{sec:belief-updating}).
\beigefile[label=prt:condambigqa-hyp-system]{CondAmbigQA Hypothesis Proposer System Prompt}{condambigqa_prompts/hyp_system.txt}
\beigefile[label=prt:condambigqa-hyp-message]{CondAmbigQA Hypothesis Proposer Message}{condambigqa_prompts/hyp_request.txt}

\paragraph{Consistency scorer $S_{\mathrm{LLM}}(\theta^n, h_t)$.}
The semantic log-likelihood that drives the belief update (Section~\ref{sec:belief-updating}) is computed by scoring the user response to a clarifying question for consistency against each hypothesis on a $0$--$10$ scale, yielding the weights $w^n$.
\beigefile[label=prt:condambigqa-consistency-score-system]{CondAmbigQA Intent-Dialogue Consistency Scoring System Prompt}{condambigqa_prompts/score_system.txt}
\beigefile[label=prt:condambigqa-consistency-score-message]{CondAmbigQA Intent-Dialogoue Consistency Scoring Message}{condambigqa_prompts/score_request.txt}

\paragraph{User clarifying response forecast.}
At planning time, in order to score the value of each \textit{question}, the agent role-plays a user holding a hypothesis $\theta^n$ and forecasts a concise response to the question.
\beigefile[label=prt:condambigqa-forecast-system]{CondAmbigQA User Response Forecast System Prompt}{condambigqa_prompts/forecast_system.txt}
\beigefile[label=prt:condambigqa-forecast-message]{CondAmbigQA User Response Forecast Message}{condambigqa_prompts/forecast_request.txt}

\paragraph{User correction forecast (user-termination mode).}
To score the value of each \textit{answer} at planning time, the agent role-plays a user holding a hypothesis $\theta^n$ and forecasts a corresponding post-hoc correction. For this, we directly reuse Template~\ref{prt:condambigqa-user-corr-system} and Template~\ref{prt:condambigqa-user-corr-message}, conditioned the hypothesized intent $\theta^{n}$ with the evaluating answer $m^{\mathrm{act}}$ injected into the history.

\paragraph{Question generator $\pi^{\mathrm{ask}}$ (best-of-$K$).}
Generates the $K$ candidate clarifying questions whose VoI is evaluated by the 1-step lookahead. The model first reasons about what information is still needed, then proposes $K$ distinct questions conditioned on the belief summary.
\beigefile[label=prt:condambigqa-qg-system]{CondAmbigQA Question Generator System Prompt}{condambigqa_prompts/qg_system.txt}
\beigefile[label=prt:condambigqa-qg-message]{CondAmbigQA Question Generator Message}{condambigqa_prompts/qg_request.txt}

\paragraph{Answer generator $\pi^{\mathrm{act}}$.}
When the agent commits to an answer, a single response is generated conditioned on the full belief summary (the best-of-$K$ act policy with $K{=}1$); its expected reward $\bar{G} = \sum_n w^n \hat{G}_{\mathrm{LLM}}(a^{\mathrm{act}},\theta^n)$ supplies the value of acting.
\beigefile[label=prt:condambigqa-answer-system]{CondAmbigQA Answer Generator System Prompt}{condambigqa_prompts/answer_system.txt}
\beigefile[label=prt:condambigqa-answer-message]{CondAmbigQA Answer Generator Message}{condambigqa_prompts/answer_request.txt}

\subsection{ReAct and SC-BoN-ReAct}
\label{sec:condambigqa_react_prompts}

The ReAct baseline interleaves a one-sentence \texttt{Thought} with an \texttt{Action} that is either \texttt{Ask~|~\textless question\textgreater} or \texttt{Answer~|~\textless answer\textgreater}, maintaining no explicit belief. The system prompt differs by termination variant; the user-termination variant adds that the user may correct wrong answers, encouraging earlier commitment.
\beigefile[label=prt:condambigqa-react-at-system]{CondAmbigQA ReAct Agent System Prompt (Agent Termination Mode)}{condambigqa_prompts/react_system_tmpl.txt}
\beigefile[label=prt:condambigqa-react-ut-system]{CondAmbigQA ReAct Agent System Prompt (User Termination Mode)}{condambigqa_prompts/react_system_mode_b_tmpl.txt}

\paragraph{Dialogue scaffolding.}
The initial user turn presents the ambiguous query; user answers are injected as observation turns; an exhausted question budget triggers a forcing message; under user termination, corrections are injected as clarification observations.
\beigefile[label=prt:condambigqa-react-initial-message]{CondAmbigQA ReAct Agent Initial Message}{condambigqa_prompts/react_first_user_tmpl.txt}
\beigefile[label=prt:condambigqa-react-message]{CondAmbigQA ReAct Agent Message}{condambigqa_prompts/react_observation_tmpl.txt}
\beigefile[label=prt:condambigqa-react-budget-message]{CondAmbigQA ReAct Agent Exhausted Budget Trigger Message}{condambigqa_prompts/react_budget_exhausted.txt}
\beigefile[label=prt:condambigqa-react-corr-message]{CondAmbigQA ReAct Agent User Correction Message}{condambigqa_prompts/react_clarification_observation_tmpl.txt}

\paragraph{SC-BoN-ReAct selection judges.}
SC-BoN-ReAct requests $K$ samples from ReAct completions at generation-time, decides the macro-action (ask vs.\ answer) by self-consistency vote, and selects the best candidate among the $N \leq K$ winning-type actions with an LLM judge. The two selection judges receive all same-type candidates with the question and dialogue history and return the best index.
\beigefile[label=]{SC-BoN-ReAct Question Selection System Prompt}{condambigqa_prompts/bon_ask_system.txt}
\beigefile[label=]{SC-BoN-ReAct Question Selection Message}{condambigqa_prompts/bon_ask_user.txt}
\beigefile[label=]{SC-BoN-ReAct Answer Selection System Prompt}{condambigqa_prompts/bon_answer_system.txt}
\beigefile[label=]{SC-BoN-ReAct Answer Selection Message}{condambigqa_prompts/bon_answer_user.txt}

\subsection{ReflectionDPO-ReAct}
\label{sec:condambigqa_dpo_prompts}

ReflectionDPO-ReAct uses the ReAct templates above with a finetuned model; the data-generation pipeline is described in Appendix~\ref{sec:reflection-dpo}. The privileged reflection template below --- which appends the ground-truth answer and asks for the \texttt{Answer} action that should have been produced --- is used both to generate the \textsc{Answer}-type training signal and, at inference time, by the Oracle-ReAct teacher.
\beigefile[label=prt:condambigqa-oracle-reflection-message]{CondAmbigQA Privileged Reflection Message}{condambigqa_prompts/answer_reflection_user_tmpl.txt}

\subsection{Non-Interactive Baselines}
\label{sec:condambigqa_non_interactive_prompts}

\textbf{Direct Answer} answers the ambiguous query in a single turn under full ambiguity with a minimal system prompt:
\beigefile[label=prt:condambigqa-plain-system]{CondAmbigQA Direct Answer System Prompt}{condambigqa_prompts/plain_system.txt}
\textbf{Oracle-ReAct} (the ReflectionDPO teacher and non-interactive topline) is given the ground-truth answer $y$ at every call via the reflection template above and rephrases it without asking. 

\textbf{Oracle} reference submits $y$ verbatim with no LLM call, serving as a metric ceiling.

\section{CondAmbigQA Parameters}
\label{sec:condambigqa_hyperparameters}

We detail the hyperparameters of the CondAmbigQA experiments and the assignment of LLMs to the modular roles of REVOIR.

\subsection{Environment Roles}
\label{sec:condambigqa_roles}

As noted in the main text, REVOIR's reasoning components --- the hypothesis proposer ($Q_{\textrm{LLM}}$), the consistency scorer $S_{\mathrm{LLM}}$, the question generator ($\pi^{\mathrm{ask}}$), the answer generator ($\pi^{\mathrm{act}}$), and the judge/scorer ($\hat{G}_{\mathrm{LLM}}$, $S_{\mathrm{LLM}}$) --- can each be instantiated with a different model, yet we opt for a unifed backbone across experiments to isolate the effect of the method. Other LLM-based modules required in the experimental environment are the user simulator $\pi^{\mathrm{user}}$ and the post-episode evaluation judge $G$. Together, our implementation groups these into three configurable roles:
\begin{itemize}[leftmargin=*]
    \item \textbf{Agent model:} drives the reasoning process of the assistive agent.
    \item \textbf{User model:} drives the user simulator $\pi^{\mathrm{user}}$ (clarification answers, satisfaction checks, post-hoc corrections).
    \item \textbf{Judge model:} drives the post-episode $\mathrm{AnswerCorrectness}$ evaluation.
\end{itemize}
In our reported experiments:
\begin{itemize}[leftmargin=*]
    \item \textbf{Agent model:} Inference-time methods (REVOIR, REIGN, Entropy Thresholding, ReAct, SC-BoN-ReAct) are evaluated with three models: \texttt{gpt-5.4-mini-2026-03-17}, \texttt{meta-llama/Llama-3.1-70B-Instruct}, and \texttt{meta-llama/Llama-3.1-8B-Instruct}. \\ReflectionDPO-ReAct is finetuned from \texttt{meta-llama/Llama-3.1-8B-Instruct}.
    \item \textbf{Reasoning effort (\texttt{gpt-5.4-mini} only):} The VoI agents (REVOIR, REIGN, Entropy Thresholding) and SC-BoN-ReAct use \texttt{reasoning\_effort:\ none}; for ReAct, Oracle-ReAct, and Direct Answer sweep effort over \{none, low, medium, high, xhigh\}.
    \item \textbf{User and judge models:} \texttt{gpt-4o-mini-2024-07-18} throughout, for both the user simulator and the evaluation judge.
\end{itemize}

\subsection{Global Episode Parameters}
\label{sec:condambigqa_global_params}

Shared across all interactive methods:
\begin{itemize}[leftmargin=*]
    \item \textbf{Question budget:} $10$ clarifying questions per episode.%
    \item \textbf{Clarification budget:} under user termination, the user issues at most $5$ corrections after unsatisfactory answers.%
    \item \textbf{Word budgets $(B_{\mathrm{agent}}, B_{\mathrm{user}})$:} the piecewise-linearly  word-budgeted cost function $C$ defined in Section~\ref{sec:condambigqa-exp} penalizes agent and user messages beyond per-message word budgets. The main REVOIR and REIGN runs use $(B_{\mathrm{agent}}, B_{\mathrm{user}}) = (100, 50)$; for Llama-3.1-8B we report a budget sweep with $B_{\mathrm{agent}} \in \{100, 150, \ldots, 400\}$ and $B_{\mathrm{user}} = B_{\mathrm{agent}}/2$. Budgets are disabled ($0$) for Entropy Thresholding and do not constrain generation directly.
    \item \textbf{Agent generation:} all expert-model generation --- hypothesis and question proposal, mentalized forecasting, consistency scoring, and answer generation --- uses \texttt{temperature: 0.0} (greedy), with \texttt{agent\_max\_tokens: 2048}; candidate hypotheses and questions are produced as a single list per call rather than by repeated sampling.
    \item \textbf{Macro-action selection:} $\arg\max$ over $V^{\mathrm{ask}}$ vs.\ $V^{\mathrm{act}}$ (greedy) %
    \item \textbf{User generation:} greedy decoding with \texttt{user\_temperature: 0.0}, \texttt{user\_top\_p: 1.0}, and \texttt{user\_max\_tokens: 128}.
    \item \textbf{Judge evaluation:} the post-episode AnswerCorrectness metric uses DeepEval's \texttt{GEval}. The underlying LLM is calle at \texttt{temperature: 0} and \texttt{top\_logprobs: 20}. The metric forms a probability-weighted expectation of the integer scores over the (default) $0$--$10$ range and normalizes it to $[0,1]$.
\end{itemize}

\subsection{VoI Planner Parameters (REVOIR, REIGN, Entropy Thresholding)}
\label{sec:condambigqa_pomdp_params}

\begin{itemize}[leftmargin=*]
    \item \textbf{Belief particles:} $N$=5 interpretation hypotheses in the belief $\hat{b}_t$ (\texttt{n\_hypotheses}).
    \item \textbf{Question generation:} $K=5$ candidate questions generated and evaluated by the best-of-$K$ ask policy per turn.%
    \item \textbf{Act policy:} a single belief-conditioned answer, i.e.\ best-of-$K$ with $K{=}1$, whose expected reward $\bar{G} = \sum_n w^n \hat{G}_{\mathrm{LLM}}$ supplies $V^{\mathrm{act}}$ (\texttt{optimal\_answer\_approx:\ direct\_belief\_condition}).
\end{itemize}
\subsection{Entropy Thresholding}
\begin{table}[htbp] %
    \centering
    \begin{minipage}[c]{0.50\linewidth}
        \captionof{table}{The normalized concentration threshold $\tau$ of $1 - H(\hat{b}_t)/H_{\max}$, above which Entropy Thresholding answers immediately, is tuned on the Dev split, per agent model and termination variant.}
        \label{tab:entropy_threshold}
    \end{minipage}
    \hfill %
    \begin{minipage}[c]{0.46\linewidth}
        \small
        \begin{tabular*}{\linewidth}{@{\extracolsep{\fill}}lcc}
        \toprule
        \textbf{Configuration} & \shortstack{\textbf{Agent}\\\textbf{term.}} & \shortstack{\textbf{User}\\\textbf{term.}} \\
        \midrule
        \texttt{gpt-5.4-mini} & $0.85$ & $0.65$ \\
        \texttt{Llama3-70B}            & $0.25$ & $0.85$ \\
        \texttt{Llama3-8B}             & $0.65$ & $0.45$ \\
        \bottomrule
        \end{tabular*}
    \end{minipage}

\end{table}
\subsection{SC-BoN-ReAct Parameters}
\label{sec:condambigqa_scbon_params}

\begin{table}[htbp] %
\centering

    \begin{minipage}[c]{0.50\linewidth}
    \begin{itemize}[leftmargin=*]
        \item \textbf{Self-consistency ensemble size:} $K=5$ in the main comparisons; for Llama-3.1-8B we additionally sweep $K \in \{5, 10, 15, 20\}$.
        \item \textbf{Sampling temperature:} $0.7$ for the pool, required to be $>0$ for diversity (\texttt{sample\_temperature}); the best-of-$N$ selection judge runs greedily.
        \item \textbf{Self-consistency threshold $\tau$:} the fraction of \texttt{Answer} votes (over parseable samples) required to commit to answering rather than asking, tuned on the Dev split per agent model and termination variant. In the Llama-3.1-8B sweep, $\tau$ is additionally tuned per $K$ (Table~\ref{tab:sc_threshold}).
    \end{itemize}
    \end{minipage}
    \hfill %
    \begin{minipage}[c]{0.46\linewidth}
        \centering
        \small
        
        \captionof{table}{Self-consistency threshold $\tau$ for SC-BoN-ReAct, tuned on the Dev split.}
        \label{tab:sc_threshold}
        
        \vspace{0.5em}
        \begin{tabular*}{\linewidth}{@{\extracolsep{\fill}}lcc}
        \toprule
        \textbf{Configuration} & \shortstack{\textbf{Agent}\\\textbf{term.}} & \shortstack{\textbf{User}\\\textbf{term.}} \\
        \midrule
        \texttt{gpt-5.4-m}, $K{=}5$    & $0.9$ & $0.6$ \\
        \texttt{Llama3-70B}, $K{=}5$   & $0.6$ & $0.7$ \\
        \texttt{Llama3-8B}, $K{=}5$    & $0.6$ & $0.6$ \\
        \texttt{Llama3-8B}, $K{=}10$   & $0.5$ & $0.6$ \\
        \texttt{Llama3-8B}, $K{=}15$   & $0.6$ & $0.5$ \\
        \texttt{Llama3-8B}, $K{=}20$   & $0.7$ & $0.7$ \\
        \bottomrule
        \end{tabular*}
    \end{minipage}

\end{table}

\section{Adapting ReflectionDPO to Ambiguous Question Answering}
\label{sec:reflection-dpo}

We adapt the ReflectionDPO algorithm of \citep{patel_adapt_2025} to CondAmbigQA. The training data is curated by dynamically comparing a standard ReAct policy against a privileged oracle. First, the student ReAct model rolls out its interaction with the user simulator. Unlike ADAPT, where the reference actions at each step are entirely generated by a teacher conditioned on the partial history, CondAmbigQA provides extractive ground-truth answers --- the ``true plan'' degenerates into a known answer $y$, which we use directly as the oracle signal.

At each conversational turn, the student is presented with the partial history together with a ReAct-formatted oracle action: the \texttt{Thought} block contains the ground-truth clarification question and its triggering condition, while the \texttt{Answer} block contains the ground-truth extractive answer. The student is then prompted to \emph{reflect} on what question it could have asked to elicit the missing information required to arrive at this oracle answer. This process yields three components per turn: the student's original rollout action, the oracle's answer, and the student's candidate reflection question. The conditional log-probability of generating the ground-truth answer is also leveraged as the utility score for preference-pair filtering.

These components are filtered to construct preference pairs. The student's original rollout is assigned as the rejected response, and either the reflection question (\textsc{Ask}) or the oracle answer (\textsc{Answer}) is selected as the chosen response. However, because the oracle answer $a_{\mathrm{oracle}}$ is typically verbose, its generative log-probability under the student is naturally low relative to the rollout; directly performing preference alignment on rollout--oracle pairs risks degrading the model's general fluency. To mitigate this, we prompt the Oracle-ReAct model --- privileged to the oracle extractive answer --- to \emph{repackage} $a_{\mathrm{oracle}}$ in its own words, and use that rephrased version $a_{\mathrm{teacher}}$ as the preferred response whenever \textsc{Answer} is the chosen action.

\begin{algorithm}
\caption{Reflection Mechanism}
\begin{algorithmic}[1]
\Function{Reflection}{$P_{\pi_{\mathrm{student}}}(\cdot),\, a_{\mathrm{student}},\, a_{\mathrm{oracle}},\, h_t$}
    \State $a_{\mathrm{teacher}} \gets P_{\pi_{\mathrm{teacher}}}(\cdot \mid h_t, a_{\mathrm{oracle}})$ \Comment{Oracle-ReAct rephrases extractive answer}
    \State $a_q \gets \textsc{GetQuestion}(P_{\pi_{\mathrm{student}}}, a_{\mathrm{student}}, a_{\mathrm{oracle}})$
    \State $r \gets \textsc{AskUser}(h_t, a_q)$
    \State $\Delta_q \gets \log P_{\pi_{\mathrm{student}}}(a_{\mathrm{oracle}} \mid h_t \oplus a_q \oplus r) - \log P_{\pi_{\mathrm{student}}}(a_{\mathrm{oracle}} \mid h_t)$
    \State $\Delta_t \gets \log P_{\pi_{\mathrm{student}}}(a_{\mathrm{student}}) - \log P_{\pi_{\mathrm{student}}}(a_{\mathrm{teacher}})$
    \If{$\Delta_t < 0$}
        \State $a_{\mathrm{chosen}} \gets a_{\mathrm{teacher}}$ \Comment{teacher already more likely than rollout}
    \ElsIf{$\Delta_q > \varepsilon_1$}
        \State $a_{\mathrm{chosen}} \gets a_q$ \Comment{reflection meaningfully helps}
    \ElsIf{$\Delta_t < \varepsilon_2$}
        \State $a_{\mathrm{chosen}} \gets a_{\mathrm{teacher}}$ \Comment{rollout close enough to teacher}
    \Else
        \State $a_{\mathrm{chosen}} \gets \text{None}$ \Comment{skip this data point}
    \EndIf
    \State \Return $a_{\mathrm{chosen}}$
\EndFunction
\end{algorithmic}
\end{algorithm}

Two scalar thresholds govern which (episode, step) records become training pairs. $\varepsilon_1$ gates \textsc{Ask} inclusion: a step produces an \textsc{Ask} pair only when the oracle answer log-probability improves by more than $\varepsilon_1$ nats per token after the model receives the reflection question, i.e.\ $\Delta_q > \varepsilon_1$. A lower $\varepsilon_1$ admits more \textsc{Ask} pairs but at the cost of including weak reflection signals. $\varepsilon_2$ gates \textsc{Answer} inclusion: a step produces an \textsc{Answer} pair when the rollout is worse than the teacher answer by less than $\varepsilon_2$ nats per token, i.e.\ $\Delta_t < \varepsilon_2$. A higher $\varepsilon_2$ admits more \textsc{Answer} pairs but risks including teacher answers that are noticeably less fluent than the rollout.

We set a quality ceiling of $\varepsilon_2 \leq 0.30$ based on the empirical distribution of $\Delta_t$: at $\varepsilon_2 = 0.30$ the mean gap is $\approx 0.19$ nats/token, indicating the teacher answer can still reasonably be preferred; beyond 0.50 the mean exceeds 0.45 nats/token, a regime where teacher answers are markedly less natural than rollouts. We impose $\varepsilon_1 \geq 0.05$ to exclude near-zero reflection gains that provide no meaningful learning signal, and cap the \textsc{Ask}:\textsc{Answer} ratio at 4:1 to prevent the policy from becoming reflexively question-happy.

We sweep $\varepsilon_1 \in [0.01, 0.80]$ and $\varepsilon_2 \in [0.02, 0.30]$ on the full reflection corpus (17,090 records) and select the point that maximises total kept pairs subject to the constraints above.

\paragraph{Operating points.}

\begin{center}
\small
\begin{tabular}{lccccccc}
\toprule
\textbf{Setting} & $\varepsilon_1$ & $\varepsilon_2$ & \textbf{Filtered} & \makecell[l]{\textbf{Ask:Ans}\\\textbf{(filtered)}} & \makecell[l]{\textbf{Cap}\\\textbf{$C$}} & \makecell[l]{\textbf{Strat.}\\\textbf{train}} & \makecell[l]{\textbf{Ask:Ans}\\\textbf{(strat.)}} \\
\midrule
Ask-heavy & 0.09 & 0.30 & 5649 & $3.6\,{:}\,1$ & 376 & 2669 & $3.8\,{:}\,1$ \\
Ask-moderate       & 0.13 & 0.30 & 3366 & $1.5\,{:}\,1$ & 218 & 1547 & $1.7\,{:}\,1$ \\
\bottomrule
\end{tabular}
\end{center}

\noindent The \emph{ask-heavy} setting maximises dataset size within the ratio constraint; the \emph{ask-moderate} setting tightens $\varepsilon_1$ to equalise the \textsc{Ask} and \textsc{Answer} signal, trading coverage for a more symmetric preference distribution. Both settings use $\varepsilon_2 = 0.30$ (the quality ceiling), which is also the global optimum for \textsc{Answer} inclusion under our constraints.

\paragraph{Step-stratified sampling.}
Reflection pairs are not uniformly distributed across conversation steps. Early steps ($t = 0$--$2$) are over-represented because few episodes reach late steps under the default ReAct policy. Training on the raw distribution therefore biases the policy toward early-step behaviour and under-trains it on mid-to-late recovery moves.

To correct for this, we apply step-stratified sampling. Let $\mathcal{D}_t$ denote the set of retained pairs at step $t$ and let $C = \underset{t}{\mathrm{median}}\,|\mathcal{D}_t|$. The stratified training set is
\begin{equation}
  \mathcal{D}^* \;=\; \bigcup_{t=0}^{T}\, \mathrm{Sample}\!\bigl(\mathcal{D}_t,\;\min(|\mathcal{D}_t|,\, C)\bigr),
\end{equation}
where each step is independently sampled without replacement up to $C$ pairs. Steps with fewer than $C$ pairs are kept in full; over-represented steps are downsampled to exactly $C$. Within each step, the \textsc{Ask}:\textsc{Answer} ratio is preserved proportionally so that the relative preference signal at each turn is not distorted.

Using the median rather than the minimum prevents the long tail of rare late steps from collapsing the dataset to a trivially small size, while still substantially flattening the step distribution, which shifts the global ratio only slightly (ask-heavy $3.6\,{:}\,1 \to 3.8\,{:}\,1$; ask-moderate $1.5\,{:}\,1 \to 1.7\,{:}\,1$).

\paragraph{Training details.}
We finetune using LoRA~\citep{hu_lora_2022} with rank $r = 4$ and scaling factor $\alpha = 16$, matching the adapter configuration of the original ADAPT codebase~\citep{patel_adapt_2025}.

We deviate from ADAPT in the choice of preference optimization loss. ADAPT uses DPO~\citep{rafailov_direct_2023}, which is appropriate when chosen and rejected responses are short, atomic actions: the log-probability differences are well-calibrated and the relative contrastive objective is stable. In our setting, both chosen responses (rephrased answers or reflection questions) and rejected responses (student rollout actions) are elaborate, multi-sentence texts. DPO applied to long-form outputs is susceptible to reward hacking via likelihood displacement: the optimizer can satisfy the contrastive objective by reducing the log-likelihood of \emph{both} responses while keeping their relative ordering intact, which degrades generation quality without meaningfully improving preference alignment.

We therefore use ORPO~\citep{hong_orpo_2024}, which folds a negative log-likelihood term on the chosen response together with an odds-ratio penalty on the rejected response into a single objective, without requiring a reference model or a separate SFT warm-up phase:
\begin{equation}
    \mathcal{L}_{\mathrm{ORPO}} = -\log P_\theta(y_w \mid x) \;-\; \lambda \cdot \log \sigma\!\left(\log \frac{P_\theta(y_w \mid x)}{1 - P_\theta(y_w \mid x)} - \log \frac{P_\theta(y_l \mid x)}{1 - P_\theta(y_l \mid x)}\right),
\end{equation}
where $y_w$ and $y_l$ are the chosen and rejected responses respectively and $\lambda$ is a weighting coefficient. The NLL term anchors the absolute likelihood of chosen responses upward, closing off the reward-hacking mode available to DPO, while the odds-ratio term suppresses rejected responses relative to this moving anchor rather than relative to a frozen reference policy. A natural alternative would be to anchor on the gold extractive answer $y^*$ directly, grounding the model in factual content independent of how well the teacher rephrased it. However, the extractive answers in CondAmbigQA are terse, unnaturally phrased fragments that make poor generation targets; anchoring on $y^*$ would penalize fluent paraphrases and conflict with the fluency goal that motivated using $a_{\mathrm{teacher}}$ in the first place. We therefore anchor on $y_w = a_{\mathrm{teacher}}$ throughout.

\paragraph{Why it underperforms.}
We speculate that the gap stems primarily from a mismatch between the training signal and test-time requirements: reflection questions are selected based on how much they improve the log-probability of the extractive oracle answer $y$, which may not correlate with whether a question improves open-ended generative answer quality. The small post-curation dataset likely compounds this, under-determining a reliable clarification policy rather than overcoming the base model's prior. More broadly, the result illustrates that finetuning a clarification policy does not transfer cheaply to arbitrary tasks and contexts as it demands task-specific data curation and can still optimize a proxy that diverges from the deployment objective. REVOIR, on the other hand, obtains its clarification behavior at inference time, with no training, by reasoning directly about the value of information for the task at hand.

\section{Extended Details on ADAPT}
\label{sec:adapt_details}
\subsection{Adapting ADAPT to REVOIR}
\label{sec:adapt-2stage}
\textbf{Two-stage Variant.} The original ADAPT runtime freely interleaves questions and physical actions, making one-step VoI reasoning difficult to apply. We therefore decouple each episode into (i) an \textbf{elicitation stage}, in which REVOIR iteratively compares $V^{\mathrm{ask}}(s, b)$ against $V^{\mathrm{act}}(s, b)$ and either poses a clarifying question $m^{\mathrm{ask}} \in \mathcal{M}^{\mathrm{ask}}$ or commits to execution, and (ii) an \textbf{execution stage}, in which $\pi^{\mathrm{act}}$ carries out a full multi-step plan conditioned on the updated belief $b_t(\theta)$, after which the episode terminates. We note that this two-phase decoupling \emph{limits} the information gathering capacity of REVOIR, while preserving the full complexity of grounded planning in ADAPT.

\subsection{REVOIR Variants in ADAPT}~\label{sec:revoir-adapt}
The ADAPT domain introduces two additional design choices in how REVOIR maintains and updates its belief $\hat{b}_t$, yielding four variants reported in our experiments.

\textbf{Belief representation.}
In the \emph{joint belief} variant, each particle $\theta^n$ is a candidate set of atomic preference statements drawn from the space of plausible user preferences for the task. The belief $\hat{b}_t$ is a distribution over these complete preference sets, and the consistency scorer $S_{\mathrm{LLM}}(\theta^n, h_t)$ evaluates how well the full set $\theta^n$ explains the conversation so far ($K = 6$ particles). In the \emph{factored belief} variant, rather than tracking complete sets, the agent maintains an independent Bernoulli probability $p_m \in [0,1]$ for each atomic preference $\gamma_m$, representing the agent's belief that $\gamma_m$ belongs to the user's true preference set $\theta$. This representation supports a larger effective particle count since each particle corresponds to a binary inclusion decision per preference, and enables more targeted clarifying questions directed at individual uncertain dimensions of $\theta$.

\textbf{Weight update scheme.}
In the \emph{batch rescoring} variant, particle weights are recomputed from scratch at each turn by scoring the full conversation history $h_t$ against each particle in a single pass: $w^n_t = S_{\mathrm{LLM}}(\theta^n, h_t)$. In the \emph{incremental update} variant, retained particles cache their accumulated log-weight from the previous turn and augment it with only the new message's contribution, $w^n_t = w^n_{t-1} + S_{\mathrm{LLM}}(\theta^n, h_t \mid h_{t-1})$, avoiding redundant recomputation. Particles that are pruned and replaced by fresh samples cannot benefit from this cache; they instead replay the full sequential accumulation from the start of the conversation to bring their weights up to date. 

\textbf{Goal reward approximation.}
Because plan execution in ADAPT is multi-step, evaluating $\bar{G}(s, b, a^{\mathrm{act}})$ requires simulating the consequences of a full plan rather than a single action. REVOIR approximates this through a pipeline of three specialized modules --- a \emph{planner}, a \emph{simulator}, and a \emph{proxy judge} --- each of which can in principle be instantiated with different models. First, the planner generates a single plan conditioned on the current belief summary and history. Second, the simulator takes the plan and the initial scene graph and produces an estimated final state $s'$ by stepping through the plan's state changes. Third, the proxy judge evaluates $s'$ against the current belief to produce a reward estimate. In our experiments, all three modules are instantiated with the same LLM, but the modular decomposition allows each to be replaced independently --- for instance, with a specialized simulator or a finetuned reward model.

The reward aggregation in the judge differs between belief variants. Under \emph{joint belief}, the judge scores the PSR of $s'$ against each preference set particle $\theta^n$ and the expected reward is the weighted average $\bar{G}(s, b, a^{\mathrm{act}}) = \sum_n w^n \cdot \mathrm{PSR}(s', \theta^n)$. Under \emph{factored belief}, a mean-field weighted reward is used instead:
\begin{equation}
    \bar{G}(s, b, a^{\mathrm{act}}) = \frac{\sum_m p_m \cdot \mathbf{1}[\gamma_m \text{ satisfied in } s']}{\sum_m p_m \cdot \mathbf{1}[\gamma_m \text{ satisfied or violated in } s']},
\end{equation}
where $p_m$ is the Bernoulli inclusion probability of preference $\gamma_m$. This weights each preference's contribution by its posterior probability of belonging in $\theta$, so preferences the agent is confident the user holds influence the reward estimate more than uncertain ones.

Combining the two belief representations with the two update schemes yields the four REVOIR variants --- \textbf{Joint~+~Batch Rescoring}, \textbf{Joint~+~Incremental Update}, \textbf{Factored~+~Batch Rescoring}, and \textbf{Factored~+~Incremental Update}.

\subsection{Full Results on ADAPT}\label{sec:adapt_full_results}

Full results for ADAPT are presented in Table~\ref{tab:adapt}.

\begin{table}[ht]
\footnotesize
\centering
\renewcommand{\arraystretch}{1.3}
\begin{tabular}{clcc}
\toprule
\shortstack{\textbf{Test}\\\textbf{Personas}} & \textbf{Method} & \textbf{PSR (\%)} & \textbf{\#Q} \\
\midrule
\multicolumn{4}{l}{\textit{Privileged / Forced}} \\
\multirow{2}{*}{\shortstack{Seen\\Personas}} 
& Always-Ask LLM & 52.1 $\pm$ 0.9 & 22.2 $\pm$ 0.1 \\
& Teacher LLM & 65.5 $\pm$ 1.1 & 0.0 \\
\cline{2-4}
\multirow{2}{*}{\shortstack{Unseen\\Personas}}
& Always-Ask LLM & 50.8 $\pm$ 1.6 & 22.4 $\pm$ 0.2 \\
& Teacher LLM & 65.5 $\pm$ 1.9 & 0.0 \\
\hline
\multicolumn{4}{l}{\textit{Fine-tuned}} \\
\multirow{2}{*}{\shortstack{Seen\\Personas}}
& STaR-GATE & 34.1 $\pm$ 0.8 & 2.0 $\pm$ 0.0 \\
& Reflection-DPO & 44.1 $\pm$ 1.0 & 9.8 $\pm$ 0.1 \\
\cline{2-4}
\multirow{2}{*}{\shortstack{Unseen\\Personas}}
& STaR-GATE & 33.5 $\pm$ 1.4 & 2.0 $\pm$ 0.0 \\
& Reflection-DPO & 42.9 $\pm$ 1.6 & 9.5 $\pm$ 0.2 \\
\hline
\multicolumn{4}{l}{\textit{Few-shot ICL}} \\
\multirow{4}{*}{\shortstack{Seen\\Personas}}
& \textbf{REVOIR (Joint + Batch Rescoring)} & 49.8 $\pm$ 2.0 & 2.5 $\pm$ 0.3 \\
& \textbf{REVOIR (Joint + Incremental Update)} & 49.5 $\pm$ 1.9 & 2.4 $\pm$ 0.2 \\
& \textbf{REVOIR (Factored + Batch Rescoring)} & \textbf{56.3 $\pm$ 1.3} & \textbf{1.7 $\pm$ 0.2} \\
& \textbf{REVOIR (Factored + Incremental Update)} & \textbf{58.8 $\pm$ 0.4} & \textbf{2.1 $\pm$ 0.2} \\
\cline{2-4}
\multirow{4}{*}{\shortstack{Unseen\\Personas}}
& \textbf{REVOIR (Joint + Batch Rescoring)} & 48.0 $\pm$ 7.4 & 2.4 $\pm$ 0.5 \\
& \textbf{REVOIR (Joint + Incremental Update)} & 50.2 $\pm$ 7.8 & 2.4 $\pm$ 0.3 \\
& \textbf{REVOIR (Factored + Batch Rescoring)} & \textbf{55.8 $\pm$ 8.1} & \textbf{1.9 $\pm$ 0.2} \\ 
& \textbf{REVOIR (Factored + Incremental Update)} & \textbf{56.0 $\pm$ 3.5} & \textbf{2.0 $\pm$ 0.5} \\ 
\hline
\multicolumn{4}{l}{\textit{Zero-shot / No fine-tuning}} \\
\multirow{7}{*}{\shortstack{Seen\\Personas}} 
& Never-Ask LLM & 27.5 $\pm$ 0.7 & 0.0 $\pm$ 0.0 \\
& Baseline LLM & 34.7 $\pm$ 0.8 & 2.3 $\pm$ 0.0 \\ 
& ReAct & 36.1 $\pm$ 0.8 & 2.3 $\pm$ 0.0 \\
& \textbf{REVOIR (Joint + Batch Rescoring)} & 35.7 $\pm$ 3.8 & 2.2 $\pm$ 0.3 \\ 
& \textbf{REVOIR (Joint + Incremental Update)} & 35.0 $\pm$ 2.1 & 2.3 $\pm$ 0.3 \\ 
& \textbf{REVOIR (Factored + Batch Rescoring)} & \textbf{49.2 $\pm$ 3.1} & \textbf{1.9 $\pm$ 0.4} \\ 
& \textbf{REVOIR (Factored + Incremental Update)} & \textbf{51.5 $\pm$ 2.5} & \textbf{2.0 $\pm$ 0.2} \\ 
\cline{2-4}
\multirow{7}{*}{\shortstack{Unseen\\Personas}}
& Never-Ask LLM & 28.6 $\pm$ 1.2 & 0.0 $\pm$ 0.0 \\ 
& Baseline LLM & 34.3 $\pm$ 1.3 & 2.3 $\pm$ 0.1 \\
& ReAct & 36.8 $\pm$ 1.4 & 2.4 $\pm$ 0.1 \\ 
& \textbf{REVOIR (Joint + Batch Rescoring)} & 36.4 $\pm$ 8.5 & 2.4 $\pm$ 0.2 \\ 
& \textbf{REVOIR (Joint + Incremental Update)} & 33.0 $\pm$ 8.0 & 2.0 $\pm$ 0.1 \\ 
& \textbf{REVOIR (Factored + Batch Rescoring)} & \textbf{51.0 $\pm$ 7.4} & \textbf{2.0 $\pm$ 0.4} \\
& \textbf{REVOIR (Factored + Incremental Update)} & \textbf{49.5 $\pm$ 2.6} & \textbf{2.2 $\pm$ 0.3} \\
\bottomrule
\end{tabular}
\vspace{6pt}
\caption{Main results on the ADAPT benchmark (4-fold cross-validation; std across split means).
\citet{patel_adapt_2025} numbers are reproduced from their Table~1. In-context examples from seen personas injected at hypothesis particle regeneration.
}
\label{tab:adapt}
\end{table}

\paragraph{Factored belief improves zero-shot performance significantly.}
In zero-shot, Joint-Belief REVOIR variants achieve $33.0$--$36.4\%$ average PSR, comparable to ReAct and Baseline LLM, while Factored-Belief variants jump to $49.2$--$51.0\%$, nearly matching Always-Ask LLM ($50.9$--$52.1\%$) with $11\times$ fewer questions ($2$ vs.\ $22$). The factored representation enables more targeted elicitation by tracking each preference independently, rather than scoring complete preference sets that may be poorly calibrated. However, Joint-Belief REVOIR still does well in the few-shot case even on unseen personas ($48.0--50.2$), indicating that joint belief updates can still be calibrated if there are good examples of what preference sets look like.

\paragraph{REVOIR with ICL outperforms fully trained methods.}
With few-shot ICL, Factored REVOIR reaches $56.3$--$58.8\%$ average PSR on seen personas and $55.8$--$56.0\%$ on unseen personas, exceeding Always-Ask by $6$ points and outperforming Reflection-DPO ($44\%$) by $13$--$15$ points --- with ${\sim}5\times$ fewer questions. This is significant given that REVOIR requires no task-specific training, showing in-context examples suffice to ground the beliefs and guide the planner. STaR-GATE, by contrast, fails to exceed the non-interactive Baseline LLM ($34\%$), suggesting its training signal is too sparse to learn effective clarification.

\paragraph{Generalization to unseen personas.}
Factored REVOIR variants show minimal PSR degradation from seen to unseen personas (e.g., $58.8 \to 56.0\%$ for Incremental Update with ICL), with notably lower variance than Joint variants, which exhibit standard deviations of $7$--$8\%$ on unseen personas. This suggests that factoring the belief over individual preferences produces representations that transfer more reliably across persona distributions.

\section{ADAPT Prompt Library and Episode Lifecycle}
\label{sec:adapt_prompts}

This section collects the prompt templates used in the preference-aligned household-assistance domain, together with a concise description of the episode lifecycle and user simulator. Listings show the templates with run-time values substituted by \texttt{\{placeholders\}}.

\subsection{Episode Lifecycle}
\label{sec:adapt_lifecycle}

Each episode pairs a breakfast task with a persona whose hidden preference set $\theta = \{\gamma_1, \dots, \gamma_M\}$ the agent never observes. Following the two-stage formulation, REVOIR runs an \textbf{elicitation stage} --- iteratively comparing $V^{\mathrm{ask}}(s,b)$ against $V^{\mathrm{act}}(s,b)$ to either pose a clarifying question $m^{\mathrm{ask}}$ or commit --- followed by an \textbf{execution stage}, in which $\pi^{\mathrm{act}}$ carries out a multi-step plan conditioned on the elicited belief $\hat{b}_t(\theta)$ without further questions. Episodes are scored by the preference satisfaction rate $G(s,\theta) = \mathrm{PSR}$.

\paragraph{Evaluation splits.}
For comparability with the original ADAPT baselines --- which are trained and then tested for generalization across personas --- we reuse the benchmark's four-way split that crosses persona novelty (seen vs.\ held-out personas) with in-context guidance. Our methods, however, are training-free and operate in only two regimes: \emph{few-shot}, with the ground-truth preference profiles of the seen personas injected as reference examples into the proposer and scorer prompts (shown as \texttt{\{example\_profiles\}} below), or \emph{zero-shot}, with no examples provided. Thus, for REVOIR, the seen-versus-held-out persona settings only reflect whether a tested personat appear among the few-shot examples.

\subsection{User Simulator}
\label{sec:adapt_user_simulator}

Following \citet{patel_adapt_2025}, the user simulator $\pi^{\mathrm{user}}(\cdot \mid h_t, \theta)$ is an LLM role-playing the true persona, prompted with the task, the live scene state, the persona's ground-truth preferences, and the dialogue history. It answers preference questions concisely in first person, deflects questions about object location or availability (encouraging the agent to search), and stays consistent with its prior answers. The simulator only answers questions; ADAPT episodes have no satisfaction check or unprompted correction (cf.\ user termination in CondAmbigQA).
\beigefile[label=prt:adapt-user-sim-system]{ADAPT User Simulator System Prompt}{adapt_prompts/user_sim_system.txt}
\beigefile[label=prt:adapt-user-sim-message]{ADAPT User Simulator Message}{adapt_prompts/user_sim_request.txt}

\subsection{Hypothesis Proposers and Consistency Scorers}
\label{sec:adapt_belief_prompts}

The belief proposer $Q_{\mathrm{LLM}}$ and consistency scorer $S_{\mathrm{LLM}}$ (Section~\ref{sec:belief-updating}) are instantiated differently for the two belief representations of Appendix~\ref{sec:revoir-adapt}. In both, the scorer realizes either \emph{batch rescoring} (the full conversation is rescored each turn) or \emph{incremental update} (per-turn scores are accumulated, with retained particles scoring only new turns and fresh particles replaying history).

\paragraph{Joint belief.}
The proposer generates the candidate preference \emph{profiles} (the particle set), regenerated each turn with the current belief as context so high-probability profiles are refined and unlikely ones dropped:
\beigefile[label=prt:adapt-joint-hyp-system]{ADAPT Joint-belief Hypothesis Proposer System Prompt}{adapt_prompts/hyp_gen_system.txt}
\beigefile[label=prt:adapt-joint-hyp-message]{ADAPT Joint-belief Hypothesis Proposer Message}{adapt_prompts/hyp_gen_request.txt}
The scorer rates the holistic consistency of the whole conversation with each candidate profile on a $0$--$10$ scale, forming the particle weights $w^n$:
\beigefile[label=prt:adapt-joint-consistency-score-system]{ADAPT Joint-Belief-Dialogue Consistency Scoring System Prompt}{adapt_prompts/hyp_score_system.txt}
\beigefile[label=prt:adapt-joint-consitency-score-message]{ADAPT Joint-Belief-Dialogue Consistency Scoring Message}{adapt_prompts/hyp_score_request.txt}

\paragraph{Factored belief.}
The proposer generates the atomic preferences $\gamma_m$ over which independent Bernoulli beliefs $p_m$ are maintained, covering the full breakfast experience and regenerated (belief-guided) each turn:
\beigefile[label=prt:adapt-atomic-hyp-system]{ADAPT Factored-belief Atom Proposer System Prompt}{adapt_prompts/atom_gen_system.txt}
\beigefile[label=prt:adapt-atomic-hyp-message]{ADAPT Factored-belief Atom Proposer Message}{adapt_prompts/atom_gen_request.txt}
Under \emph{batch rescoring}, all atoms are scored jointly against the conversation in one call, each $0$--$10$ score mapped to a log-likelihood-ratio update of $p_m$:
\beigefile[label=prt:adapt-atom-consistency-batch-score-system]{ADAPT Belief Atom-Dialogue Consistency Batch Scoring System Prompt}{adapt_prompts/atom_score_system.txt}
\beigefile[label=prt:adapt-atom-consistency-batch-score-message]{ADAPT Belief Atom-Dialogue Consistency Batch Scoring Message}{adapt_prompts/atom_score_request.txt}
Under \emph{incremental update}, each atom is scored independently (one call per atom) so the assessment of one preference is not influenced by the others:
\beigefile[label=prt:adapt-atom-consistency-ind-score-system]{ADAPT Belief Atom-Dialogue Consistency Individual Scoring System Prompt}{adapt_prompts/single_atom_score_system.txt}
\beigefile[label=prt:adapt-atom-consistency-ind-score-messsage]{ADAPT Belief Atom-Dialogue Consistency Individual Scoring Message}{adapt_prompts/single_atom_score_request.txt}

\subsection{Question Generator \texorpdfstring{$\pi^{\mathrm{ask}}$}{πask}}
\label{sec:adapt_qgen}

Candidate clarifying questions are sampled from a prompt containing the task, scene layout, dialogue history, and the current belief summary --- the candidate profiles (joint) or the atoms with their inclusion probabilities (factored). The system prompt is shared; the two belief modes differ only in how the belief summary is rendered in the user turn.
\beigefile[label=prt:adapt-qgen-system]{ADAPT Question Proposal System Prompt}{adapt_prompts/qgen_joint_system.txt}
\beigefile[label=prt:adapt-joint-qgen-message]{ADAPT Joint-belief-based Question Proposal Message}{adapt_prompts/qgen_joint_request.txt}
\beigefile[label=prt:adapt-factored-qgen-message]{ADAPT Factored-belief-based Question Proposal Message}{adapt_prompts/qgen_factored_request.txt}

\subsection{VoI Lookahead: Response Forecasting}
\label{sec:adapt_forecast}

To estimate $V^{\mathrm{ask}}$, the agent samples hypotheses from the belief (all profiles in the joint variant; sampled atom compositions in the factored variant), forecasts each one's likely answer to a candidate question by role-playing a user holding those preferences, then rescores and re-evaluates the goal reward on the updated belief:
\beigefile[label=prt:adapt-user-forecast-system]{ADAPT Planning-time User Response Forecast System Prompt}{adapt_prompts/forecast_system.txt}
\beigefile[label=prt:adapt-user-forecast-message]{ADAPT Planning-time User Response Forecast Message}{adapt_prompts/forecast_request.txt}

\subsection{Goal-Reward Pipeline (Planner--Simulator--Proxy Judge)}
\label{sec:adapt_reward_pipeline}

The expected goal reward $\bar{G}(s, b, a^{\mathrm{act}})$ --- which serves as the elicitation stop value $V^{\mathrm{act}}$ and the quantity re-evaluated inside the VoI lookahead --- is computed by the modular planner--simulator--proxy judge pipeline of Appendix~\ref{sec:revoir-adapt}, conditioned on the full belief rather than a single committed profile. In the reported variants:

\paragraph{Planner.}
Generates a concrete numbered action plan conditioned on the belief summary:
\beigefile[label=prt:adapt-plantime-planner-system]{ADAPT Stage-1 Planner System Prompt}{adapt_prompts/plan_gen_system.txt}
\beigefile[label=prt:adapt-plantime-planner-message]{ADAPT Stage-1 Planner Message}{adapt_prompts/plan_gen_request.txt}

\paragraph{Simulator.}
Deduces the plan's final scene state as structured facts (entities created, items mixed and cooked, serving order, final locations) without stepping the true environment:
\beigefile[label=prt:adapt-plantime-simulator-system]{ADAPT Planning-time Environment Simulator System Prompt}{adapt_prompts/state_deduction_system.txt}
\beigefile[label=prt:adapt-plantime-simulator-message]{ADAPT Planning-time Environment Simulator Message}{adapt_prompts/state_deduction_request.txt}

\paragraph{Proxy Judge.}
Evaluates each preference as satisfied, violated, or inapplicable given the deduced state. The per-preference outcomes are aggregated into $\bar{G}$; under the factored belief this is the mean-field weighted PSR of Appendix~\ref{sec:revoir-adapt}, with each preference weighted by its inclusion probability $p_m$.
\beigefile[label=prt:adapt-proxy-judge-system]{ADAPT Proxy Judge System Prompt}{adapt_prompts/pref_eval_system.txt}
\beigefile[label=prt:adapt-proxy-judge-message]{ADAPT Proxy Judge Message}{adapt_prompts/pref_eval_request.txt}

\subsection{Execution Policy \texorpdfstring{$\pi^{\mathrm{act}}$}{πact}}
\label{sec:adapt_execution}

In the execution stage the planner emits one action per step, conditioned on the full elicited belief (phrased as information gathered from the user). Generation is constrained by a scene-specific grammar rebuilt from the live environment each step, forcing syntactically valid actions over currently valid entities; the loop runs greedily until \texttt{Declare Done}. The system prompt carries the task, scene, belief, and action vocabulary and ends by asking for the next action with no \texttt{user}-role template. At the first step it is the only message; at later steps it is followed by the running rollout, with each prior action appended as an \texttt{Action: <action>} assistant turn and each environment result as an \texttt{Observation: <result>} \texttt{user} turn (omitted when a step yields no observation). The \texttt{user} turns thus carry only observations.
\beigefile[label=prt:adapt-exetime-action-system]{ADAPT Stage-2 Action Generation System Prompt}{adapt_prompts/stage2_planner_system.txt}

\section{ADAPT Parameters}
\label{sec:adapt_hyperparameters}
\subsection{Environment Roles}
\label{sec:adapt_modular_roles}

For comparability with baselines in~\citep{patel_adapt_2025}, both the user simulator and REVOIR's modules --- the hypothesis proposer $Q_{\mathrm{LLM}}$, the consistency scorer $S_{\mathrm{LLM}}$, the question generator $\pi^{\mathrm{ask}}$, the planner--simulator--proxy judge that estimates $\bar{G}$, and the execution policy $\pi^{\mathrm{act}}$ --- are all driven by a single model in our ADAPT experiments: \texttt{meta-llama/Llama-3.1-70B-Instruct}, served locally via a vLLM OpenAI-compatible endpoint. Unlike CondAmbigQA, there is no separate judge model required since the goal reward $G = \mathrm{PSR}$ is computed by the benchmark's programmatic per-persona preference predicates against the final state scene.

\subsection{Benchmark Protocol}
\label{sec:adapt_protocol}

\begin{itemize}[leftmargin=*]
    \item \textbf{Personas and tasks:} 16 personas (12 seen / 4 held-out per split) and 8 breakfast tasks, evaluated under 4-fold cross-validation; we report means and standard deviations across split means.
    \item \textbf{Evaluation splits:} we reuse the benchmark's four-way split for comparability with the original ADAPT baselines.
\end{itemize}

\subsection{Belief and Reward Settings}
\label{sec:adapt_belief_params}

The four variants combine two belief representations with two update schemes (Appendix~\ref{sec:revoir-adapt}); all share the goal-reward pipeline below.
\begin{itemize}[leftmargin=*]
    \item \textbf{Belief representation:} \emph{joint} maintains $K{=}6$ profile particles; \emph{factored} maintains independent Bernoulli beliefs over $K{=}16$ atomic preferences.
    \item \textbf{Goal reward $\bar{G}$:} estimated by the planner--simulator--proxy judge pipeline with an explicit generated plan and independent state-deduction/per-preference scoring.
    \item \textbf{Consistency scoring:} verbalized $0$--$10$ scores, mapped to softmax weights (joint) or per-atom log-likelihood-ratio updates (factored).
\end{itemize}

\subsection{Elicitation-Stage Parameters}
\label{sec:adapt_stage1_params}

\begin{itemize}[leftmargin=*]
    \item \textbf{Candidate questions:} 3 per turn, sampled and deduplicated by the best-of-$K$ proposer.%
    \item \textbf{VoI lookahead samples:} the joint variant enumerates all 6 profiles; the factored variant draws 4 atom compositions per candidate question. %
    \item \textbf{Question cost:} since the original baselines are not cost-aware or question-throttled, we only impose a minimal flat per-question cost $C(m^{\mathrm{ask}}) = 5\times10^{-4}$ subtracted from $V^{\mathrm{ask}}$ to avoid infinite episodes.%
    \item \textbf{Stopping:} no hard question cap; elicitation only stops by the VoI rule $V^{\mathrm{act}} \geq V^{\mathrm{ask}} - C(m^{\mathrm{ask}})$. %
\end{itemize}

\subsection{Execution-Stage Parameters}
\label{sec:adapt_stage2_params}

\begin{itemize}[leftmargin=*]
    \item \textbf{Decoding:} greedy decoding (\texttt{temperature\_stage2: 0.0}) with grammar-constrained generation, the grammar rebuilt from the live scene each step.%
    \item \textbf{Step budget:} unlimited; episodes end at \texttt{Declare Done}.
\end{itemize}

\subsection{Sampling Temperatures}
\label{sec:adapt_temperatures}

\begin{itemize}[leftmargin=*]
    \item \textbf{Question generation:} $0.8$.
    \item \textbf{Mentalized response forecasting (VoI lookahead):} $0.7$.
    \item \textbf{Planner plan generation:} $0.3$.
    \item \textbf{Scoring and judging calls} (consistency scoring, state deduction, per-preference evaluation): $0.0$.
    \item \textbf{Execution-stage action generation:} $0.0$.
\end{itemize}

\end{document}